\pdfoutput=1  
\documentclass{article}

\PassOptionsToPackage{numbers,compress}{natbib}  
\usepackage[preprint]{neurips_2026}

\usepackage[utf8]{inputenc} 
\usepackage[T1]{fontenc}    
\usepackage{hyperref}       
\usepackage{url}            
\usepackage{booktabs}       
\usepackage{amsfonts}       
\usepackage{nicefrac}       
\usepackage{microtype}      
\usepackage{xcolor}         

\usepackage{subcaption}
\usepackage{epstopdf,fancyhdr,amsmath,amsthm,xspace,enumerate,paralist, mathtools,tabularx,hhline,tabu,forest,array,stmaryrd}
\usepackage{amssymb}
\usepackage{graphicx}
\usepackage{wrapfig}
\usepackage{dsfont}
\usepackage{algorithm}
\usepackage{algpseudocode}
\usepackage{multirow}

\theoremstyle{plain}
\newtheorem{theorem}{Theorem}[section]
\newtheorem{proposition}[theorem]{Proposition}
\newtheorem{lemma}[theorem]{Lemma}

\theoremstyle{definition}
\newtheorem{definition}[theorem]{Definition}

\theoremstyle{remark}

\theoremstyle{definition}

\usepackage{pgfplots}
\DeclareUnicodeCharacter{2212}{−}
\usepgfplotslibrary{groupplots,dateplot}
\usetikzlibrary{patterns,shapes.arrows,fit,backgrounds}
\pgfplotsset{compat=newest}

\usetikzlibrary{positioning,arrows,shapes,shapes.arrows,arrows.meta,trees,shapes.misc,shapes.geometric,decorations.pathreplacing,automata}
\tikzset{%
  >={Latex[width=1.5mm,length=2mm]},
  vertex/.style={draw,circle,inner sep=0mm,semithick,minimum width=4mm},
  point/.style = {circle, draw, inner sep=0.04cm,fill,node contents={}},
  bidir/.style={<->,dashed, line width=0.25mm},
uvertex/.style={outer sep=0},
  dir/.style={->, line width=0.25mm},
  regime/.style={shape=rectangle,fill=black,inner sep=0pt,minimum size=3pt,draw},
  node distance=1cm,
  font=\scriptsize\sffamily
}

\definecolor{betterred}{RGB}{228,26,28}
\definecolor{betterblue}{RGB}{55,126,184}

\pgfdeclarelayer{back}
\pgfsetlayers{back,main}

\makeatletter
\pgfkeys{%
  /tikz/on layer/.code={
    \def\tikz@path@do@at@end{\endpgfonlayer\endgroup\tikz@path@do@at@end}%
    \pgfonlayer{#1}\begingroup%
  }%
}
\makeatother

\makeatletter
\newcommand{\xdashleftrightarrow}[2][]{\ext@arrow 3359\leftrightarrowfill@@{#1}{#2}}
\def\rightarrowfill@@{\arrowfill@@\relax\relbar\rightarrow}
\def\leftarrowfill@@{\arrowfill@@\leftarrow\relbar\relax}
\def\leftrightarrowfill@@{\arrowfill@@\leftarrow\relbar\rightarrow}
\def\arrowfill@@#1#2#3#4{%
  $\m@th\thickmuskip0mu\medmuskip\thickmuskip\thinmuskip\thickmuskip
   \relax#4#1
   \xleaders\hbox{$#4#2$}\hfill
   #3$%
}
\makeatother

\newcolumntype{C}[1]{>{\centering\arraybackslash}m{#1}}

\makeatletter
\def\rightarrowfill@@{\arrowfill@@\relax\relbar\rightarrow}
\def\leftarrowfill@@{\arrowfill@@\leftarrow\relbar\relax}
\def\leftrightarrowfill@@{\arrowfill@@\leftarrow\relbar\rightarrow}
\def\arrowfill@@#1#2#3#4{%
  $\m@th\thickmuskip0mu\medmuskip\thickmuskip\thinmuskip\thickmuskip
   \relax#4#1
   \xleaders\hbox{$#4#2$}\hfill
   #3$%
}
\makeatother

\newcommand{\inv}[2]{P_{#2}\left ( #1\right)}

\newcommand{\D}{\Omega}

\newcommand{\cauvade}{\textsc{CauVaDE}\xspace}

\newcommand{\dox}{\mathrm{do}(X{=}x)}        
\newcommand{\doX}[1]{\mathrm{do}(X{=}#1)}    

\def\*#1{\boldsymbol{#1}}
\def\1#1{\mathcal{#1}}
\def\2#1{\mathscr{#1}}
\def\3#1{\mathbb{#1}}
\def\4#1{\mathds{#1}}
\def\5#1{\bar{\*#1}}

\title{Causal Variational Deep Embedding: A Family of Interventional Generators for Confounded Images}

\author{%
  Jingyuan Chen\\
  Department of EECS\\
  Syracuse University\\
  \texttt{jchen357@syr.edu}\\
  \And
  Kangrui Ruan\\
  AWS AI Lab\\
  \texttt{kr2910@columbia.edu}\\
  \And
  Junzhe Zhang\\
  Department of EECS\\
  Syracuse University\\
  \texttt{jzhan403@syr.edu} \\
}

\begin{document}

\maketitle

\begin{abstract}
Deep generative models reproduce the observational distribution of their training data, inheriting any spurious associations it contains. A common source is an unobserved \emph{confounder} that shapes both an attribute the user wants to control at sampling time and an attribute expected to vary in response. Existing causal generative approaches resolve the resulting ambiguity by imposing structural assumptions strong enough to single out one interventional distribution; in image domains, such assumptions are rarely warranted, and the data is generally consistent with a \emph{set} of distinct causal mechanisms---a \emph{feasible region} of interventional distributions. We propose \cauvade{} (Causal Variational Deep Embedding), built on a \emph{canonical augmented SCM} in which the unobserved confounder collapses, without loss of generality, into a discrete latent cluster of bounded support while continuous variation is absorbed into independent noises. We prove that this canonical class is dense, in both observational and interventional Wasserstein distance, in the class of augmented SCMs compatible with a given causal diagram, and instantiate it as a mixture variational autoencoder whose cluster variable plays the role of the canonical confounder. An entropy regularizer with weight $\gamma$ on the cluster posterior then traces a family of candidate causal effects that fit the observational data to comparable likelihood while spanning the feasible region. Experiments on image data benchmarks show that \cauvade{} produces diverse interventional samples and improves FID against an unconfounded reference.
\end{abstract}

\section{Introduction}\label{sec:intro}
Deep generative models have become the standard tool for synthesizing image data~\citep{kingma2022vae,goodfellow2014gan,ho2020ddpm}. Frameworks such as VAEs, GANs, diffusion models, and structured variants like $\beta$-VAE and Variational Deep Embedding~\citep{higgins2017betavae,jiang2017vade} aim to reproduce the distribution of a training corpus. This is well-matched when the training set faithfully represents the deployment world, but poorly matched when the training distribution carries spurious correlations the user would like the generator \emph{not} to inherit---a setting we call \emph{deconfounded image generation}.

To make this concrete, suppose a corpus pairs a digit attribute $X$ with a background-color attribute $Y$ that are correlated only because some hidden factor $U$---a writer, a recording session, a lighting condition---influenced both during data collection. We call such an unobserved $U$ a \emph{confounder}. From data alone, the model cannot tell what fraction of the digit--color dependence is genuine and what fraction is an artifact of $U$. If the user instead asks what the color distribution would look like if the digit were \emph{set} to $1$, the trained generator simply replays the confounded conditional. We write this counterfactual query as $P_x(Y)$, the distribution of $Y$ when an external operation fixes $X{=}x$, as opposed to the observational $P(Y \mid x)$~\citep{pearl:2k}. Standard generative training implicitly assumes no such confounder exists~\citep{scholkopf2021crl,pan2024counterfactualimageediting}; when this fails, samples approximate $P(Y \mid x)$, not $P_x(Y)$.

\begin{wrapfigure}[20]{r}{0.26\linewidth}
\centering
    \begin{subfigure}{\linewidth}\centering
        \includegraphics[width=\linewidth]{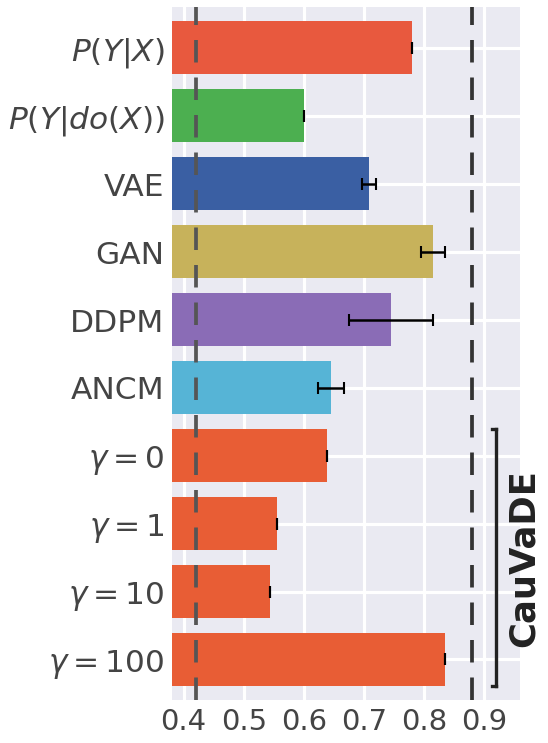}
    \end{subfigure}\hfill
    \vspace{-0.05in}
    \caption{Estimated $P(Y{=}1 \mid \doX{1})$ on confounded Color-MNIST. Dashed lines indicate Manski's bound \citep{manski1990nonparametric} identified from observational data.}
    \label{fig:_cmnist_py}
\end{wrapfigure}
\textbf{Example 1 (Confounded Color-MNIST).} Consider a binary Color-MNIST in which digit $X \in \{0, 1\}$ and background color $Y \in \{\text{green}, \text{blue}\}$ are linked through a binary confounder $U$ in the training set (Fig.~\ref{fig:_cmnist_a}), but are independent in the unconfounded distribution we would like the model to support at sampling time (Fig.~\ref{fig:_cmnist_b}). We train a vanilla VAE, a denoising diffusion model, a GAN, and the Augmented Neural Causal Model (ANCM)~\citep{pan2024counterfactualimageediting}---the last specifically designed for counterfactual image editing---on the confounded data. As shown in Figs.~\ref{fig:_cmnist_c}--\ref{fig:_cmnist_f}, none recovers the unconfounded distribution; each reproduces the training-set association between digit $1$ and green backgrounds. Fig.~\ref{fig:_cmnist_py} quantifies the failure: VAE, GAN, and DDPM estimate $P(Y{=}1 \mid \text{do}(X{=}1))$ near the confounded $P(Y{=}1 \mid X{=}1)$. ANCM stays closer to the ground truth but commits to a single point within the sharp Manski bound \citep{manski1990nonparametric}---one of many plausible causal explanations the data cannot distinguish, and in general, the wrong one. $\hfill \blacksquare$

\begin{figure}[t]
\hfill
        \begin{subfigure}{0.16\linewidth}\centering
		\includegraphics[width=\linewidth]{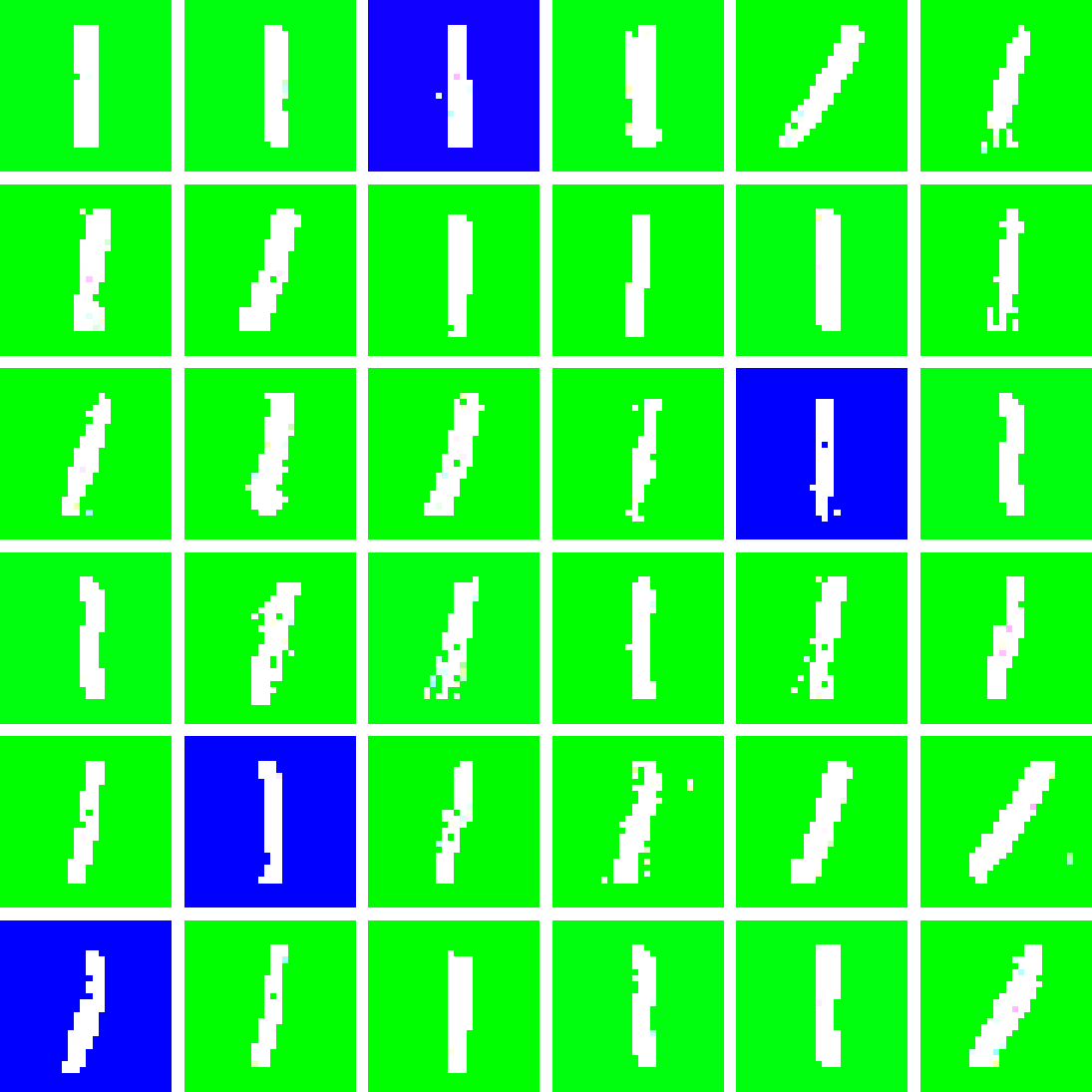}
		\caption{Observational}
		\label{fig:_cmnist_a}
	\end{subfigure}\hfill
        \begin{subfigure}{0.16\linewidth}\centering
		\includegraphics[width=\linewidth]{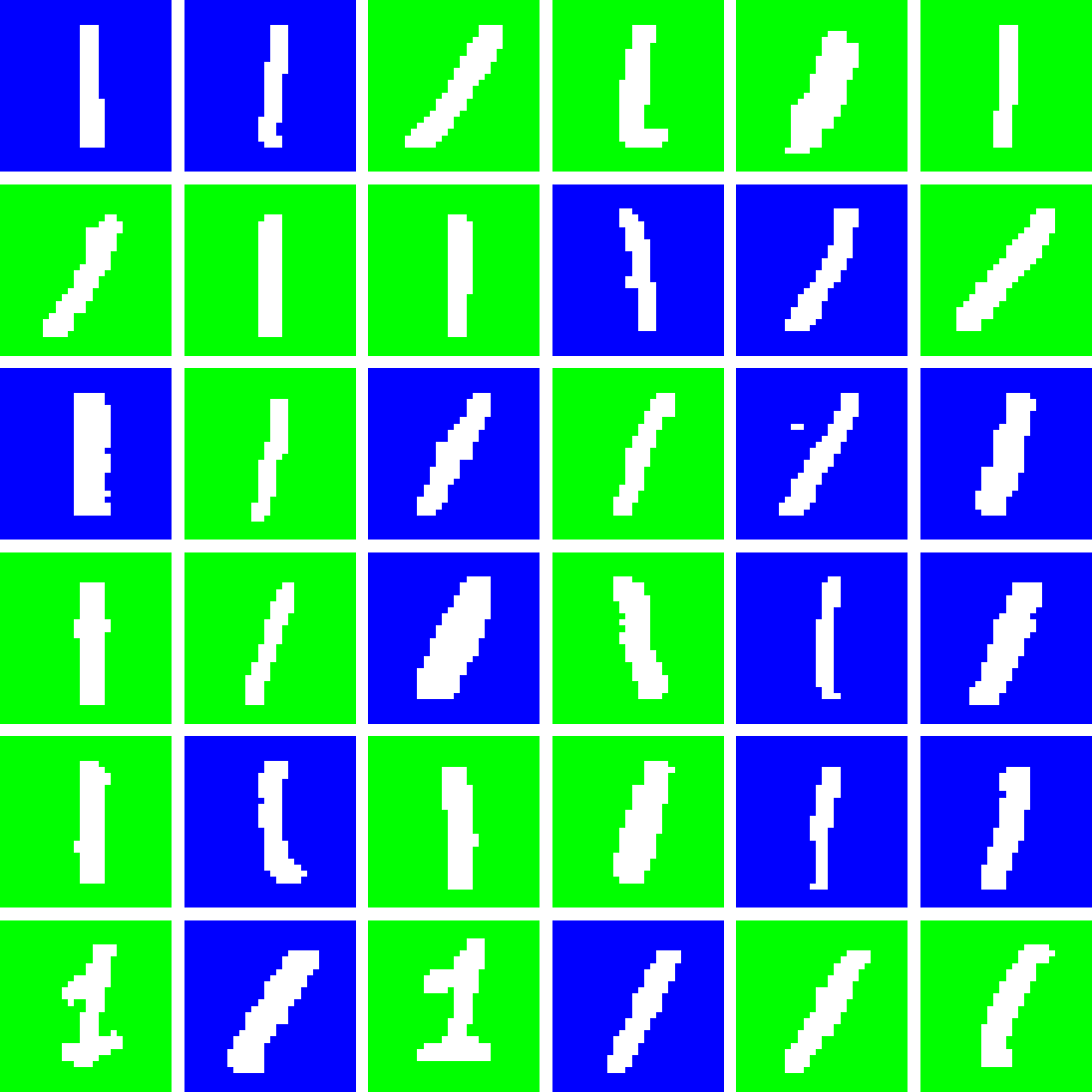}
		\caption{Interventional}
		\label{fig:_cmnist_b}
	\end{subfigure}\hfill
        \begin{subfigure}{0.16\linewidth}\centering
		\includegraphics[width=\linewidth]{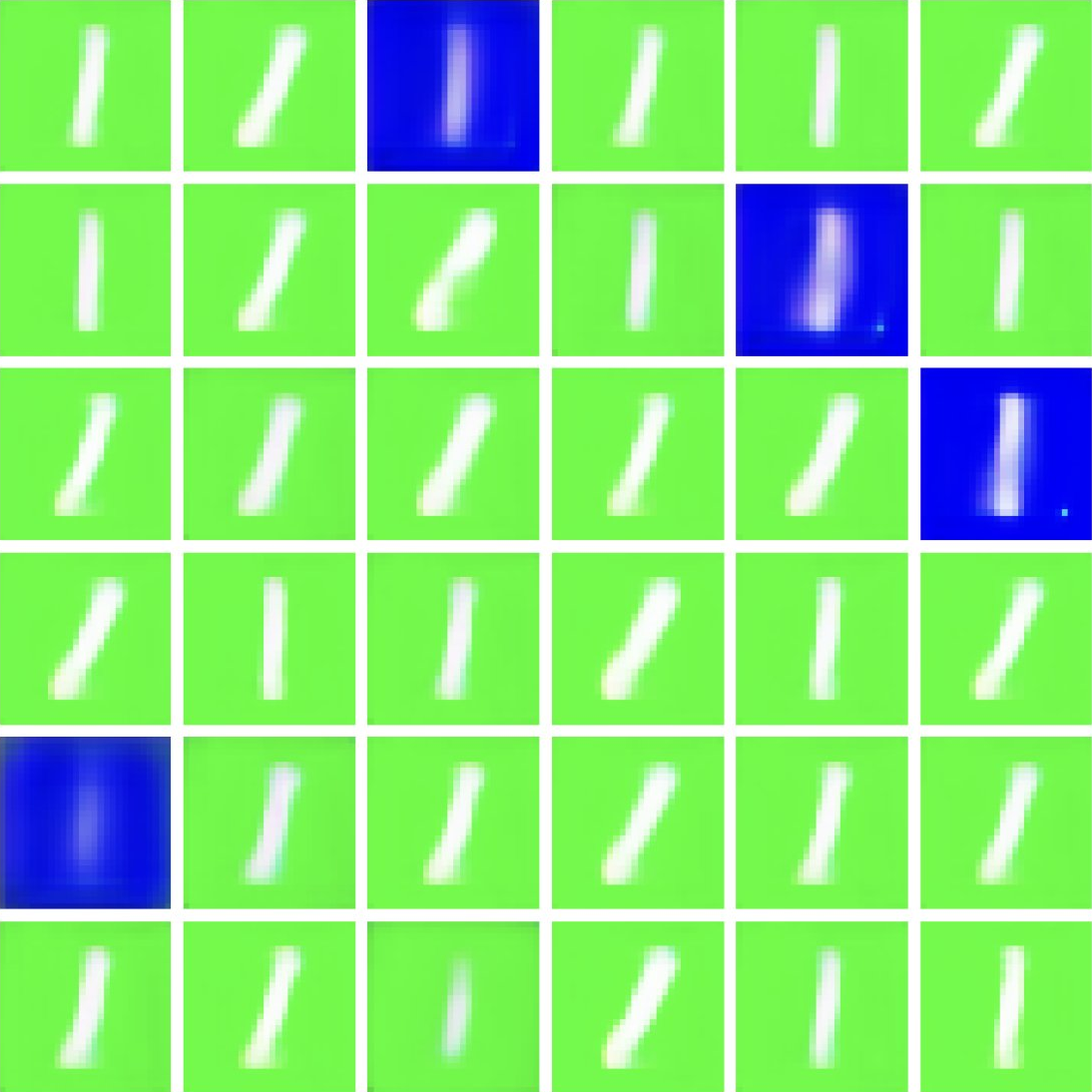}
		\caption{VAE}
		\label{fig:_cmnist_c}
	\end{subfigure}\hfill
    \begin{subfigure}{0.16\linewidth}\centering
        \includegraphics[width=\linewidth]{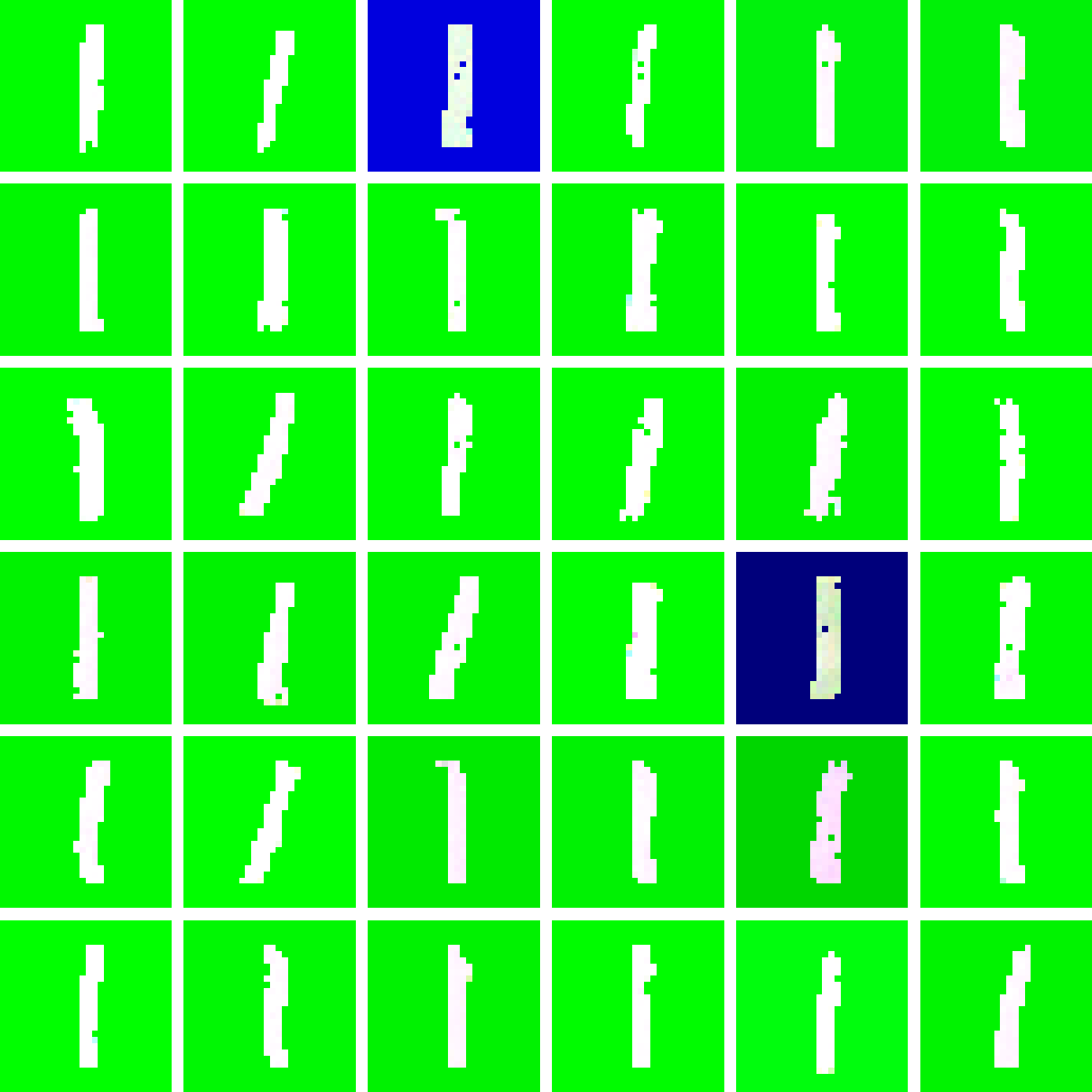}
        \caption{DDPM}\label{fig:_cmnist_d}
    \end{subfigure}\hfill
    \begin{subfigure}{0.16\linewidth}\centering
        \includegraphics[width=\linewidth]{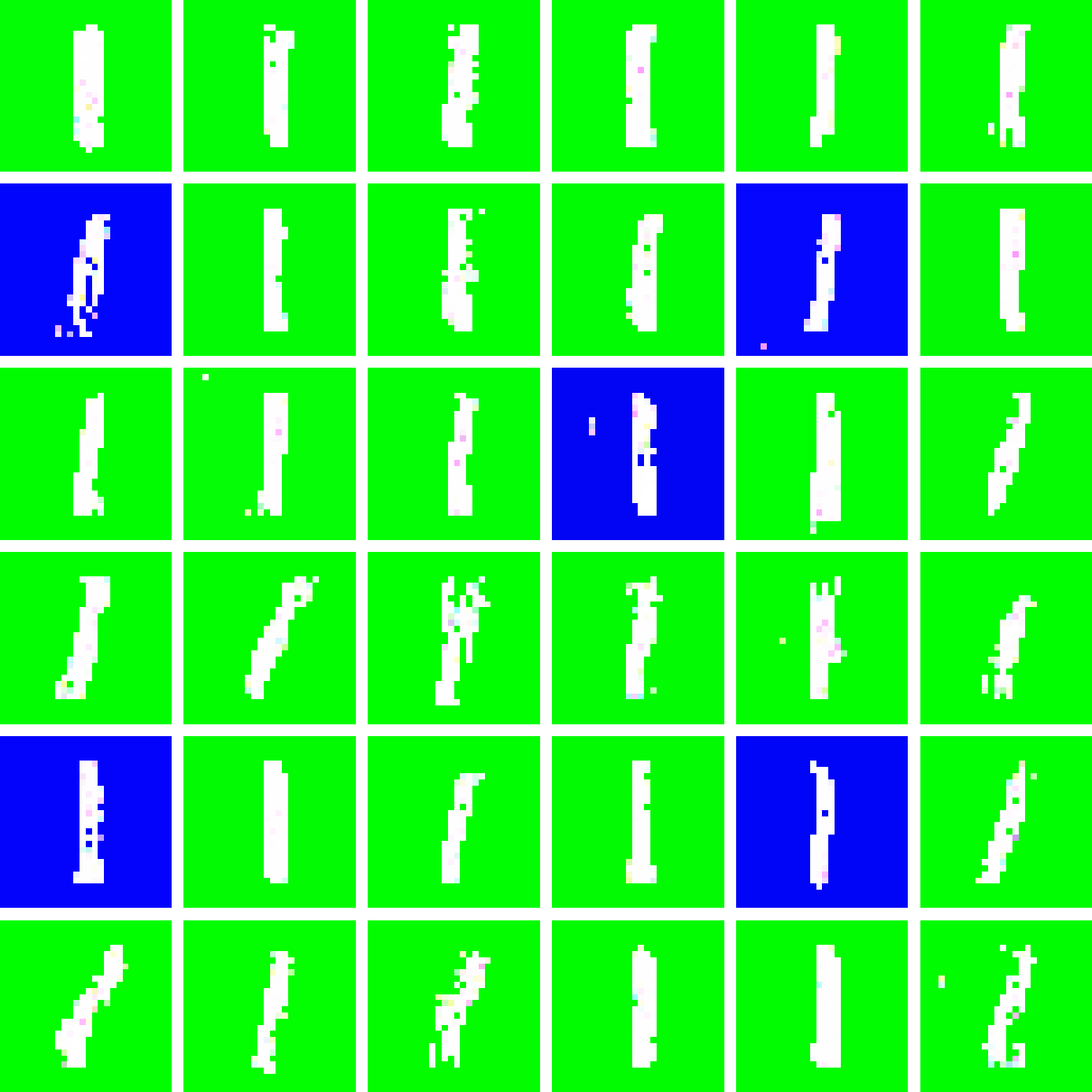}
        \caption{GAN}
        \label{fig:_cmnist_e}
    \end{subfigure}\hfill
    \begin{subfigure}{0.16\linewidth}\centering
    \includegraphics[width=\linewidth]{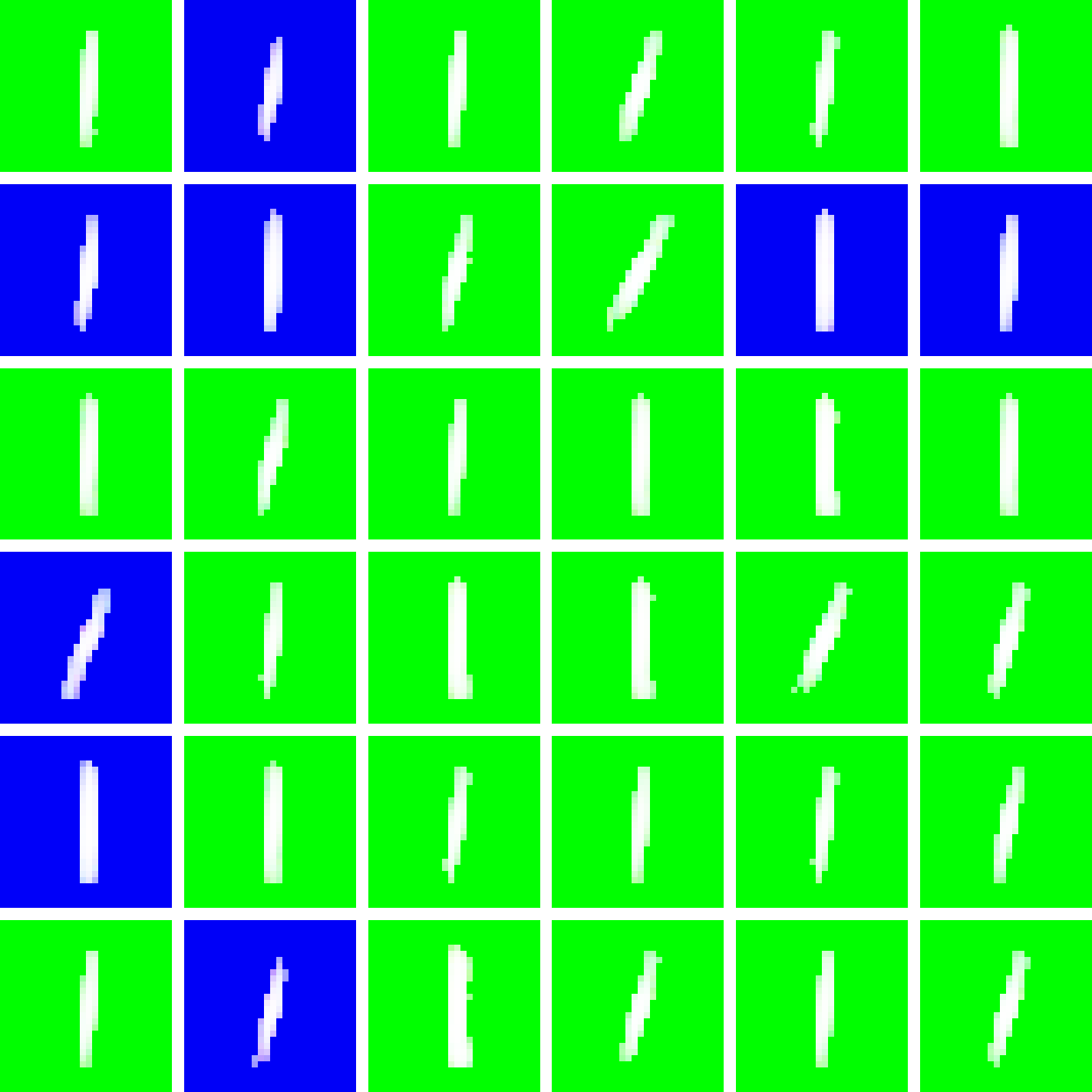}
    \caption{ANCM}\label{fig:_cmnist_f}
    \end{subfigure}
    \hfill\null
	\caption{Samples from (\subref{fig:_cmnist_a}) confounded and (\subref{fig:_cmnist_b}) unconfounded Color-MNIST; and generated by conditional (\subref{fig:_cmnist_c}) VAE, (\subref{fig:_cmnist_d}) DDPM, (\subref{fig:_cmnist_e}) GAN, and (\subref{fig:_cmnist_f}) ANCM.}
	\label{fig:_cmnist}
\end{figure}

The challenge revealed by Example~1 is one of \emph{identifiability}: whether the data contains enough information to pin down the answer. With $U$ hidden, it does not---the training distribution $P(X, Y)$ is consistent with many distinct mechanisms, each predicting a different $P_x(Y)$. Existing causal generative approaches~\citep{scholkopf2021crl,pan2024counterfactualimageediting} resolve the ambiguity by adding structural assumptions---specific functional forms, noise independence, auxiliary supervision---strong enough to single out one mechanism. When such assumptions hold, this is the right move; in image domains with hidden confounders they rarely do, and a model returning a single $P_x(Y)$ is \emph{overconfident}, having silently picked one element of a set the data cannot distinguish.

A more honest target is \emph{partial identification}: rather than committing to a single $P_x(Y)$, characterize the \emph{feasible region}---the set of interventional distributions consistent with the observed data and whatever structural assumptions are defensible~\citep{balke:pea97,manski1990nonparametric,frangakis:rub02}. In discrete tabular domains, this region admits sharp Manski bounds derivable from a tractable polynomial program~\citep{zhang2022partial,balke:pea97}; recent work uses generative models as flexible function approximators inside such programs~\citep{xia2021causal,xia2022neural,nasr2023counterfactual}. None of this transfers directly to images: the observed object is high-dimensional, the hidden confounder interacts with rich visual structure, and the feasible region must be navigated by something other than a polynomial program. The natural goal is then to \emph{traverse} this region rather than bound it analytically---producing a family of generators (e.g., indexed by $\gamma$) whose interventional distributions span the set of causal explanations compatible with observations (Manski bound in Fig.~\ref{fig:_cmnist_py}).

This paper proposes a generative framework for partial identification in deconfounded image generation. Our contributions are threefold. \emph{(1)~Canonical augmented SCM.} We introduce a class of structural models in which the unobserved confounder is a discrete latent of bounded support and continuous variation is carried by independent exogenous noises. We prove that this class is dense, in both observational and interventional Wasserstein distance, in the class of all augmented SCMs compatible with a given causal diagram. \emph{(2)~\cauvade{}.} We instantiate this class as a Gaussian-mixture VAE whose cluster variable plays the role of the canonical confounder. Combined with an entropy regularizer of weight $\gamma$ on the cluster posterior, \cauvade{} traces across a sweep of $\gamma$ a family of generators that fit $P(X, I)$ to comparable likelihood while spanning the feasible region of $P_x(I)$. \emph{(3) Experiments} on Color-MNIST, CelebA, and MIMIC-CXR-JPG~\citep{liu2015celeba,johnson2024mimiccxr} show that \cauvade{} produces diverse interventional distributions covering the feasible region and improves FID against an unconfounded reference. Proofs and implementation details are deferred to Appendices~\ref{app:related}--\ref{app:limitations}.

\subsection{Preliminaries}\label{sec:prelim}

\begin{wrapfigure}[14]{r}{0.45\linewidth}
\vspace{-\baselineskip}
\centering
\hfill
\begin{subfigure}{0.48\linewidth}\centering
  \begin{tikzpicture}
      \def\innerr{2.7}
      \node[vertex] (X) at (0, 0) {X};
      \node[vertex] (Z) at (1, 1) {Z};
      \node[vertex] (Y) at (2, 0) {Y};
      \node[vertex] (I) at (1,-1) {I};
      \draw[dir] (Z) to (Y);
      \draw[dir] (X) -- (Y);
      \draw[dir] (X) to (I);
      \draw[dir] (Z) to (I);
      \draw[dir] (Y) to (I);
      \draw[dir] (Z) to (X);
      \draw[bidir, dashed] (Z) to [bend left = 45] (Y);
      \draw[bidir, dashed] (Z) to [bend right = 45] (X);
      \draw[bidir, dashed] (X) to [bend right = 20] (Y);
  \begin{pgfonlayer}{back}
      \node[circle,fill=betterred!65,draw=none,minimum size=2*\innerr mm] at (X) {};
      \node[circle,fill=betterblue!65,draw=none,minimum size=2*\innerr mm] at (Y) {};
      \node[draw=gray, dashed, rounded corners, fit=(X)(Y)(Z), inner sep=6pt] {};
  \end{pgfonlayer}
  \end{tikzpicture}
  \caption{$\widehat{\1M}$}
  \label{fig:ascm_obs}
  \end{subfigure}\hfill
\begin{subfigure}{0.48\linewidth}\centering
\begin{tikzpicture}
      \def\innerr{2.7}
      \node[vertex] (X) at (0, 0) {X};
      \node[vertex] (Z) at (1, 1) {Z};
      \node[vertex] (Y) at (2, 0) {Y};
      \node[vertex] (I) at (1,-1) {I};
      \draw[dir] (Z) to (Y);
      \draw[dir] (X) -- (Y);
      \draw[dir] (X) to (I);
      \draw[dir] (Z) to (I);
      \draw[dir] (Y) to (I);
      \draw[bidir, dashed] (Z) to [bend left = 45] (Y);
  \begin{pgfonlayer}{back}
      \node[circle,fill=betterred!65,draw=none,minimum size=2*\innerr mm] at (X) {};
      \node[circle,fill=betterblue!65,draw=none,minimum size=2*\innerr mm] at (Y) {};
      \node[draw=gray, dashed, rounded corners, fit=(X)(Y)(Z), inner sep=6pt] {};
  \end{pgfonlayer}
  \end{tikzpicture}
    \caption{$\widehat{\1M}_{x}$}
   \label{fig:ascm_exp}
  \end{subfigure}\hfill \null
  \caption{(a) An ASCM with treatment $X$, pre-treatment $Z$, post-treatment attribute $Y$, and image $I$. (b) Submodel induced by $\dox$, in which the structural equation for $X$ is replaced by the constant $X \gets x$.}
  \label{fig:ascm}
\end{wrapfigure}
We use capital letters to denote variables ($X$), small letters for their values ($x$), and $\D_X$ for their domains. For a set $\*X$, $|\*X|$ denotes its cardinality. The probability distribution over variables $\*X$ is denoted by $P(\*X)$, and we write $P(\*x)$ as shorthand for $P(\*X = \*x)$.

A structural causal model (SCM)~\citep{pearl:2k,bareinboim2020pearl} is a triple $\1M = \langle \*V, \*U, \1F \rangle$ describing how a set of observed variables $\*V = \{X, Y, Z\}$ is generated. Throughout, $X$ denotes a \emph{treatment}, $Y$ an \emph{outcome}, and $Z$ a \emph{covariate}. The set $\*U$ collects unobserved exogenous variables, drawn from an exogenous distribution $P(\*U)$. The structural equations $\1F = \{f_X, f_Y, f_Z\}$ specify how each observed variable is generated: $Z \gets f_Z(\*U)$, $X \gets f_X(Z, \*U)$, and $Y \gets f_Y(X, Z, \*U)$. Sampling $\*U \sim P(\*U)$ and applying these equations induces the \emph{observational distribution} $P(X, Y, Z) \triangleq P(X, Y, Z; \1M)$, which is what the learner observes in the data.

An \emph{intervention} $\dox$ models the operation of externally setting $X$ to a constant $x$, overriding whatever value $f_X$ would have produced. Formally, $\dox$ replaces $f_X$ in $\1M$ with the constant assignment $X \gets x$, yielding a submodel $\1M_x$. Sampling $\*U \sim P(\*U)$ in $\1M_x$ induces the \emph{interventional distribution} $\inv{Y}{x} \triangleq P(Y_x; \1M)$, where $Y_x(\*u) \triangleq Y_{\1M_x}(\*u)$ is the potential outcome of $Y$ when $X$ is set to $x$. The interventional distribution differs from the observational conditional $P(Y, Z \mid X{=}x)$ exactly when some component of $\*U$ confounds $X$ with $Y$ or $Z$. The SCM machinery just introduced operates on abstract semantic variables and says nothing about pixels; Sec.~\ref{sec:model} closes this gap, specializes the variable roles to the image-domain setting, and develops the canonical class of structural models on which \cauvade{} is built.
\section{Canonical Causal Models for Image Generation}\label{sec:model}
This section develops the structural foundation of \cauvade{}. We first extend the SCM of Sec.~\ref{sec:prelim} to a model that generates images, specializing the variable roles and deriving observational and interventional image distributions. We then introduce the \emph{canonical augmented SCM} (Sec.~\ref{sec:cascm}) along with a density theorem justifying our restriction to this class---the scaffold on which our partial-identification mechanism (Sec.~\ref{sec:cauvade}) operates.

\textbf{Augmented SCMs for Images.}
The SCM of Sec.~\ref{sec:prelim} generates semantic variables $(X, Y, Z)$ but says nothing about pixels. The Augmented Structural Causal Model (ASCM) of \cite{pan2024counterfactualimageediting} closes this gap by extending $\1M$ with an image variable $I$, an exogenous noise $\1E_I$ independent of $\*U$, and a generation mechanism $I \gets f_I(X, Y, Z, \1E_I)$, where $\1E_I$ accounts for visual variation---pose, texture, lighting, style---not encoded in the semantic variables. Throughout, we assume:
\setdefaultleftmargin{1.2em}{}{}{}{}{}
\begin{compactitem}
\item \textbf{(A1) Bounded, continuous image support.} $\D_I$ is bounded and $f_I$ is continuous in all arguments.
\item \textbf{(A2) Finite-measure treatment domain.} $\D_X$ has finite Lebesgue measure (treatments may be categorical or lie in a bounded continuous range).
\item \textbf{(A3) Semantic invertibility of $f_I$.} There exists a continuous map $h$ with $(X, Y, Z) = h(I)$.
\end{compactitem}
(A1)--(A2) reflect that images are finite-resolution objects produced by smooth rendering processes and that practical treatments lie in bounded sets; (A3) is the standard nonlinear-ICA assumption~\citep{hyvarinen1999nonlinearica,khemakhem2020vae,locatello2019disentanglement}, i.e., $X$, $Y$, $Z$ are visible properties of $I$.

In the image-domain reading, $(X, Y, Z)$ specializes as follows: $X$ is the semantic attribute the user wishes to manipulate at sampling time (e.g., the digit), $Z$ is a \emph{pre-treatment} attribute that may influence $X$ (e.g., a writer's identity), and $Y$ is a \emph{post-treatment} attribute that responds to $X$ (e.g., the background color). Fig.~\ref{fig:ascm_obs} shows the resulting graph; by convention, exogenous variables $\*U$ are not drawn explicitly, and a dashed bidirected edge $X \leftrightarrow Y$ denotes the presence of unobserved confounders affecting $X$ and $Y$ simultaneously.

The ASCM unifies observational and interventional image distributions. Sampling $\*U \sim P(\*U)$ and $\1E_I \sim P(\1E_I)$ and propagating through the structural equations gives the \emph{observational image distribution} $P(I) \triangleq P(I; \widehat{\1M})$. Applying the intervention $\dox$ on the treatment attribute $X$ replaces the structural equation for $X$ with constant assignment $X \gets x$ and severs the edges incident on $X$. This operation yields the submodel $\widehat{\1M}_x$ in Fig.~\ref{fig:ascm_exp} and the \emph{interventional image distribution}
\begin{equation}
P_x(I) \triangleq P(I_x; \widehat{\1M}) = P\bigl(f_I(x, Y_x, Z_x, \1E_I)\bigr),
\end{equation}
which differs from $P(I \mid X{=}x)$ exactly when $\*U$ confounds $X$ with $Y$ or $Z$. Standard generative training learns $P(I \mid X{=}x)$, while a user reasoning about manipulation of $X$ wants $P_x(I)$. When $P_x(I)$ is not point-identifiable, our object of interest is the \emph{set} (i.e., a feasible region) of ASCMs compatible with $P(X, I)$, each inducing a possibly distinct $P_x(I)$.

\textbf{Labeling assumption.} 
Prior causal-image work~\citep{pan2024counterfactualimageediting,xia2021causal,xia2022neural} assumes the full vector $(X, Y, Z)$ is labeled. We require only that $X$ be fully labeled; $Y$ and $Z$ may be partially labeled or fully unlabeled, with any unlabeled components observed only through $I$ and recovered during learning. This matches realistic image-domain settings: the attribute the user wishes to control is typically the most readily annotated, while affected attributes (color, hair, lung opacity) are often too numerous or costly to label exhaustively, though partial annotations are sometimes available.

\subsection{Canonical ASCMs}\label{sec:cascm}
The set of ASCMs compatible with $P(X, I)$ is generally infinite-dimensional: $\*U$ has unknown dimension, support, and joint distribution, and $f_X, f_Y, f_Z$ may be arbitrary measurable functions. Searching this set directly is intractable; we show that for capturing both observational and interventional image distributions, it suffices to consider a far simpler class.

\begin{definition}[Canonical ASCM]\label{def:cascm}
A \emph{canonical ASCM} (CASCM) is an ASCM in which: (i)~the unobserved confounder is a single discrete variable $U \in \{1, \dots, d\}$ drawn from $P(U)$; (ii)~the semantic variables are generated by continuous structural functions $Z \gets f_Z(U, \1E_Z)$, $X \gets f_X(Z, U, \1E_X)$, $Y \gets f_Y(X, Z, U, \1E_Y)$ with mutually independent noises $\1E_Z, \1E_X, \1E_Y$; (iii)~the image is produced by an invertible mechanism $I \gets f_I(X, Y, Z, \1E_I)$ satisfying (A1)--(A3).
\end{definition}

The discrete $U$ collapses what could be an arbitrarily complex unobserved process into a categorical variable; the independent noises $\1E_Z, \1E_X, \1E_Y, \1E_I$ carry whatever continuous variability remains. The next theorem shows that this canonical restriction is without loss of generality for modeling observational and interventional image distributions.

\begin{theorem}[Augmented Causal Approximation Property]\label{thm:approx}
Fix the causal diagram in Fig.~\ref{fig:ascm_obs} and assume (A1)--(A3). For every $\epsilon > 0$ and every ASCM $\widehat{\1M}$ compatible with this diagram, there exists a CASCM $\widehat{\1N}$ such that:
\begin{align}
    W_1\bigl(P(I, X; \widehat{\1M}),\, P(I, X; \widehat{\1N})\bigr) &< \epsilon, \label{eq:approx-obs}\\
    \int_{\D_X} W_1\bigl(P_x(I; \widehat{\1M}),\, P_x(I; \widehat{\1N})\bigr)\, dx &< \epsilon, \label{eq:approx-int}
\end{align}
where $W_1$ is the $1$-Wasserstein distance. That is, the class of CASCMs is \emph{dense} in the class of ASCMs compatible with Fig.~\ref{fig:ascm_obs} under joint $W_1$ approximation of the observational law (Eq.~\ref{eq:approx-obs}) and integrated $W_1$ approximation of the interventional law (Eq.~\ref{eq:approx-int}).
\end{theorem}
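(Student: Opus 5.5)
We first reduce both approximation targets to statements about the semantic variables. Because the CASCM we build reuses the same image mechanism $f_I$ and the same noise law $P(\1E_I)$ as $\widehat{\1M}$, the observational and interventional image laws are pushforwards of $P(X,Y,Z)$ and $P(Y_x,Z_x)$ through the same continuous map, with $\1E_I$ integrated out. Assumptions (A1)--(A2) make these laws supported on a bounded set. By compactness, $f_I$ is uniformly continuous there, so a standard coupling argument applies: couple the semantic variables optimally and share $\1E_I$. This shows that $W_1$-closeness of the semantic laws transfers to $W_1$-closeness of the image laws, with a modulus depending only on $f_I$. Since $X$ is a coordinate of $h(I)$ and $h$ is continuous (A3), the joint law $P(I,X)$ is also a continuous pushforward of $(X,Y,Z,\1E_I)$. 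It therefore suffices to approximate, in a weak sense, $P(X,Y,Z)$ and $P(Y_x,Z_x)$ for almost every $x$, with the latter uniformly controlled so that we may integrate. If the semantic domains are unbounded, we first truncate to a large compact set at cost $\epsilon/4$.

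The core construction discretizes the exogenous space. Write the original model as $Z=f_Z(\*U)$, $X=f_X(Z,\*U)$, $Y=f_Y(X,Z,\*U)$. The response-function view shows that what matters about $\*U$ is the induced triple of functions $\bigl(z^\ast,\ x\mapsto f_Y(x,z^\ast,\*u),\ f_X(z^\ast,\*u)\bigr)$. The steps are as follows.
\begin{enumerate}
\item Approximate the measurable map $x\mapsto f_Y(x,z,\*u)$ in $L^1(\D_X)$ by continuous functions. This is possible by Lusin's theorem together with density of continuous functions in $L^1$ on the finite-measure set $\D_X$ (A2); here (A2) is exactly what makes the integrated criterion in Eq.~\ref{eq:approx-int} the natural one.
\item Partition the space of these response triples (a separable metric space under the $L^1$/sup metrics) into finitely many cells. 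Tightness lets all but mass $\epsilon$ lie in finitely many small cells.
\item Let $U\in\{1,\dots,d\}$ index the cells, with $P(U)$ equal to the cell masses.
\item Set $f_Z(U,\1E_Z)$, $f_X(Z,U,\1E_X)$ and $f_Y(X,Z,U,\1E_Y)$ equal to a continuous representative of each cell.
\end{enumerate}
The independent noises $\1E_\cdot$ are then used to reproduce the within-cell spread of $Z$ and the conditional law of $X$ given $(Z,U)$, via continuous approximations of conditional quantile transforms. Since the conditional quantile functions themselves need not be continuous, we instead use a continuous smoothing of them, at $W_1$ cost $\epsilon$.

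The approximation then follows. The observational law is matched within $\epsilon$ because in each cell the sampled triple is close to the original. The interventional law $P(Y_x,Z_x)$ depends only on $P(U)$, $f_Z$ and $f_Y(x,\cdot,\cdot)$, not on $f_X$, so its error for each $x$ is bounded by the cell diameter measured pointwise in $x$. Integrating over $\D_X$, the $L^1$ cell diameters and Fubini give the bound in Eq.~\ref{eq:approx-int}. The structure of the diagram is preserved: $U$ enters $Z$, $X$ and $Y$, matching the three bidirected edges in Fig.~\ref{fig:ascm_obs}.

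The main obstacle is making the $Y$ response functions continuous in $x$ while keeping the integrated interventional error small. Pointwise-in-$x$ control is impossible in general, which is why the statement integrates over $x$, and the partition must be taken in an $L^1(\D_X)$ metric on functions of $x$. My first attempt would be to take cells of a finite $\delta$-net in $L^1(\D_X\times\text{compact})$ of mollified response functions, and then bound $\int_{\D_X} W_1\,dx$ by $\mathbb{E}_{\*U}\int_{\D_X} |f_Y-\tilde f_Y|\,dx$ plus the tail mass times the domain diameter.
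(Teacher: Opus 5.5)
Your reduction to the semantic layer (reuse $f_I$ and $\1E_I$, couple the semantic variables, push forward through the uniformly continuous $f_I$) matches the paper's, and your interventional bound via $L^1(\D_X)$-small cells plus Fubini is sound. The gap is in the observational step. The claim that ``in each cell the sampled triple is close to the original'' does not control the observed outcome. That outcome is the diagonal value $y^\ast(\*u)=f_Y\bigl(f_X(z^\ast,\*u),z^\ast,\*u\bigr)$, i.e.\ the $Y$-response evaluated at the realized treatment. Point evaluation is not continuous in the $L^1(\D_X)$ metric, which you chose precisely to admit non-continuous responses.

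Concretely, let $\*U\sim\mathrm{Unif}[0,1]$, $Z$ constant, $X=\*U$, and $Y=\mathbb{I}[X=\*U]$. Every response $x\mapsto\mathbb{I}[x=\*u]$ is the zero element of $L^1(\D_X)$, so your cells differ only in $x^\ast$ and their continuous (Lusin) representatives are $L^1$-close to $0$. Nothing in your construction forces $g_j(x^\ast_j)\approx 1$. The natural choice gives $Y\approx 0$ observationally in $\widehat{\1N}$ versus $Y\equiv 1$ in $\widehat{\1M}$, so \eqref{eq:approx-obs} fails by a constant while \eqref{eq:approx-int} holds. Spreading $X$ within the cell via $\1E_X$ does not help, since $g_j$ is still evaluated at points about which $L^1$-closeness says nothing. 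If $\D_X$ is finite and the integral is a sum, evaluation is $1$-Lipschitz and the issue disappears; the gap concerns continuous treatments.

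The statement itself survives this example: narrow continuous bumps at the cell treatment values approximate it. Your route is therefore repairable. Include $y^\ast$ among the partitioned coordinates, and set $Z=z^\ast_j$ and $X=x^\ast_j$ on cell $j$. Then let $f_Y(x,z,j,e)=g_j(x)$, where $g_j$ is a continuous $L^1$-approximant of the cell's response, overwritten via a continuous cutoff to equal $y^\ast_j$ on a small ball around $x^\ast_j$. The overwrite costs at most $\mathrm{diam}(\D_Y)$ times the ball's volume in $L^1$, which you can send to zero, and the observational coupling error becomes of the order of the cell diameter. The independent-noise and quantile-smoothing step is then unnecessary.

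The paper sidesteps this differently. Its Lemma~\ref{lm:cascm_approx_semantic} assumes the semantic mechanisms $f_Z,f_X,f_Y$ are jointly continuous, which is not part of (A1)--(A3). It partitions $\D_{\*U}$ itself into cells of diameter $\delta$ and obtains observational and interventional bounds \emph{uniform} in $x$ from a joint modulus of continuity, so integrating over $\D_X$ only multiplies by $|\D_X|$. Your response-space partition, once repaired, buys real generality: it handles merely measurable semantic mechanisms, and that is exactly the setting in which the integrated criterion \eqref{eq:approx-int} is actually needed.
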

The proof (App.~\ref{app:proof}) builds on the canonical-confounder argument of \cite{zhang2022partial} for discrete tabular settings; the new content is the lift to image domains, where push-forward stability of $W_1$ under uniformly continuous maps translates $W_1$ approximation over $(X, Y, Z)$ into $W_1$ approximation of $P(I)$ and $P_x(I)$. A practical consequence is that the cardinality $d$ of $U$ becomes the sole nontrivial hyperparameter governing the model class: larger $d$ admits richer confounding patterns, while the structural functions and noise distributions are absorbed into standard neural-network parameterizations. Thm.~\ref{thm:approx} thus licenses our modeling strategy: rather than searching the unbounded space of ASCMs compatible with $P(X, I)$, we restrict attention to the canonical class, knowing any compatible behavior is matched to arbitrary precision by some CASCM. What remains is to fit this class and enumerate the candidate generators it admits.

\section{Causal Variational Deep Embedding}\label{sec:cauvade}
Thm.~\ref{thm:approx} reduces the search over ASCMs compatible with the diagram to a search over CASCMs, in which the unobserved confounder is a discrete variable of bounded support. What remains is to (i)~choose a parametric family that realizes this canonical class, (ii)~fit it from confounded images in which only $X$ is labeled, and (iii)~navigate the feasible region within it. \cauvade{} addresses these needs by combining a Gaussian-mixture variational autoencoder with a single scalar regularizer.

\begin{wrapfigure}[19]{r}{0.45\linewidth}
\vspace{-\baselineskip}
\centering
\hfill
\begin{subfigure}{0.6\linewidth}\centering
\begin{tikzpicture}[node distance=0.85cm]
    \def\innerr{2.7}
    \node[vertex] (C) at (0, 3) {$C$};
    \node[vertex] (Z) at (0, 2) {$\mathbf{Z}$};
    \node[vertex] (X) at (-1.3, 1.5) {X};
    \node[vertex] (Y) at (1.3, 1.5) {Y};
    \node[vertex] (I) at (0, 0) {I};
    \draw[dir] (C) to (Z);
    \draw[dir] (C) to (X);
    \draw[dir] (Z) to (X);
    \draw[dir] (C) to (Y);
    \draw[dir] (Z) to (Y);
    \draw[dir] (X) to (Y);
    \draw[dir] (C) to [bend right=30] (I);
    \draw[dir] (Z) to (I);
    \draw[dir] (X) to (I);
    \draw[dir] (Y) to (I);
\begin{pgfonlayer}{back}
    \node[circle,fill=betterred!65,draw=none,minimum size=2*\innerr mm] at (X) {};
    \node[circle,fill=betterblue!65,draw=none,minimum size=2*\innerr mm] at (Y) {};
    \node[draw=gray, dashed, rounded corners, fit=(C)(Z)(X)(Y), inner sep=4pt] {};
\end{pgfonlayer}
\end{tikzpicture}
\caption{Decoder}
\label{fig:cauvade_gen}
\end{subfigure}\hfill
\begin{subfigure}{0.3\linewidth}\centering
\begin{tikzpicture}[node distance=0.85cm]
    \def\innerr{2.7}
    \node[vertex] (C) at (0, 3) {$C$};
    \node[vertex] (Z) at (0, 1.5) {$\mathbf{Z}$};
    \node[vertex] (I) at (0, 0) {I};
    \draw[dir, dashed] (I) to (Z);
    \draw[dir, dashed] (Z) to (C);
\begin{pgfonlayer}{back}
    \node[draw=gray, dashed, rounded corners, fit=(C)(Z), inner sep=4pt] {};
\end{pgfonlayer}
\end{tikzpicture}
\caption{Encoder}
\label{fig:cauvade_inf}
\end{subfigure}\hfill\null
\caption{\cauvade{}: (a) decoder of \eqref{eq:cauvade-gen}, where the discrete cluster $C$ instantiates the canonical confounder of Def.~\ref{def:cascm} and the continuous $\mathbf{Z}$ bundles the pre-treatment attribute with independent noises; (b) encoder network $q_\phi(\mathbf{Z}, C \mid I) = q_\phi(\mathbf{Z} \mid I)\,q_\phi(C \mid \mathbf{Z})$.}
\label{fig:cauvade_pgm}
\end{wrapfigure}
\cauvade{} is itself an instance of CASCM. The starting point is Variational Deep Embedding (VaDE)~\citep{jiang2017vade}, which models the latent space of a VAE as a Gaussian mixture indexed by a discrete cluster $C \in \{1, \dots, K\}$. The cluster variable in VaDE plays the role of the canonical confounder $U$ in Def.~\ref{def:cascm}: both are unsupervised, discrete, of bounded support, and index a finite mixture of generative processes. We therefore identify $C$ with $U$ throughout, treating the VaDE cluster as the parametric instantiation of the canonical confounder. The continuous latent collects the remaining exogenous structure of Def.~\ref{def:cascm} into a single bundle,
$\mathbf{Z} \;=\; (Z,\, \1E_Z,\, \1E_X,\, \1E_Y,\, \1E_I)$,
where $Z$ is the pre-treatment attribute and $\1E_Z, \1E_X, \1E_Y, \1E_I$ are the independent noises driving the four CASCM mechanisms; structural functions $f_Z, f_X, f_Y, f_I$ are then realized by neural decoders. \cauvade{} specifies the generative process
\begin{equation}\label{eq:cauvade-gen}
\begin{aligned}
    C &\sim P(C), \qquad
    \mathbf{Z} \mid C \sim \1N(\mu_c, \sigma_c^2 \mathbf{I}), \qquad
    X \mid \mathbf{Z}, C \sim p_\theta(X \mid \mathbf{Z}, C),\\
    Y \mid X, \mathbf{Z}, C &\sim p_\theta(Y \mid X, \mathbf{Z}, C), \qquad
    I \mid X, Y, \mathbf{Z}, C \sim p_\theta(I \mid X, Y, \mathbf{Z}, C),
\end{aligned}
\end{equation}
where $P(C)$ is a learned categorical prior, $(\mu_c, \sigma_c^2)$ are cluster-specific Gaussian parameters, and the conditionals over $X$, $Y$, and $I$ are amortized by neural decoders $\theta$. Fig.~\ref{fig:cauvade_pgm}(a) shows the corresponding graph: $C$ and $\mathbf{Z}$ jointly play the role of the unobserved exogenous structure of the CASCM, $X$ and $Y$ are the user-controlled and post-treatment semantic attributes, and $I$ is the rendered image. Note that $Z$ and the noises do not receive separate latent variables in \cauvade{}: they are bundled into $\mathbf{Z}$ and recovered by the decoder along with $Y$ as unlabeled semantic factors of $I$.

\textbf{Intervention.} The CASCM submodel induced by $\dox$ replaces the structural equation for $X$ by the constant assignment $X \gets x$ and severs the edges incoming to $X$, while leaving the joint distribution of the unobserved exogenous structure---here $(C, \mathbf{Z})$---unchanged. Translated to \cauvade{}, sampling under intervention proceeds by drawing $(C, \mathbf{Z})$ from their joint prior $P(C)\, p_\theta(\mathbf{Z} \mid C)$, fixing $X = x$, and propagating through the decoders for $Y$ and $I$. This yields the parametric interventional image distribution
\begin{equation}\label{eq:Px_param}
P_x^\theta(I) \;=\; \sum_{c=1}^{K} P(c) \int p_\theta(\mathbf{z} \mid c) \int p_\theta(y \mid x, \mathbf{z}, c)\, p_\theta(I \mid x, y, \mathbf{z}, c)\, dy\, d\mathbf{z}.
\end{equation}
Crucially, $(C, \mathbf{Z})$ are drawn from their \emph{joint} prior, not their posterior given $x$: this is what distinguishes $P_x^\theta(I)$ from the observational conditional $P^\theta(I \mid X{=}x)$ and what makes \cauvade{} a model of intervention rather than of selection.

\subsection{Augmented Variational Lower Bound}\label{sec:elbo-aug}
We fit \cauvade{} from a confounded dataset $\1D = \{(x_i, I_i)\}_{i=1}^N$ in which only $X$ is labeled. Following VaDE, we approximate the posterior $p(\mathbf{Z}, C \mid I, X)$ by a factorized variational family $q_\phi(\mathbf{z}, c \mid I) = q_\phi(\mathbf{z} \mid I)\, q_\phi(c \mid \mathbf{z})$ (Fig.~\ref{fig:cauvade_pgm}(b)); note that $X$ is not used by the encoder, since at sampling time the user supplies $X$ via intervention rather than conditioning. The standard ELBO derivation (App.~\ref{app:elbo}) yields
\begin{equation}\label{eq:elbo}
\1L_{\mathrm{ELBO}}(\theta, \phi) \;=\; \underbrace{\3E_{q_\phi}\bigl[\log p_\theta(I \mid X, \mathbf{Z}, C)\bigr]}_{\1L_{\mathrm{rec}}} \;-\; \underbrace{D_{\mathrm{KL}}\bigl(q_\phi(\mathbf{Z}, C \mid I)\,\|\,p_\theta(\mathbf{Z} \mid C)\,P(C)\bigr)}_{\1L_{\mathrm{KL}}}.
\end{equation}
Two issues remain. First, since $X$ is the only labeled semantic variable, nothing in $\1L_{\mathrm{ELBO}}$ ties the latent representation to the observed treatment: the decoder is free to ignore $X$ and reconstruct $I$ from $(\mathbf{Z}, C)$ alone, in which case the interventional distribution \eqref{eq:Px_param} is degenerate in $x$. Second, even with treatment-aware decoding, maximizing $\1L_{\mathrm{ELBO}}$ selects a single $\theta$ per local optimum and gives no mechanism for tracing the feasible region. We address both issues with two additional terms.

\textbf{Treatment-consistency term.} We anchor $X$ in the latent representation with a discriminative head:
\begin{equation}\label{eq:Lx}
\1L_{X}(\phi, \psi) \;=\; \3E_{q_\phi}\bigl[-\log p_\psi(X \mid \mathbf{Z}, C)\bigr],
\end{equation}
where $p_\psi$ is an auxiliary classifier on top of the encoder. At the structural level, $\1L_X$ enforces that the canonical-confounder--exogenous-noise pair $(C, \mathbf{Z})$ contains the information needed to predict $X$, which is exactly what the structural equation for $X$ in the CASCM requires. It introduces no assumption beyond the labeling already in $\1D$; $Y$ and the components of $\mathbf{Z}$ remain unlabeled.\footnote{Labeled subsets of $Y$ or components of $\mathbf{Z}$, when available, are readily incorporated by adding analogous discriminative heads with no change to the rest of the objective.}

\textbf{Cluster-entropy regularizer.} The interventional law \eqref{eq:Px_param} depends on the dispersion of $P(C)$ and, through training, on the dispersion of $q_\phi(C \mid I)$. Two limiting regimes illustrate the dependence. If $q_\phi(C \mid I)$ collapses to a one-hot assignment for every image, the cluster carries no shared structure across images: the canonical confounder degenerates and \cauvade{} reduces to a deterministic VAE whose interventional law $P_x^\theta(I)$ tracks the observational conditional $P^\theta(I \mid X{=}x)$. If, conversely, $q_\phi(C \mid I)$ is uniform for every image, the cluster carries no image-specific information; sampling under intervention then averages the decoder over the full prior $P(C)$, decoupling $I$ from $X$ except through the explicit decoder dependence and pulling $P_x^\theta(I)$ toward a confounder-marginalized law. Between these extremes lies a continuum, each setting compatible with the observational distribution to comparable likelihood, each implying a distinct interventional law. We control this continuum directly with an entropy regularizer:
\begin{equation}\label{eq:Lent}
\1L_{\mathrm{ent}}(\phi) \;=\; \3E_{P(I)} \3H\!\bigl[q_\phi(C \mid I)\bigr] \;=\; -\,\3E_{P(I)} \sum_{c=1}^{K} q_\phi(c \mid I) \log q_\phi(c \mid I) \;\in\; [0,\, \log K].
\end{equation}
Combining all terms gives the \cauvade{} objective
\begin{equation}\label{eq:objective}
\1L(\theta, \phi, \psi; \gamma) \;=\; \alpha\, \1L_{\mathrm{rec}} \;+\; \beta\, \1L_{\mathrm{KL}} \;+\; \lambda\, \1L_{X} \;+\; \gamma\, \1L_{\mathrm{ent}},
\end{equation}
with $\alpha, \beta, \lambda > 0$ fixed across runs and $\gamma \ge 0$ swept over a range. The reconstruction term ties the model to the observational image distribution, the KL term regularizes the latent space, $\1L_X$ keeps the treatment label informative for the latents, and $\1L_{\mathrm{ent}}$ controls the dispersion of the cluster posterior, and through it the interventional law in \eqref{eq:Px_param}.

\textbf{Training and sampling.} For each value of $\gamma$ in a fixed grid $\Gamma = \{\gamma_1, \dots, \gamma_M\}$, we train a separate \cauvade{} model by minimizing \eqref{eq:objective} with stochastic gradient descent on minibatches from $\1D$. Following VaDE~\citep{jiang2017vade}, we reparametrize $\mathbf{Z}$ and \emph{marginalize the discrete cluster $C$ analytically}: each loss term is computed in closed form as $\sum_{c=1}^{K} q_\phi(c \mid \mathbf{z})\, [\,\cdot\,]$, so gradients flow through $q_\phi(c \mid \mathbf{z})$ without sampling $C$. The encoder $q_\phi$, decoders $p_\theta$, and classifier head $p_\psi$ are jointly optimized; the GMM parameters $(P(C), \mu_c, \sigma_c^2)$ are initialized by a short pretraining phase on the unregularized ELBO and then fine-tuned end-to-end. The number of clusters $K$ is selected once per dataset and held fixed across the sweep; we use $K$ moderately larger than the cardinality of any plausible discrete confounder, since extra clusters are absorbed by the prior $P(C)$. At sampling time, given a target $x \in \D_X$, we draw $c \sim P(C)$, $\mathbf{z} \sim p_\theta(\mathbf{Z} \mid c)$, $y \sim p_\theta(Y \mid x, \mathbf{z}, c)$, and $I \sim p_\theta(I \mid x, y, \mathbf{z}, c)$, yielding samples from $P_x^\theta(I)$ in \eqref{eq:Px_param}. Pseudocode is in Appendix~\ref{app:algorithm}.

\subsection{From the $\gamma$-Sweep to the Feasible Region}\label{sec:cauvade-pid}

The combination of Thm.~\ref{thm:approx} and the $\gamma$-sweep is what makes \cauvade{} a partial-identification method \citep{manski1990nonparametric} tracing the feasible region of candidate CASCMs compatible with the confounded observations, rather than a single biased estimator. The density theorem guarantees that any ASCM compatible with the diagram is approximated, in observational and interventional Wasserstein distance, by some CASCM. \cauvade{} parametrizes the canonical confounder by the cluster $C$ and the residual exogenous structure by $\mathbf{Z}$. Distinct values of $\gamma$ select distinct optima of \eqref{eq:objective}, each corresponding to a CASCM with a different cluster posterior over confounder probabilities and causal mechanisms, and---via \eqref{eq:Px_param}---a different interventional law $P_x^\theta(I)$. Because the canonical confounder is finite-dimensional and \eqref{eq:Lent} is bounded above by $\log K$, the sweep $\gamma \in [0, \infty)$ explores a bounded family of solutions rather than wandering an unbounded model space.

The next two results make this connection precise. The first shows that varying $\gamma$ continuously parametrizes a curve of optima distinguished by their average cluster entropy; the second shows that, in the limit of expressive decoders and large $K$, the family of curves traced out by the sweep covers the feasible region of parametrization for interventional distribution.
\begin{lemma}[Entropy--interventional sensitivity]\label{lem:sensitivity}
Fix the diagram in Fig.~\ref{fig:ascm_obs} and assume (A1)--(A3). Let $\theta^\star$ denote a global optimum of the unregularized objective $\1L(\,\cdot\,; \gamma{=}0)$ with observational law $P^\star(X, I)$. For every $\eta \in (0, \log K]$ there exists $\gamma(\eta) \ge 0$ and an associated optimum $(\theta(\eta), \phi(\eta))$ of \eqref{eq:objective} at $\gamma = \gamma(\eta)$ such that
\[
\3E_{P(I)}\, \3H\!\bigl[q_{\phi(\eta)}(C \mid I)\bigr] = \eta,
\qquad
W_1\!\bigl(P^{\theta(\eta)}(X, I),\; P^\star(X, I)\bigr) \;\le\; \delta_{\1F_\theta},
\]
where $\delta_{\1F_\theta} \to 0$ as the decoder family $\1F_\theta$ becomes universal in the space of continuous mechanisms compatible with (A1)--(A3). Moreover, the map $\eta \mapsto P_x^{\theta(\eta)}(I)$ is continuous in the $W_1$ topology, and its image is non-singleton whenever the set of ASCMs compatible with the diagram and with $P^\star(X, I)$ induces more than one interventional law $P_x(I)$.
\end{lemma}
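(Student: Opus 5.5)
The plan is to treat the $\gamma$-regularized problem as the Lagrangian of an entropy-constrained fitting problem, and to obtain all three claims from a parametric-continuity argument on the set of near-optimal CASCMs. Define, for each $\eta\in(0,\log K]$, the constrained problem
\[
\min_{\theta,\phi,\psi}\ \alpha\1L_{\mathrm{rec}}+\beta\1L_{\mathrm{KL}}+\lambda\1L_X \quad\text{s.t.}\quad \3E_{P(I)}\3H\bigl[q_\phi(C\mid I)\bigr]=\eta .
\]
\textbf{Step 1 (existence and choice of $\gamma(\eta)$).} The entropy functional is continuous in $\phi$ and ranges over $[0,\log K]$. Under (A1) the parameter set can be restricted to a compact set, so each constrained problem attains its minimum. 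I will then invoke a standard Lagrangian/KKT argument: the value function $V(\eta)$ of the constrained problem has a supporting slope at $\eta$. The multiplier from that slope gives $\gamma(\eta)\ge0$, and the constrained minimizer $(\theta(\eta),\phi(\eta))$ is an optimum of \eqref{eq:objective} at that $\gamma$. If $V$ is not convex, this only yields stationary points. I would accept this and read ``optimum'' as a KKT point of \eqref{eq:objective}, which the statement's ``an associated optimum'' permits.

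\textbf{Step 2 (observational fit).} The key observation is that the cluster label can be relabeled, split, or merged without changing the marginal law of $(X,I)$. Take the CASCM realizing $P^\star$, given by Thm.~\ref{thm:approx} applied to the model achieving $\theta^\star$. Split each cluster $c$ into copies sharing the same $(\mu_c,\sigma_c)$ and decoders, and mix the copies with weights chosen so that the posterior entropy equals $\eta$. Because the decoders are shared, this yields a CASCM with exactly the same $P(X,I)$, the same reconstruction and $\1L_X$ terms, and the prescribed entropy. The KL term changes only by an explicitly bounded amount.

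Hence the constrained minimum is within an $O(\text{approximation error of }\1F_\theta)$ margin of the unconstrained one. A Pinsker-type bound converts the likelihood gap into a total-variation bound, and boundedness of $\D_I\times\D_X$ (A1), (A2) converts total variation into $W_1$, since $W_1\le \mathrm{diam}\cdot\mathrm{TV}$. This gives the bound $\delta_{\1F_\theta}$, which vanishes as the decoder family becomes universal.

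\textbf{Step 3 (continuity and non-singleton image).} Continuity of $\eta\mapsto P_x^{\theta(\eta)}(I)$ comes in two parts:
\begin{compactitem}
\item By Berge's maximum theorem, the constrained argmin correspondence is upper hemicontinuous in $\eta$. I select a continuous branch, assuming uniqueness up to relabeling, or pass to a continuous selection.
\item The map $\theta\mapsto P_x^\theta(I)$ in \eqref{eq:Px_param} is $W_1$-continuous by push-forward stability under the uniformly continuous decoders, as in the proof of Thm.~\ref{thm:approx}.
\end{compactitem}

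For the non-singleton claim, suppose two compatible ASCMs induce different $P_x(I)$. Thm.~\ref{thm:approx} gives CASCMs $\widehat{\1N}_1,\widehat{\1N}_2$ approximating each. Their cluster posteriors have, generically, different average entropies $\eta_1\neq\eta_2$. If the entropies coincide, I would first perturb by splitting clusters, as in Step 2, which preserves both laws. The sweep at $\eta_1$ and $\eta_2$ then must reach distinct interventional laws.

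\textbf{Main obstacle.} The hard step is this last one: the constrained optimum at $\eta_i$ need not be $\widehat{\1N}_i$, since many CASCMs share an entropy level. I would try to argue by contradiction. If the image were a singleton, then every compatible CASCM at every entropy level would share one $P_x(I)$. Splitting/merging clusters moves continuously between $\widehat{\1N}_1$ and $\widehat{\1N}_2$ along compatible models, which would force $P_x(I;\widehat{\1N}_1)=P_x(I;\widehat{\1N}_2)$ up to $\epsilon$. That contradicts the hypothesis once $\epsilon$ is small.
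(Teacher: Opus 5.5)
\textbf{The gap: the non-singleton claim.} This part of Step 3 does not follow, and your own Step 2 is what breaks it. As you note yourself, splitting a cluster into copies that share $(\mu_c,\sigma_c)$ and all decoder and classifier heads preserves \emph{both} laws: in \eqref{eq:Px_param} the copy weights $w_iP(c)$ simply re-sum to $P(c)$.

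Two consequences follow.
\begin{itemize}
\item Splitting and merging can never ``move continuously between $\widehat{\mathcal N}_1$ and $\widehat{\mathcal N}_2$'' when $P_x(I;\widehat{\mathcal N}_1)\neq P_x(I;\widehat{\mathcal N}_2)$. The contradiction is therefore a non sequitur: a singleton image constrains only the selected branch $\eta\mapsto\theta(\eta)$, not the other compatible CASCMs at the same entropy level.
\item Wherever splitting is available (free cluster slots, $\eta\ge H(\theta^\star)$), the branch obtained by splitting $\theta^\star$ up to level $\eta$ is a legitimate choice of $\gamma{=}0$ optima. It meets the entropy constraint and the observational bound exactly, yet keeps $P_x(I)$ constant.
\end{itemize}
So entropy by itself carries no information about the interventional law. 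No argument that applies to an arbitrary choice of optima can yield non-singleton.

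To get it, you would have to \emph{construct} the branch with an ingredient that actually moves $P_x$. One option is a block mixture of $\widehat{\mathcal N}_1$ and $\widehat{\mathcal N}_2$ on disjoint clusters with weight $t$. This fixes $P(X,I)$ and moves the interventional law along $tP_x(I;\widehat{\mathcal N}_1)+(1-t)P_x(I;\widehat{\mathcal N}_2)$. You would then still have to prove that these models are optima of \eqref{eq:objective} at some $\gamma$. That is not automatic: compatibility with $P^\star(X,I)$ says nothing about the encoder/KL terms or $\mathcal L_X$, and a Gaussian $q_\phi(\mathbf Z\mid I)$ may pay an ELBO gap on the mixture's posterior.

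The paper handles this step by asserting that Thm.~\ref{thm:approx} ties each posterior shape to a distinct interventional law. Splitting invariance shows that distinct posterior shapes can share one law, so that route does not supply the missing idea either.

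\textbf{Continuity, and Steps 1--2 compared with the paper.} Berge gives only upper hemicontinuity of the argmin correspondence. Upper hemicontinuous correspondences need not admit continuous selections; selection theorems require lower hemicontinuity and convex values. Your fallback, uniqueness up to relabeling, is already contradicted by the non-equivalent split optima of Step 2. It also sits uneasily with the non-identifiability that the last claim is about. The paper assumes the same uniqueness, so its continuity claim is equally conditional.

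For the first part your route differs from the paper's. The paper obtains $\gamma(\eta)$ from monotonicity of $\gamma\mapsto H(\phi^\star(\gamma))$, the endpoint $H(\phi^\star(0))=0$, and the intermediate value theorem. It then bounds the fit loss by $\mathcal L_0(\theta^\star)-\mathcal L_0(\theta(\eta))\le\gamma(\eta)\eta$ plus Pinsker.

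Your splitting construction is the sharper tool here. It shows that the entropy constraint costs nothing, which is what $\delta_{\mathcal F_\theta}\to0$ actually needs; the paper's $C\sqrt{\gamma(\eta)\eta}$ term does not vanish under universality alone. But it also makes your constrained value function flat on the levels splitting reaches. The supporting slope is then $\gamma(\eta)=0$, and you can simply take $\theta(\eta)$ to be the split model. This bypasses Pinsker, which with $\beta\neq\alpha$ and the extra $\mathcal L_X$ term does not control a likelihood gap anyway.

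Two caveats remain. Splitting only raises entropy and needs free cluster slots, so levels below $H(\theta^\star)$ require a separate collapse argument. And for an equality constraint, the sign $\gamma(\eta)\ge0$ does not come automatically from KKT.
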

Lem.~\ref{lem:sensitivity} (proof in App.~\ref{app:proof-lemma}) tells us that the $\gamma$-sweep is well-behaved: every attainable entropy level $\eta$ is realized by some optimum, that optimum continues to fit the observational law as the decoder grows expressive, and the resulting interventional law varies continuously and---when the feasible region is non-trivial---non-trivially with $\eta$. The sweep thus traces a continuous curve through the recovered family rather than jumping between disconnected modes. Lem.~\ref{lem:sensitivity} alone, however, does not guarantee that this curve covers the full feasible region. The next result rules this out: with sufficiently many clusters and an expressive decoder, the family of CASCMs reachable by \cauvade{} matches the family of ASCMs compatible with the diagram and the observational law.

\begin{proposition}[Feasible-region coverage]\label{prop:coverage}
Fix the diagram in Fig.~\ref{fig:ascm_obs} and assume (A1)--(A3). Let $P^\star(X, I)$ denote the true data-generating law, and define
\[
\1P^{\mathrm{ASCM}}_x(\delta) \;\triangleq\; \Bigl\{\,P_x(I; \widehat{\1M}) \;:\; \widehat{\1M} \text{ compatible with Fig.~\ref{fig:ascm_obs}},\; W_1\!\bigl(P(X,I;\widehat{\1M}),\, P^\star(X,I)\bigr) \le \delta \,\Bigr\}
\]
as the set of interventional image distributions realizable by \emph{any} ASCM whose observational law lies within $W_1$-distance $\delta$ of $P^\star$, and analogously
\[
\1P^{\,\cauvade}_x(\delta;\, K, \1F_\theta) \;\triangleq\; \Bigl\{\,P_x^\theta(I) \;:\; \theta \in \1F_\theta,\; W_1\!\bigl(P^\theta(X, I),\, P^\star(X, I)\bigr) \le \delta\,\Bigr\}
\]
as the set realizable by a $K$-cluster \cauvade{} configuration with decoder family $\1F_\theta$. If $\1F_\theta$ is dense in the space of continuous mechanisms compatible with (A1)--(A3), then for every $\delta > 0$,
\[
\overline{\bigcup_{K \ge 1} \1P^{\,\cauvade}_x(\delta;\, K, \1F_\theta)} \;=\; \overline{\1P^{\mathrm{ASCM}}_x(\delta)},
\]
where $\overline{(\cdot)}$ denotes closure in the $W_1$ topology on probability measures over $\D_I$.
\end{proposition}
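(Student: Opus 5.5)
We prove the two closure inclusions separately. The inclusion $\subseteq$ is essentially by construction; the inclusion $\supseteq$ combines Thm.~\ref{thm:approx} with the density of $\1F_\theta$.

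\emph{Inclusion $\subseteq$.} Each $K$-cluster \cauvade{} configuration with $\theta \in \1F_\theta$ is itself a CASCM, as noted after Fig.~\ref{fig:cauvade_pgm}. Here $C$ plays the role of $U$ with $d=K$. The bundle $\mathbf{Z}$ carries $Z$ and the independent noises, and the decoders realize $f_Z, f_X, f_Y, f_I$. Every CASCM is an ASCM compatible with Fig.~\ref{fig:ascm_obs}. Moreover, \eqref{eq:Px_param} is exactly the interventional law $P_x(I;\widehat{\1N})$ of the submodel induced by $\dox$, because $(C,\mathbf{Z})$ is drawn from its joint prior. Hence $\1P^{\cauvade}_x(\delta;K,\1F_\theta) \subseteq \1P^{\mathrm{ASCM}}_x(\delta)$ for every $K$, and we take unions and closures.

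One caveat needs a short remark. The stochastic decoders $p_\theta(\cdot\mid\cdot)$ must be written as deterministic functions of an independent noise, which is possible by the reparametrization. Assumption (A3) must also hold for the learned $f_I$. If the decoder family does not enforce (A3), we instead read $\1P^{\mathrm{ASCM}}_x$ over ASCMs that drop (A3). Alternatively, we note that the closure is unaffected, since (A3)-satisfying decoders are dense by hypothesis.

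\emph{Inclusion $\supseteq$.} Fix $Q = P_x(I;\widehat{\1M}) \in \1P^{\mathrm{ASCM}}_x(\delta)$ and $\epsilon>0$. We must produce $\theta$ in some $K$-cluster configuration with $W_1(P^\theta(X,I),P^\star)\le\delta$ and $W_1(P_x^\theta(I),Q)<\epsilon$.

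\textbf{Step 1.} Apply Thm.~\ref{thm:approx} to obtain a CASCM $\widehat{\1N}$ with $d$ atoms that approximates $\widehat{\1M}$ to within $\epsilon'$ in both \eqref{eq:approx-obs} and \eqref{eq:approx-int}.

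\textbf{Step 2.} Set $K=d$ and use density of $\1F_\theta$ to pick decoders uniformly close to $f_X,f_Y,f_I$ on the bounded domain. The noises can be pushed forward from Gaussian components $\1N(\mu_c,\sigma_c^2\mathbf{I})$, since continuous maps of Gaussians are dense in laws on bounded sets. Push-forward stability of $W_1$ under uniformly close uniformly continuous maps, the same tool used for Thm.~\ref{thm:approx}, then gives observational and interventional errors of order $\epsilon'$.

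\textbf{Main obstacle.} Thm.~\ref{thm:approx} controls only the \emph{integrated} quantity $\int_{\D_X}W_1\,dx$ and not the error at a fixed $x$. We would handle this in one of two ways.

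The first route re-runs the construction of Thm.~\ref{thm:approx} pointwise. For categorical $X$, (A2) gives counting measure, so the integral bound already implies a pointwise bound. For continuous $X$, the maps $x\mapsto P_x(I)$ are $W_1$-continuous by continuity of $f_Y,f_I$ and boundedness of $\D_I$. Therefore $L^1$-smallness together with equicontinuity on the finite-measure domain gives uniform smallness.

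The second route applies if equicontinuity is unavailable. We interpret closure in the sense of a.e.\ $x$, extracting a subsequence along which the $L^1$ convergence becomes pointwise convergence.

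\textbf{Tolerance slack.} The observational constraint is $W_1\le\delta$, while the approximation gives only $\le\delta+2\epsilon'$. We absorb this by starting from an ASCM whose observational law is within $\delta-\epsilon''$ of $P^\star$. We argue that such ASCMs have interventional laws that are dense in $\1P^{\mathrm{ASCM}}_x(\delta)$, by mixing $\widehat{\1M}$ with a small weight on a model that reproduces $P^\star$ exactly; a convex combination through an extra confounder atom keeps the model compatible with the diagram. This boundary-tolerance step and the pointwise-in-$x$ upgrade are where we expect the real work to lie.
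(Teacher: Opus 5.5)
Your argument is correct and has the same skeleton as the paper's proof. The inclusion $\subseteq$ holds because every \cauvade{} configuration is itself a compatible ASCM. The inclusion $\supseteq$ follows by applying Thm.~\ref{thm:approx}, taking $K=d$, replacing the mechanisms by uniformly close decoders, and propagating the errors through push-forward stability of $W_1$ and the triangle inequality. You depart from the paper only on the two boundary issues, and on both your treatment is the sharper one.

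\textbf{Pointwise in $x$.} The paper applies Markov's inequality to \eqref{eq:approx-int} and concludes only for almost every $x$. That exceptional null set depends on the witness $\widehat{\mathcal M}$, and there are uncountably many witnesses, so the argument does not give the set identity at a fixed $x$. Your route 2 inherits this defect, so drop it. Route 1 is the right one, and it is simpler than you make it. Lemma~\ref{lm:cascm_approx_semantic} and the proof of Thm.~\ref{thm:approx} already bound $W_1\bigl(P_x(I;\widehat{\mathcal M}),P_x(I;\widehat{\mathcal N})\bigr)$ for every $x$ before integrating. The equicontinuity argument is therefore unnecessary; it would in any case need a modulus uniform over the approximating CASCMs, not just continuity of each one.

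\textbf{Data-fit slack.} The paper reaches tolerance $\delta+\eta$ and then intersects over $\eta>0$. That only places $Q$ in $\bigcap_{\eta>0}\overline{\bigcup_K\mathcal{P}^{\,\cauvade}_x(\delta+\eta;K,\mathcal{F}_\theta)}$. Closure does not commute with decreasing intersections, so this set can strictly contain the target closure. Your mixing step does close this gap. By joint convexity of $W_1$, putting weight $t$ on a $P^\star$-reproducing model gives observational error at most $(1-t)\delta$. It moves the interventional law by at most $t\,\mathrm{diam}(\D_I)$, which leaves room $t\delta$ for the approximation error. This is exactly where $\delta>0$ is used.

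Three details need fixing in that step.
\begin{enumerate}
\item The mixture indicator cannot act through the confounder. $f_I$ does not take $\mathbf{U}$ as an argument, and $\mathcal{E}_I$ is independent of $\mathbf{U}$. Route the indicator through $Z$ (or $Y$) instead. As a pre-treatment coordinate it is untouched by $\mathrm{do}(X{=}x)$, so the interventional law is the same $t$-mixture.
\item The same device, copying $C$ into the pre-treatment coordinate, is what actually makes your $\subseteq$ identification legitimate. The \cauvade{} image decoder reads $C$ directly, which an ASCM's $f_I$ cannot do.
\item The mixture can violate (A3) when the two image supports overlap. This step therefore relies on the same (A3)-free reading of the ASCM class that you already invoke for $\subseteq$.
\end{enumerate}
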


Prop.~\ref{prop:coverage} (proof in App.~\ref{app:proof-coverage}) is a direct consequence of Thm.~\ref{thm:approx} together with the universality of the \cauvade{} decoder family: any interventional law a compatible ASCM can produce is matched, up to arbitrarily small Wasserstein error, by some \cauvade{} configuration with sufficiently many clusters and an expressive enough decoder. The role of $\gamma$ is then to \emph{select} elements of this set: by Lem.~\ref{lem:sensitivity}, sweeping $\gamma$ over $[0, \infty)$ moves the cluster posterior between collapsed and uniform regimes and traces a continuous curve through the recovered family. The empirical result, reported in Sec.~\ref{sec:experiments}, is exactly this: a parametric trace through the feasible region in which all generators fit the training data to comparable likelihood while realizing visibly different interventional distributions.
\begin{table}[t]
    \centering
    \caption{Observational $P(Y=1 \mid X)$ versus interventional $P(Y=1 \mid \text{do}(X))$ on Confounded Color-MNIST and CelebA. Baselines collapse to a single estimate, while \cauvade{} traces the feasible region characterized by Manski's bound as $\gamma$ varies.}
    \setlength{\tabcolsep}{4pt}
    \renewcommand{\arraystretch}{1.1}
    \resizebox{\textwidth}{!}{%
\begin{tabular}{ll cc cc cc cccc}
    \toprule
    & & $P(Y=1|X)$ & $P(Y=1|do(X))$ & VAE & ANCM & \multicolumn{2}{c}{Feasible Region\citep{manski1990nonparametric}} & \multicolumn{4}{c}{\cauvade{}} \\
    \cmidrule(lr){7-8} \cmidrule(lr){9-12}
    & & & & & & Lower & Upper & $\gamma=0$ & $\gamma=1$ & $\gamma=10$ & $\gamma=100$ \\
    \midrule
    \multirow{2}{*}{C-MNIST} 
        & $X=0$ & 0.609 & 0.400 & $0.707 \pm 0.008$ & $0.580 \pm 0.010$ & 0.28 & 0.82 & 0.570 & 0.540 & 0.493 & 0.716 \\
        \cmidrule(lr){2-12}
        & $X=1$ & 0.778 & 0.600 & $0.707 \pm 0.016$ & $0.619 \pm 0.021$ & 0.42 & 0.88 & 0.639 & 0.556 & 0.544 & 0.834 \\
    \midrule
    \multirow{2}{*}{CelebA} 
        & $X=0$ & 0.462 & 0.330 & $0.456 \pm 0.013$ & $0.487 \pm 0.015$ & 0.255 & 0.725 & 0.422 & 0.571 & 0.476 & 0.395 \\
        \cmidrule(lr){2-12}
        & $X=1$ & 0.308 & 0.463 & $0.302 \pm 0.024$ & $0.481 \pm 0.016$ & 0.143 & 0.673 & 0.437 & 0.506 & 0.480 & 0.385 \\
    \bottomrule
\end{tabular}
    }
    \label{tab:cauvade-results}
\end{table}

\begin{figure*}[t]
\hfill
    \begin{subfigure}{0.16\linewidth}\centering
        \setlength{\abovecaptionskip}{0pt}
        \includegraphics[width=\linewidth]{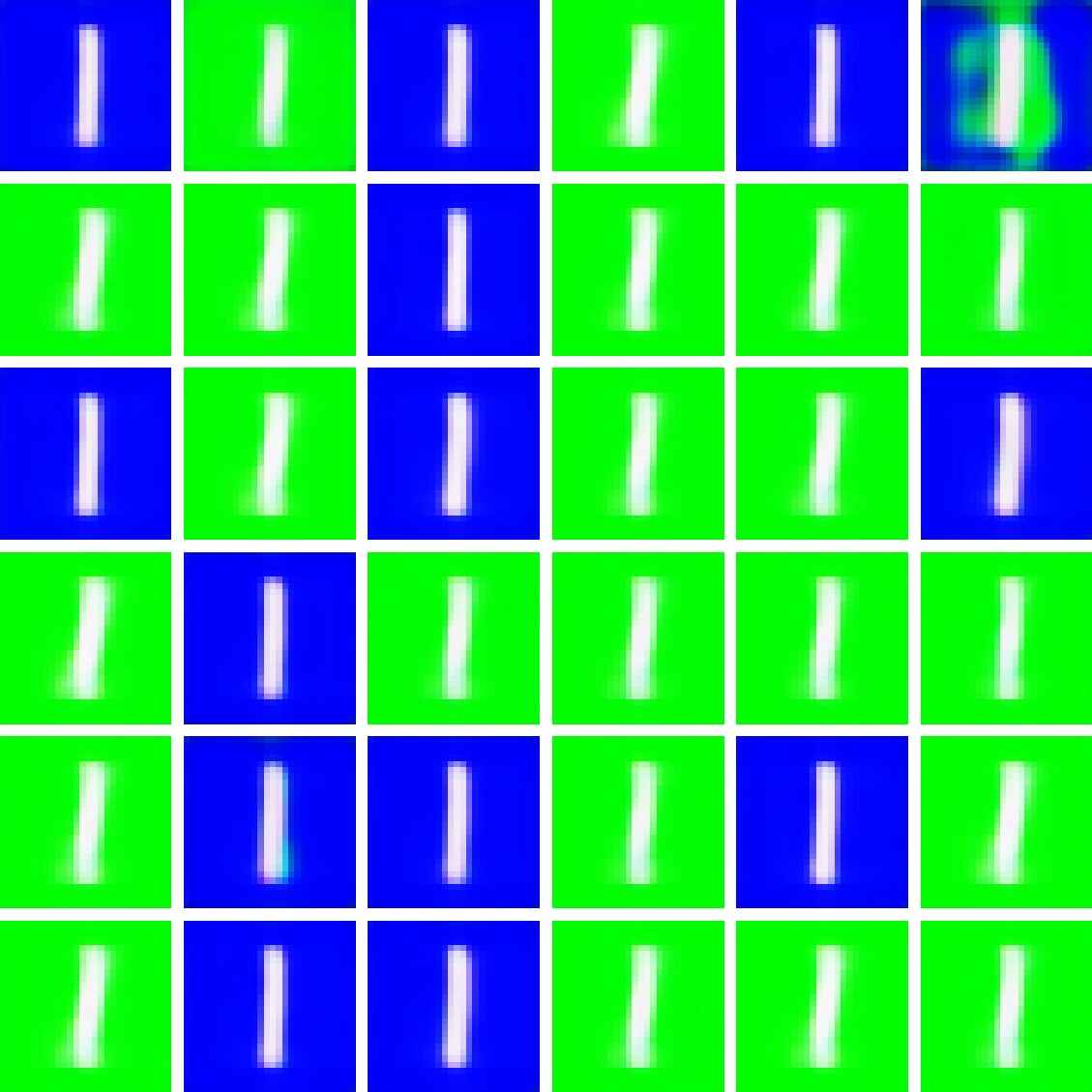}
        \caption{}\label{fig:_cmnist_sample_a}
    \end{subfigure}\hfill
    \begin{subfigure}{0.16\linewidth}\centering
        \setlength{\abovecaptionskip}{0pt}
        \includegraphics[width=\linewidth]{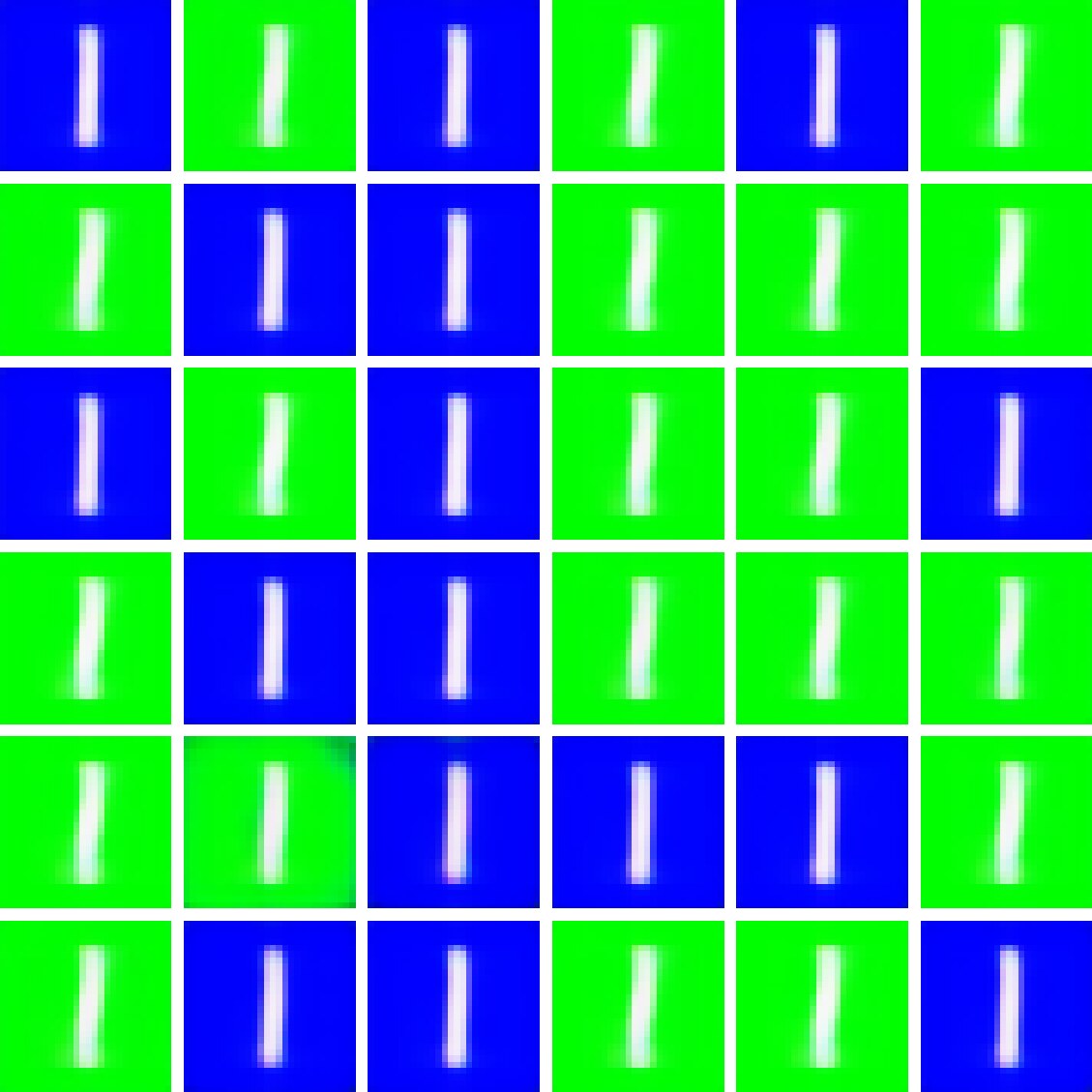}
        \caption{}\label{fig:_cmnist_sample_b}
    \end{subfigure}\hfill
    \begin{subfigure}{0.16\linewidth}\centering
        \setlength{\abovecaptionskip}{0pt}
        \includegraphics[width=\linewidth]{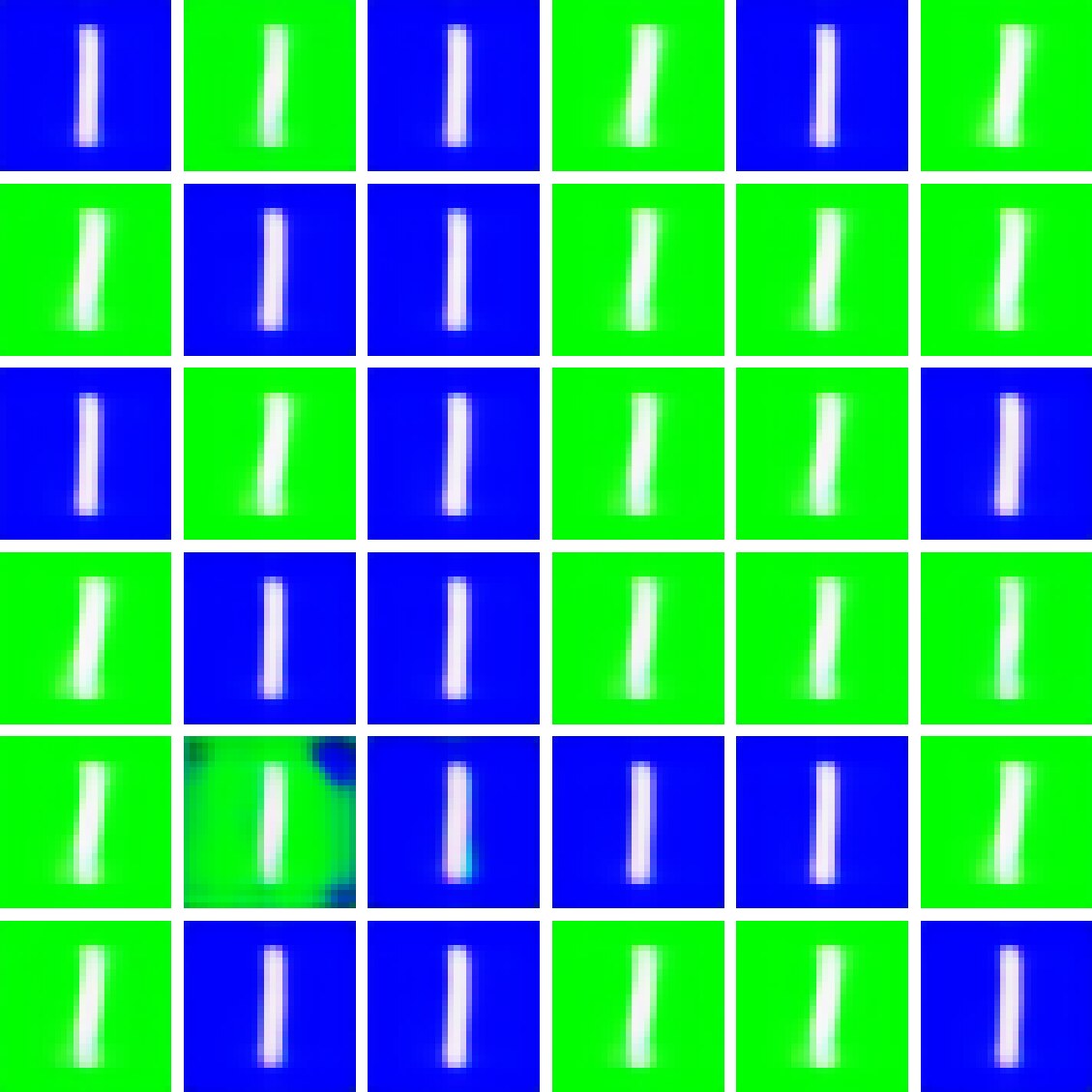}
        \caption{}\label{fig:_cmnist_sample_c}
    \end{subfigure}\hfill
    \begin{subfigure}{0.16\linewidth}\centering
        \setlength{\abovecaptionskip}{0pt}
        \includegraphics[width=\linewidth]{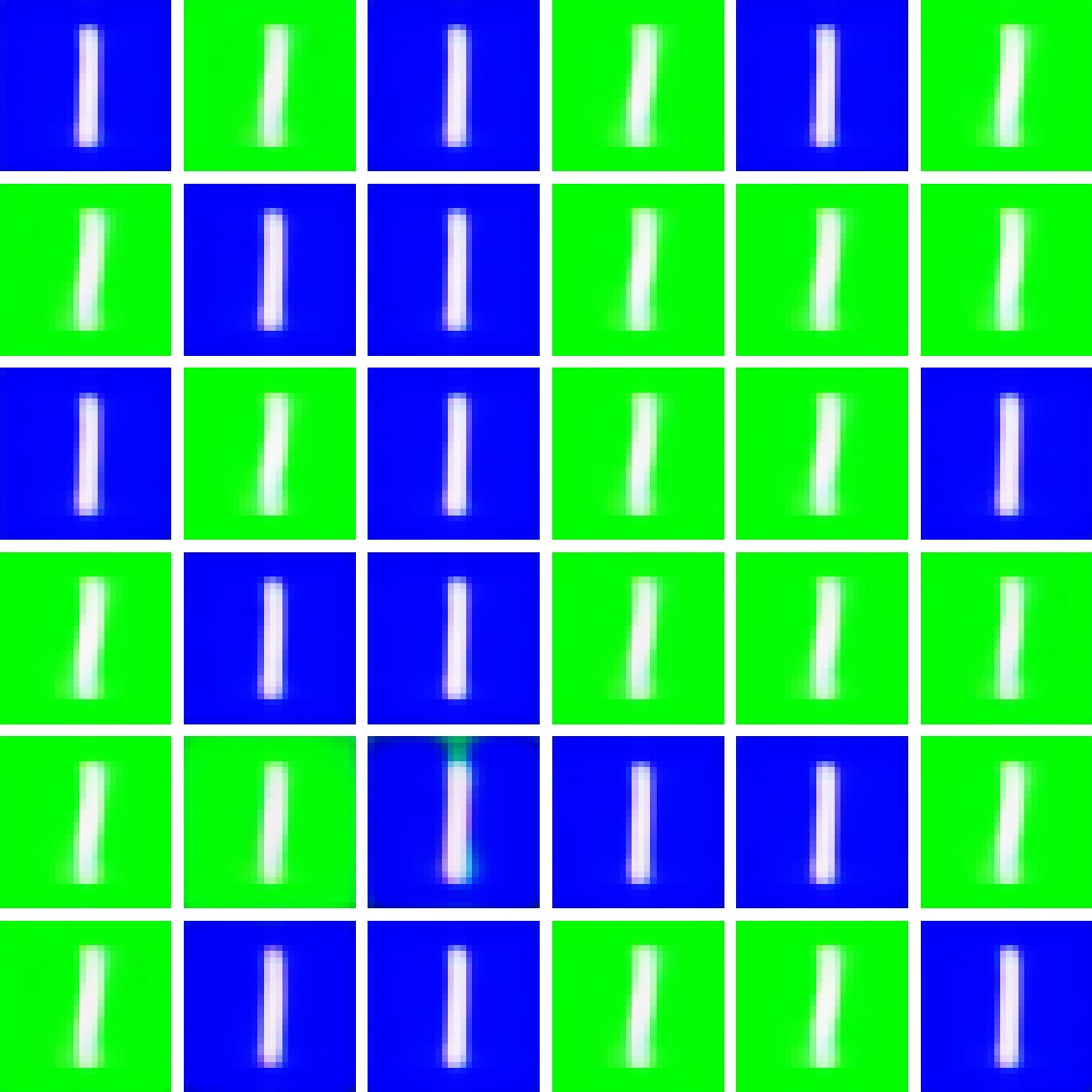}
        \caption{}\label{fig:_cmnist_sample_d}
    \end{subfigure}\hfill
    \begin{subfigure}{0.16\linewidth}\centering
        \setlength{\abovecaptionskip}{0pt}
        \includegraphics[width=\linewidth]{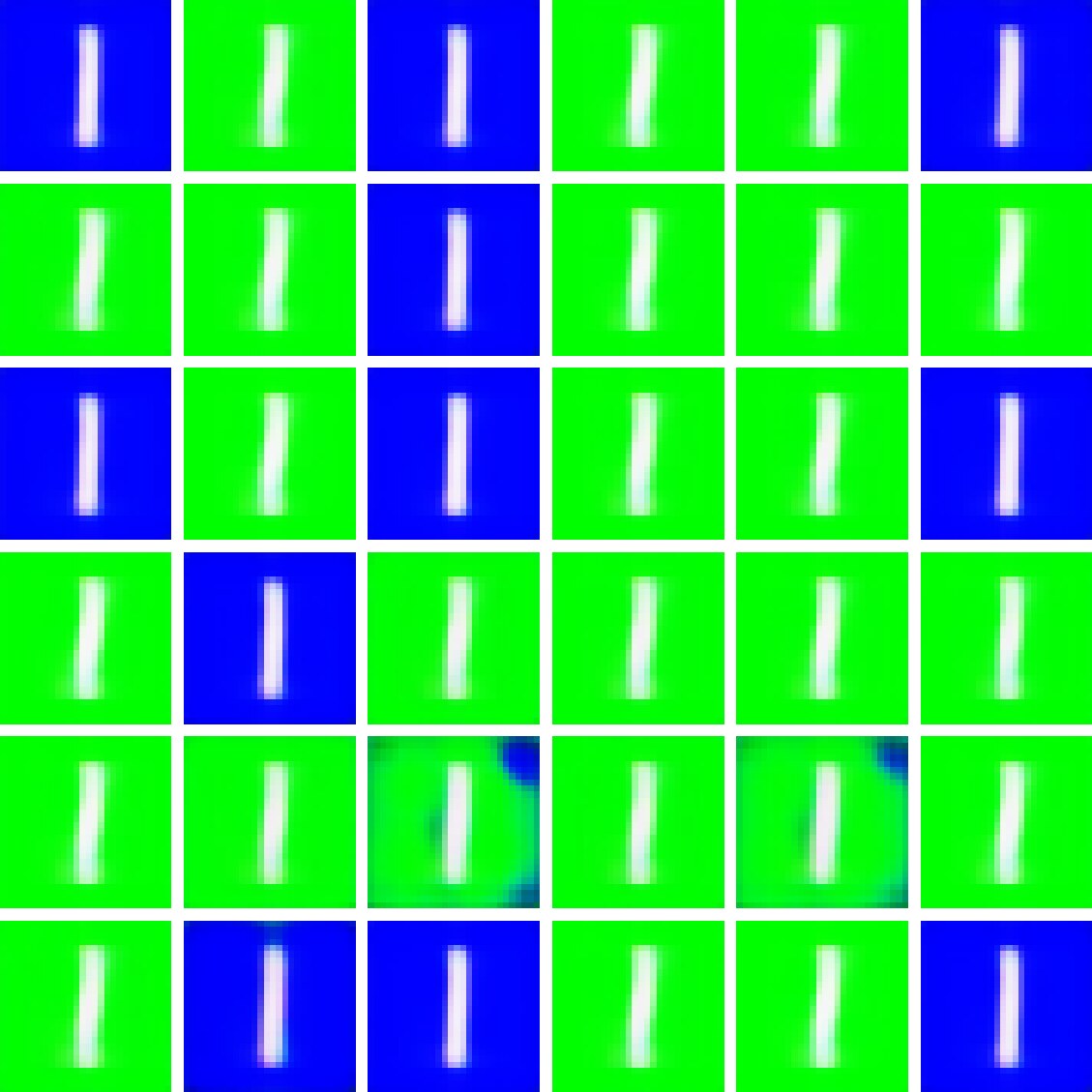}
        \caption{}\label{fig:_cmnist_sample_e}
    \end{subfigure}\hfill
    \begin{subfigure}{0.16\linewidth}\centering
        \setlength{\abovecaptionskip}{0pt}
        \includegraphics[width=\linewidth]{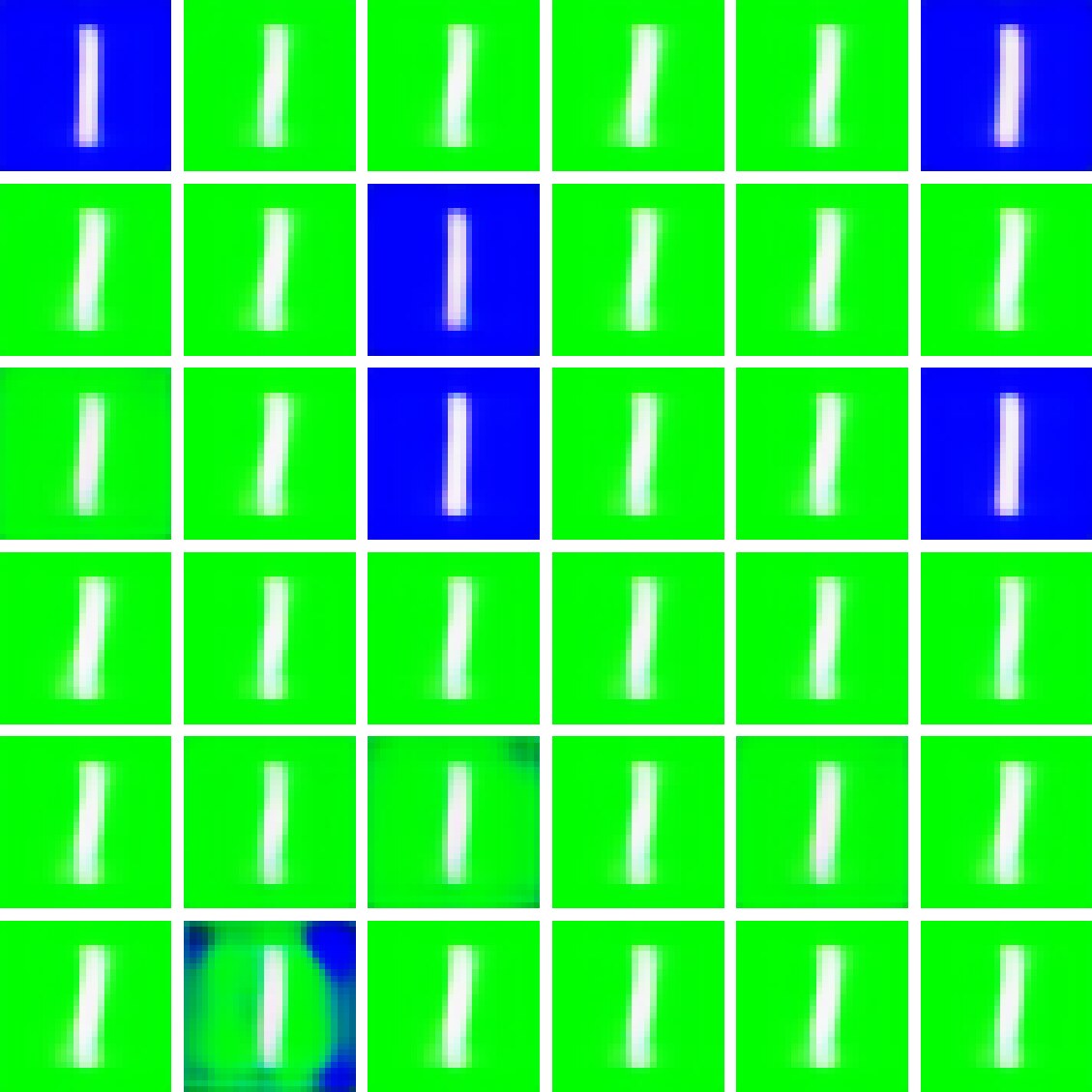}
        \caption{}\label{fig:_cmnist_sample_f}
    \end{subfigure}\hfill
    \caption{\cauvade{} samples on Confounded Color-MNIST under $X=1$ at
    (\subref{fig:_cmnist_sample_a}) $\gamma=1$,
    (\subref{fig:_cmnist_sample_b}) $\gamma=2$,
    (\subref{fig:_cmnist_sample_c}) $\gamma=10$,
    (\subref{fig:_cmnist_sample_d}) $\gamma=20$,
    (\subref{fig:_cmnist_sample_e}) $\gamma=50$, and
    (\subref{fig:_cmnist_sample_f}) $\gamma=100$. Larger $\gamma$ shifts the digit--color association, revealing distinct causal mechanisms consistent with the same observational distribution.}
    \label{fig:_cmnist_sample}
\end{figure*}

\section{Experiments}
\label{sec:experiments}

We evaluate \cauvade{} on three datasets: synthetic Confounded Color-MNIST, CelebA \citep{liu2015celeba}, and MIMIC-CXR-JPG \citep{johnson2024mimiccxr, johnson2019mimic, PhysioNet}. In each, an unobserved confounder $U$ jointly governs the treatment $X$ and post-treatment attribute $Y$, inducing a spurious correlation that standard generative models inherit. We compare \cauvade{} against a vanilla VAE and the ANCM \citep{pan2024counterfactualimageediting}, a causally-enhanced VAE for counterfactual generation; all three share the same autoencoder backbone, so any difference in interventional behavior is attributable to the causal mechanism rather than the generative family. We assess each by how well its estimated $P(Y \mid \text{do}(X))$ aligns with the feasible region partially identifiable from observational data, characterized in the discrete setting by Manski's bound \citep{manski1990nonparametric}. Both baselines collapse to a single point estimate: VAE recovers the confounded $P(Y \mid X)$, and ANCM concentrates at one location inside the feasible region. Sweeping $\gamma$ in \cauvade{} instead traces a continuous curve spanning Manski's bound, exposing a diverse set of causal explanations consistent with the observational distribution. \cauvade{} also preserves generation quality, with samples visually aligned with the unconfounded ground truth on CelebA and lower FID than both baselines on MIMIC-CXR-JPG. We defer to Appendix~\ref{app:setup} for more details on the experimental setup. 

\begin{wrapfigure}[15]{r}{0.55\linewidth}
\vspace{-1.2\baselineskip}
\hfill
    \begin{subfigure}{0.49\linewidth}\centering
        \includegraphics[width=\linewidth]{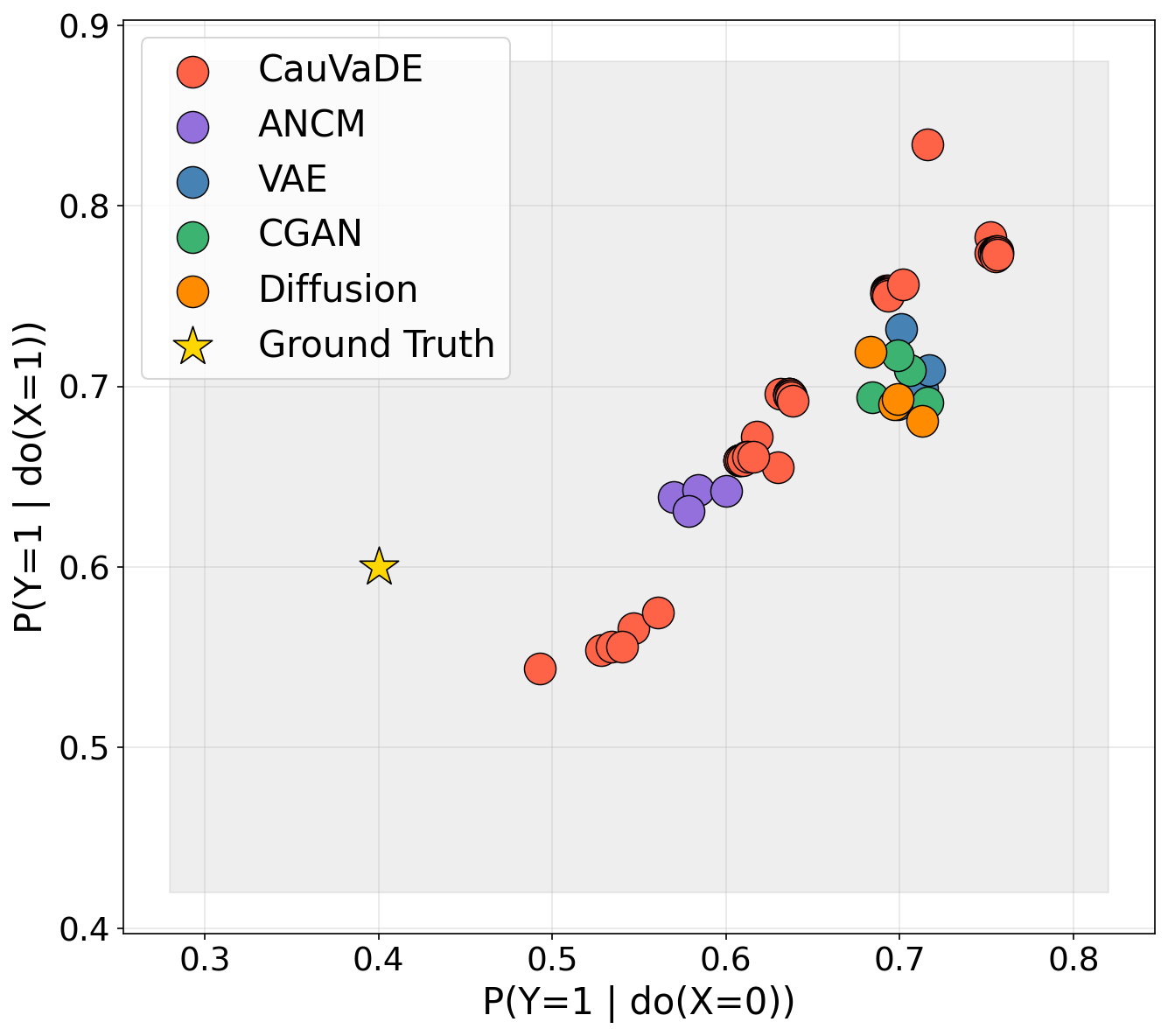}
        \caption{Color-MNIST}
        \label{fig:_vade_gamma_cmnist}
    \end{subfigure}\hfill
    \begin{subfigure}{0.49\linewidth}\centering
        \includegraphics[width=\linewidth]{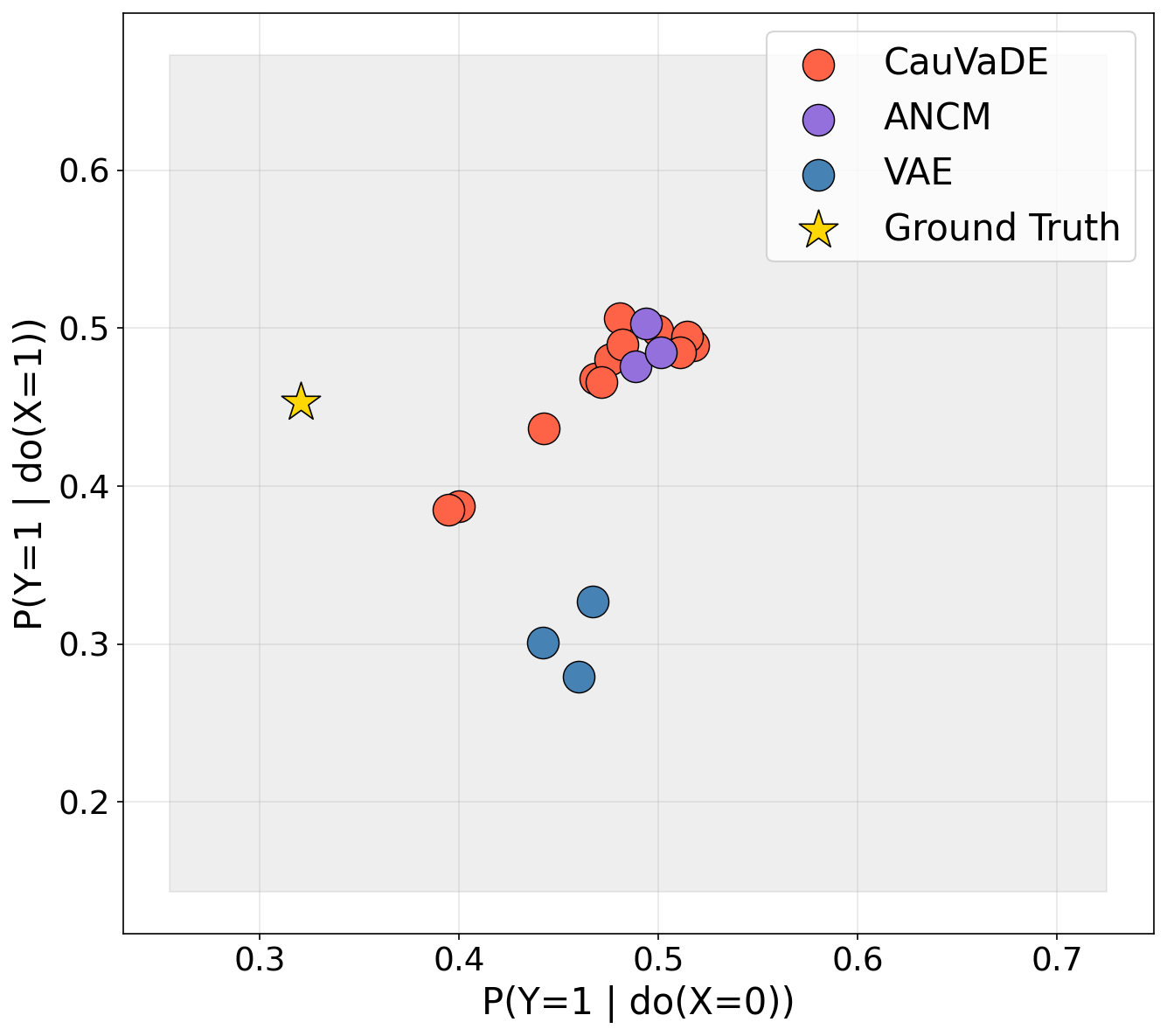}
        \caption{Confounded CelebA}
        \label{fig:_vade_gamma_celeba}
    \end{subfigure}\hfill
    \caption{Estimated $P(Y \mid \text{do}(X))$ on Confounded Color-MNIST and CelebA. VAE and ANCM each collapse to a single point; \cauvade{} traces the feasible region (shaded, Manski bound \citep{manski1990nonparametric}) as $\gamma$ is swept.}
    \label{fig:_vade_gamma}
\end{wrapfigure}
\textbf{Confounded Color-MNIST.}
We construct a confounded variant of Color-MNIST with digits $X \in \{0, 1\}$ and background colors $Y \in \{\text{green}, \text{blue}\}$ (encoded as $0, 1$), introducing a binary confounder $U$ that jointly governs both. Samples from $P(I \mid X=1)$ and the unconfounded ground truth $P_{X=1}(I)$ are shown in Figs.~\ref{fig:_cmnist_a}--\ref{fig:_cmnist_b} ($X=0$ in App.~\ref{app:setup}). Following Example~1, we train VAE and ANCM on the confounded images across $5$ seeds and report their estimated $P_x(Y)$ in Table~\ref{tab:cauvade-results}. The vanilla VAE collapses to the confounded $P(Y \mid X)$, while ANCM stays within the feasible region but at a single point. Applying \cauvade{} with a $\gamma$-sweep systematically shifts $P(Y \mid \text{do}(X))$ across the feasible region: the interventional images in Fig.~\ref{fig:_cmnist_sample} exhibit a diverse set of digit--color mechanisms, and Fig.~\ref{fig:_vade_gamma_cmnist} shows that $\gamma$ drives the estimates across a wider portion of the feasible region than either baseline.

\begin{figure}[t]
\hfill
    \begin{subfigure}{0.245\linewidth}\centering
        \setlength{\abovecaptionskip}{0pt}
        \includegraphics[width=\linewidth]{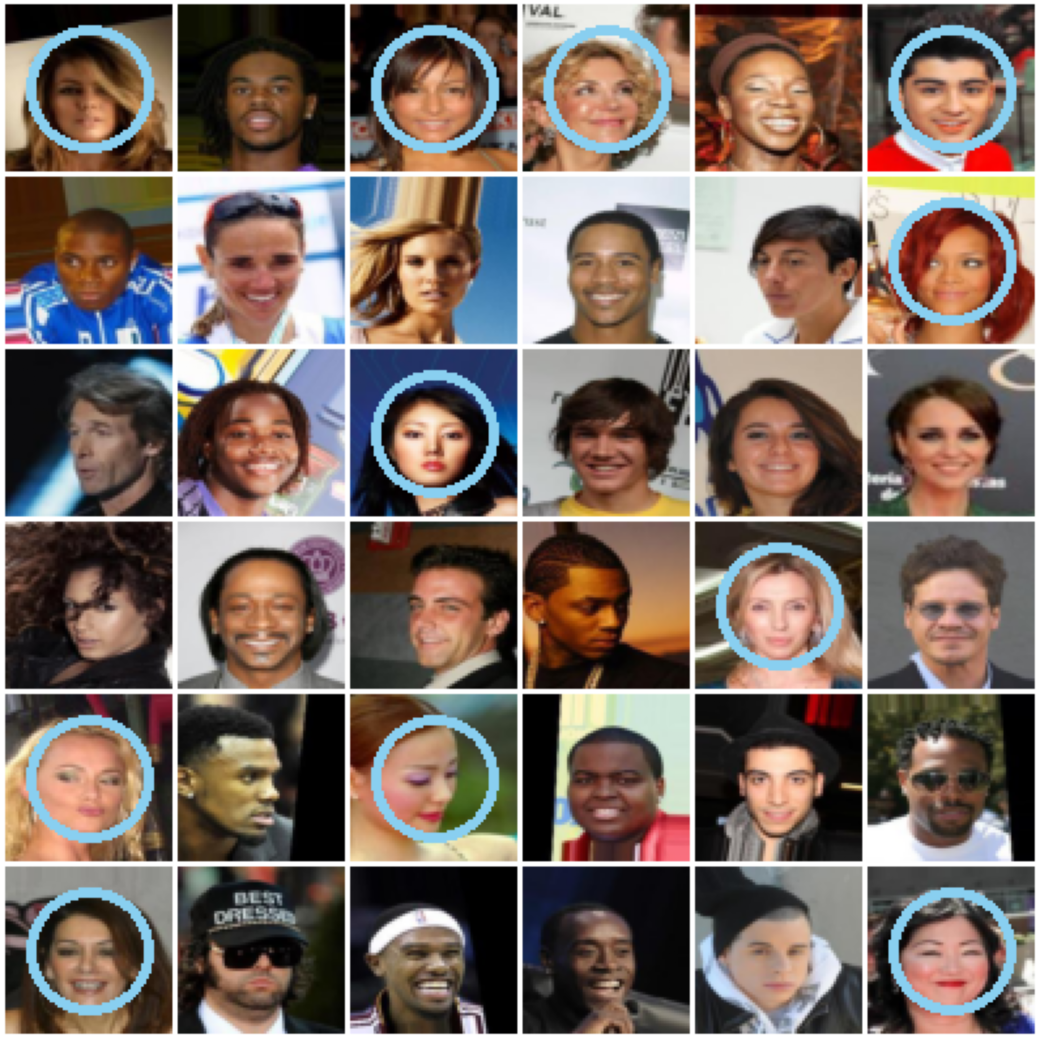}
        \caption{}\label{fig:_celeb_confound}
    \end{subfigure}\hfill
    \begin{subfigure}{0.245\linewidth}\centering
        \setlength{\abovecaptionskip}{0pt}
        \includegraphics[width=\linewidth]{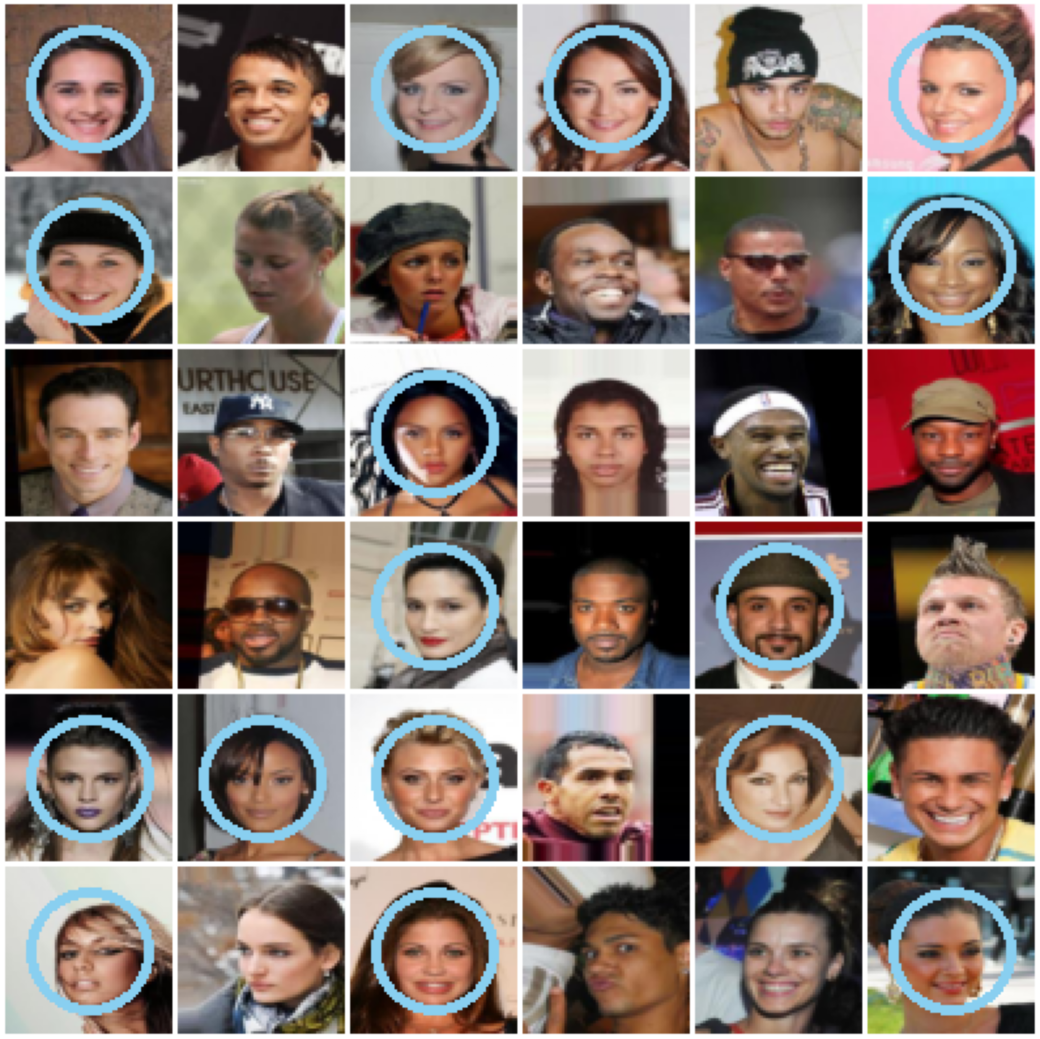}
        \caption{}\label{fig:_celeb_unconfound}
    \end{subfigure}\hfill
    \begin{subfigure}{0.245\linewidth}\centering
        \setlength{\abovecaptionskip}{0pt}
        \includegraphics[width=\linewidth]{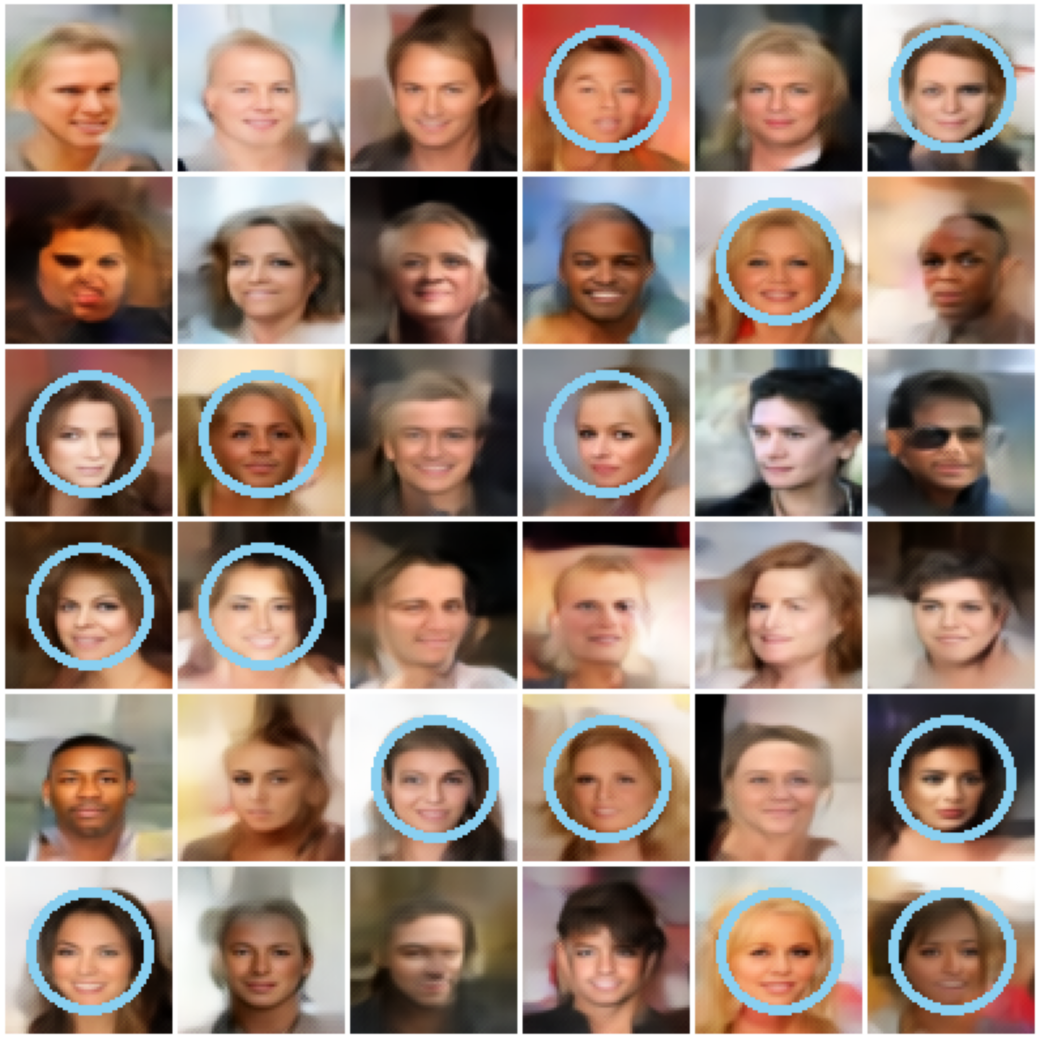}
        \caption{}\label{fig:_celeb_VAE}
    \end{subfigure}
    \begin{subfigure}{0.245\linewidth}\centering
        \setlength{\abovecaptionskip}{0pt}
        \includegraphics[width=\linewidth]{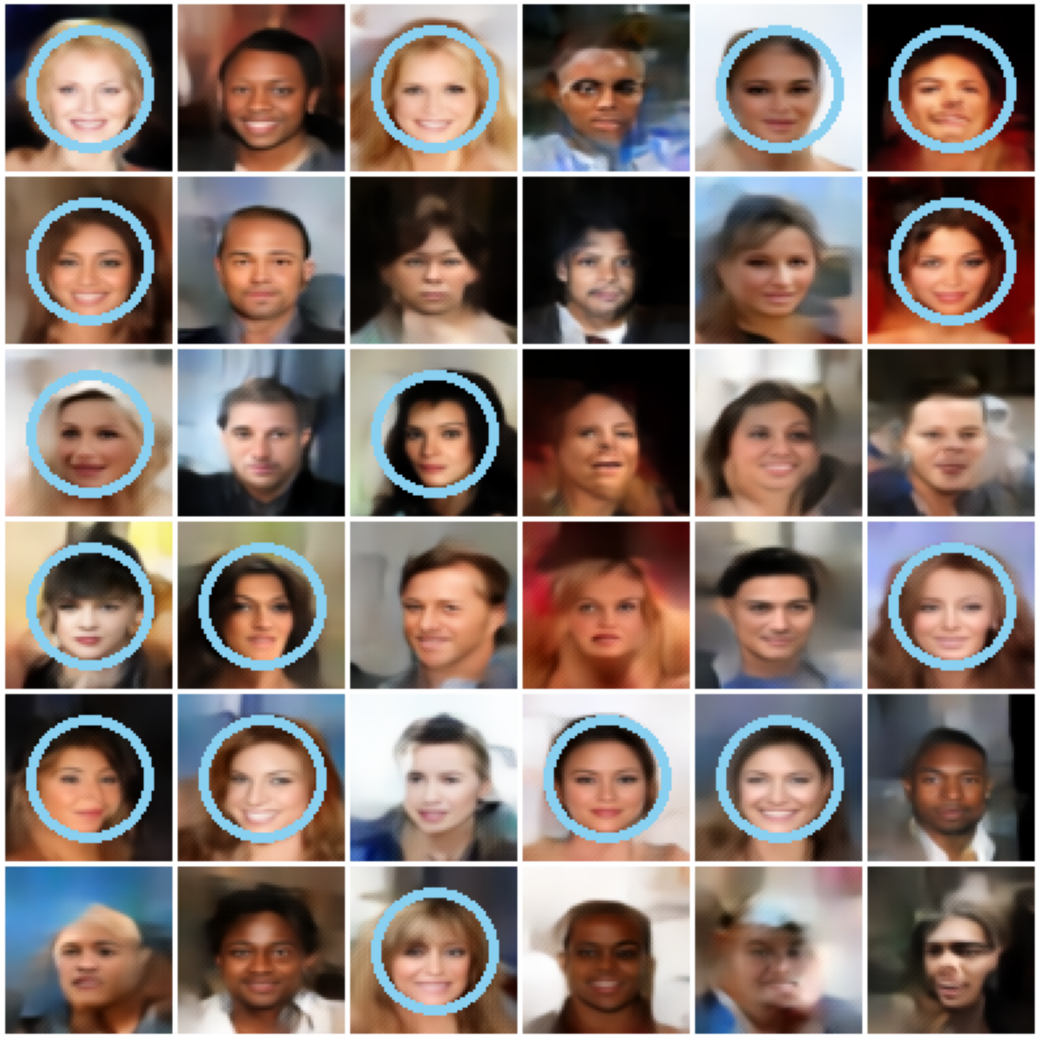}
        \caption{}\label{fig:_celeb_ANCM}
    \end{subfigure}
    \begin{subfigure}{0.245\linewidth}\centering
        \setlength{\abovecaptionskip}{0pt}
        \includegraphics[width=\linewidth]{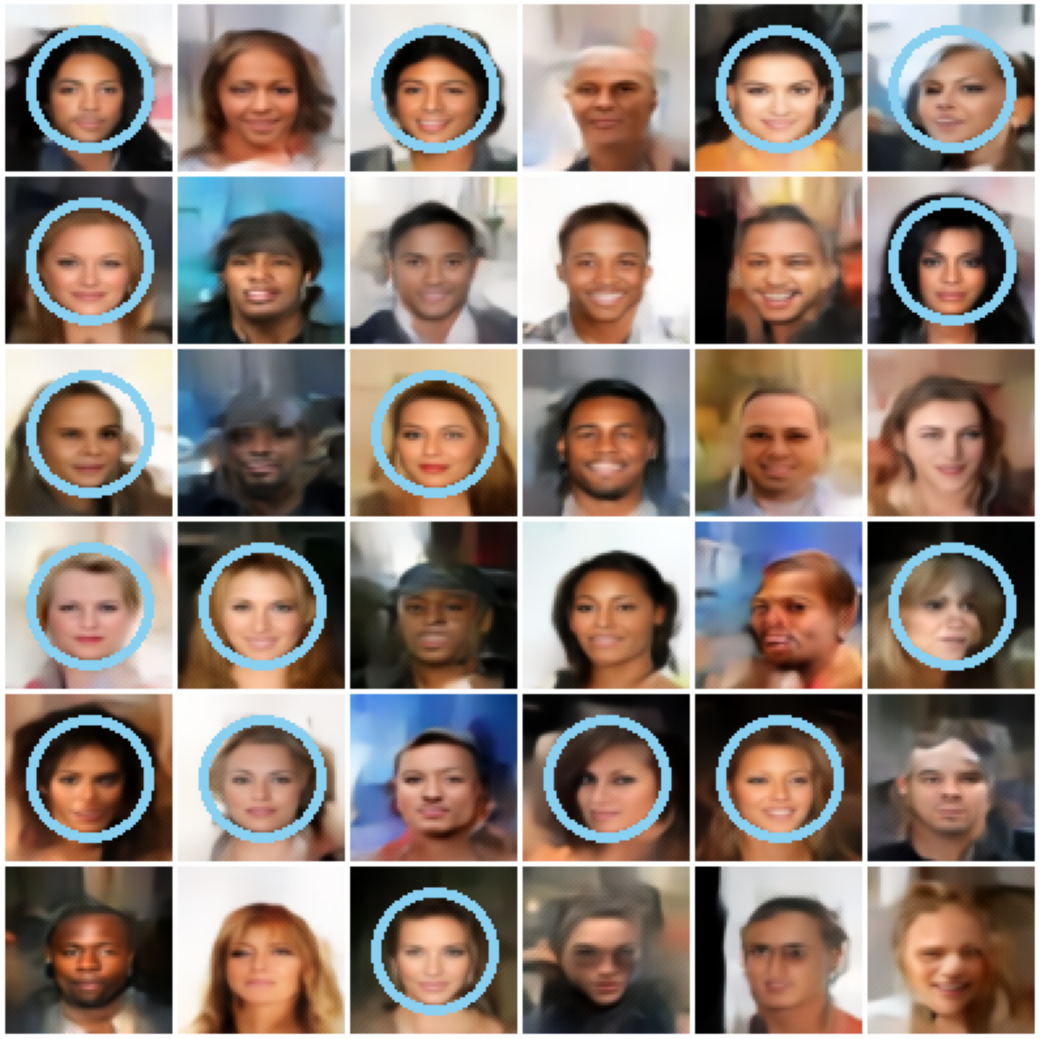}
        \caption{}\label{fig:_celeb_gamma0}
    \end{subfigure}\hfill
    \begin{subfigure}{0.245\linewidth}\centering
        \setlength{\abovecaptionskip}{0pt}
        \includegraphics[width=\linewidth]{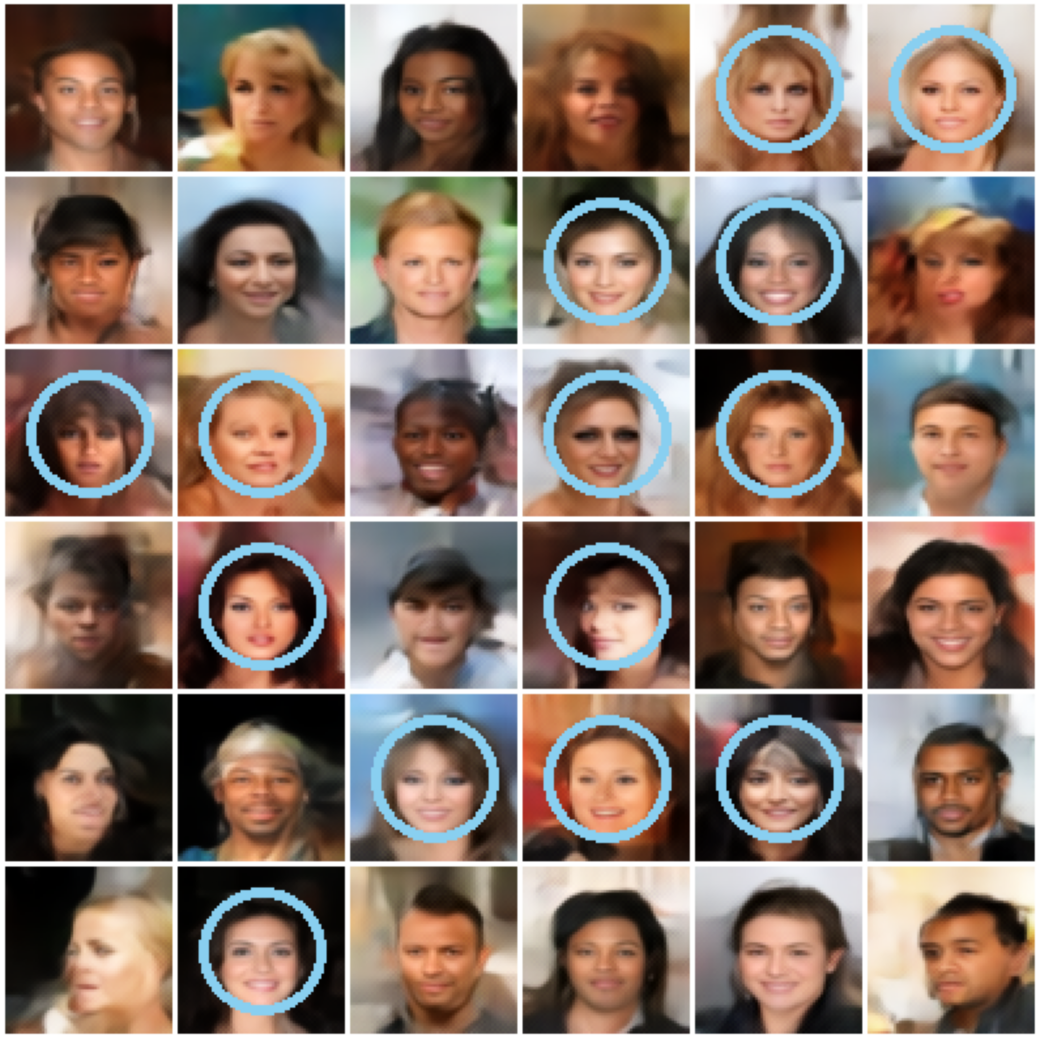}
        \caption{}\label{fig:_celeb_gamma1}
    \end{subfigure}\hfill
    \begin{subfigure}{0.245\linewidth}\centering
        \setlength{\abovecaptionskip}{0pt}
        \includegraphics[width=\linewidth]{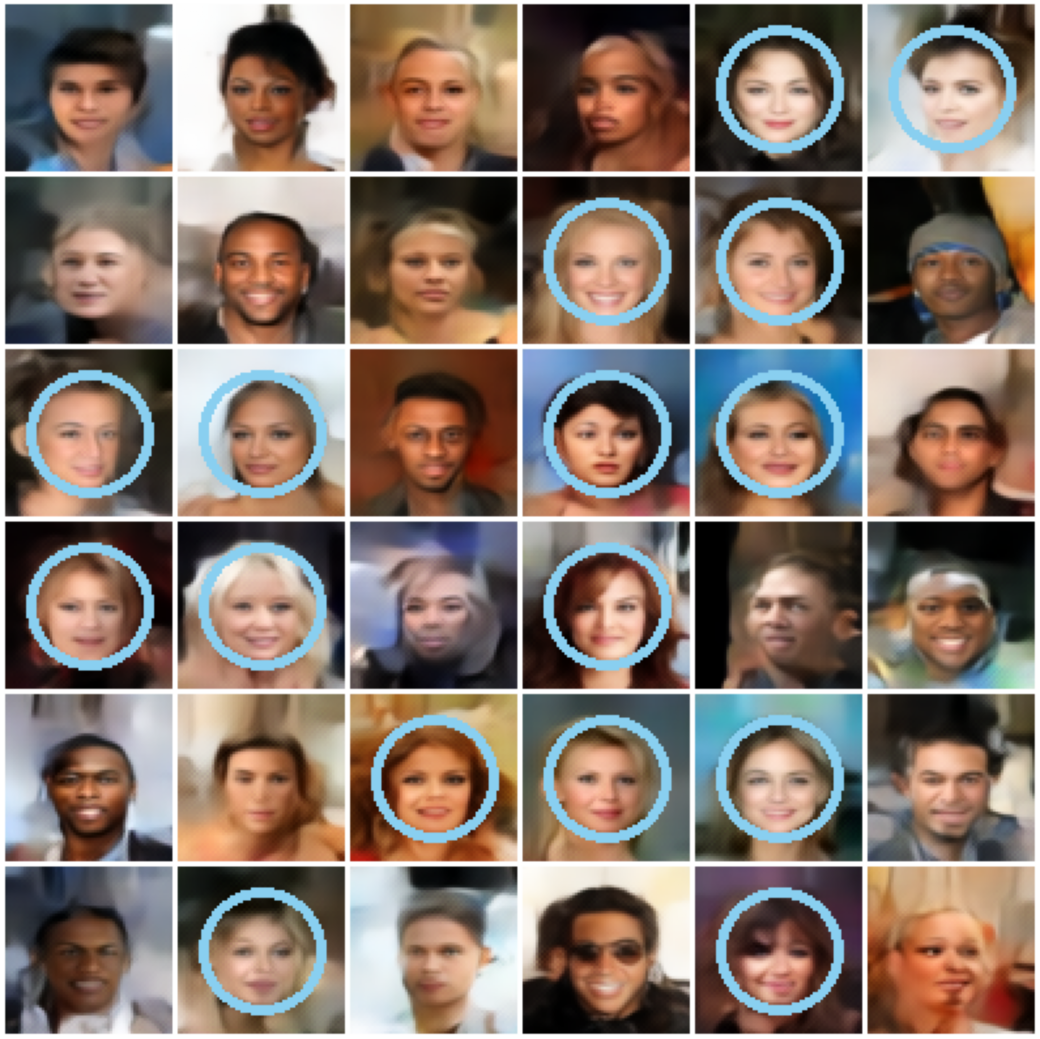}
        \caption{}\label{fig:_celeb_gamma10}
    \end{subfigure}\hfill    
    \begin{subfigure}{0.245\linewidth}\centering
        \setlength{\abovecaptionskip}{0pt}
        \includegraphics[width=\linewidth]{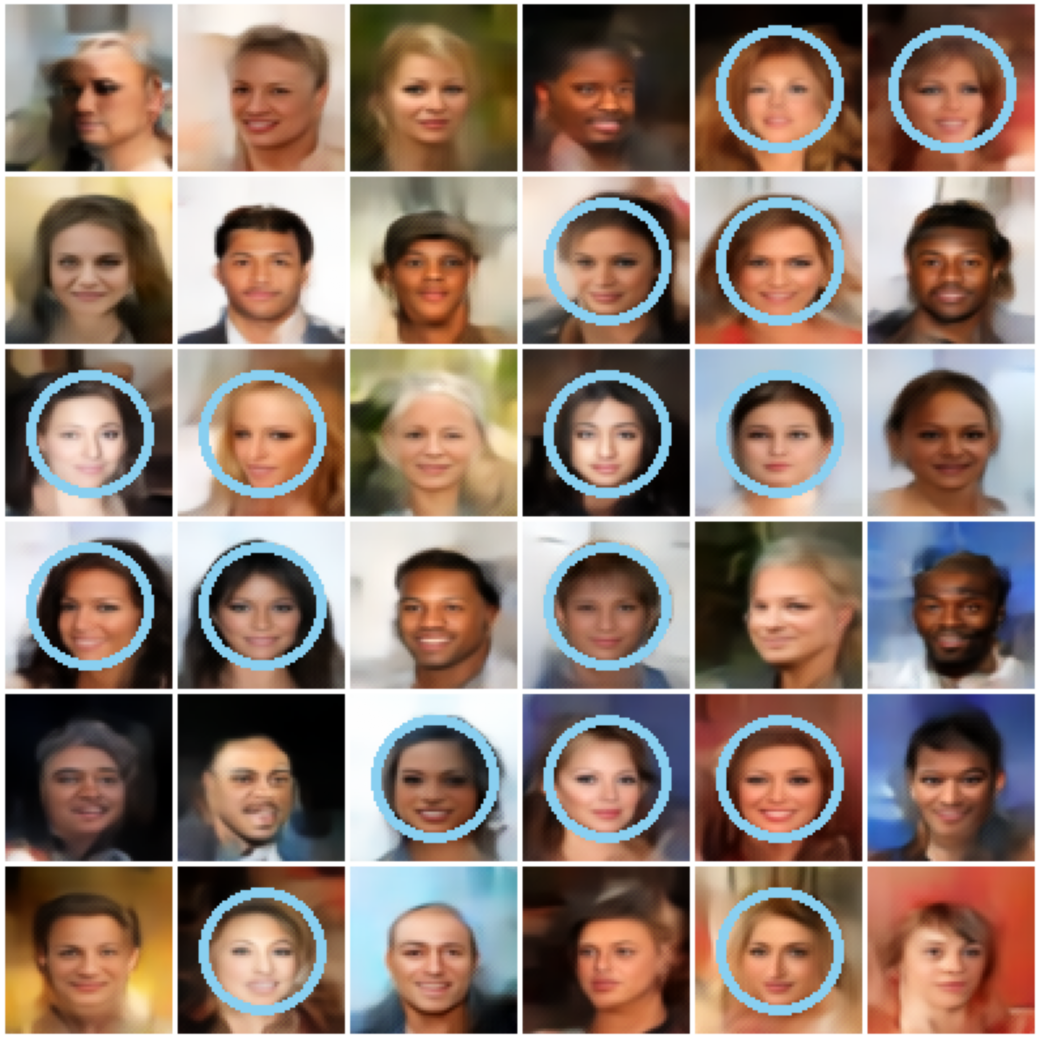}
        \caption{}\label{fig:_celeb_gamma100}
    \end{subfigure}\hfill\null
    \caption{
    (\subref{fig:_celeb_confound}) Confounded CelebA samples $P(I \mid X=1)$;
    (\subref{fig:_celeb_unconfound}) unconfounded ground truth $P(I \mid \doX{1})$;
    (\subref{fig:_celeb_VAE}) VAE;
    (\subref{fig:_celeb_ANCM}) ANCM;
    (\subref{fig:_celeb_gamma0}) \cauvade{} at $\gamma=0$;
    (\subref{fig:_celeb_gamma1}) \cauvade{} at $\gamma=1$;
    (\subref{fig:_celeb_gamma10}) \cauvade{} at $\gamma=10$;
    (\subref{fig:_celeb_gamma100}) \cauvade{} at $\gamma=100$.
    Blue circles mark images with $Y=1$ (Heavy Makeup).}
    \label{fig:_celeba}
    \vspace{-0.1in}
\end{figure}

\textbf{Confounded CelebA.}
On CelebA (202,599 images), we isolate the causal effect of ``Young'' ($X$) on ``Heavy Makeup'' ($Y$) with ``Attractive'' ($U$) as the unobserved confounder. Figs.~\ref{fig:_celeb_confound}--\ref{fig:_celeb_unconfound} show confounded and unconfounded samples for $X=1$ (full population in Appendix \ref{app:setup}): the dataset contains disproportionately many attractive seniors who wear heavy makeup. VAE and ANCM trained on the confounded faces (3 seeds) collapse to a single point within the feasible region (Fig.~\ref{fig:_vade_gamma_celeba}). The $\gamma$-sweep instead expands coverage of $P(Y \mid \text{do}(X))$ across Manski's bound (Fig.~\ref{fig:_vade_gamma_celeba}, Table~\ref{tab:cauvade-results}). Beyond satisfying these constraints, samples under $\gamma \in \{1, 10\}$ (Figs.~\ref{fig:_celeb_gamma1}--\ref{fig:_celeb_gamma10}) visually align with the unconfounded baseline despite training only on confounded data, demonstrating that \cauvade{} disentangles causal mechanisms from spurious influences in image spaces.

\begin{wraptable}[13]{r}{0.45\textwidth}
\vspace{-0\baselineskip}
    \caption{FID measured against unconfounded MIMIC-CXR-JPG. Lower is better.}
    \label{tab:fid}
    \centering
    \small
\begin{tabular}{lccc}
    \toprule
    & VAE & ANCM & \cauvade{} \\
    \midrule
    FID & 354.97 & 407.73 & $\mathbf{284.57}$ \\
    \bottomrule
\end{tabular}
\captionsetup{name=Figure}
    \centering
    \begin{subfigure}[t]{0.99\linewidth}\centering
        \includegraphics[width=\linewidth]{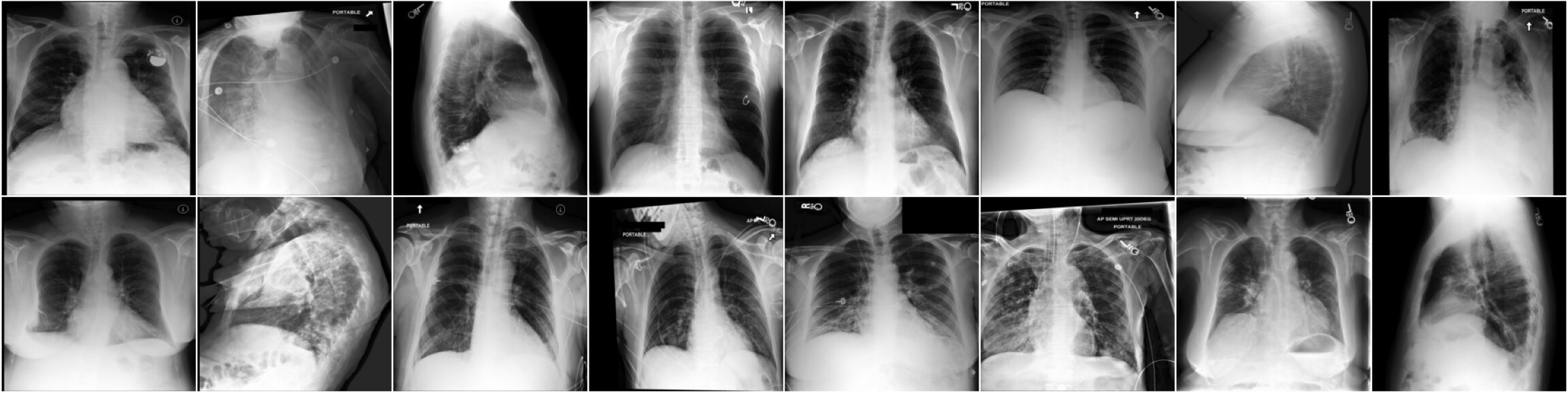}
    \end{subfigure}\hfill
    \renewcommand{\thetable}{\thefigure}
    \refstepcounter{figure}
    \addtocounter{table}{-1}
    \label{fig:_mimic_example}
    \vspace{-\baselineskip}
    \caption{Confounded MIMIC-CXR-JPG samples: $X=0$ (top row, no Pneumonia) and $X=1$ (bottom row, Pneumonia).}
\end{wraptable}
\textbf{Confounded MIMIC-CXR-JPG.}
We evaluate \cauvade{} on MIMIC-CXR-JPG \citep{johnson2024mimiccxr, johnson2019mimic, PhysioNet} (377,110 radiographs; Fig.~\ref{fig:_mimic_example}), modeling the causal effect of Pneumonia ($X$) on Lung Opacity ($Y$) with patient lying position ($U$) as an unobserved confounder: supine (AP) views are acquired primarily on bedridden, acutely ill patients more likely to have pneumonia, and the supine geometry itself broadens cardiac silhouette and increases apparent opacity, so generators trained on the raw correlation overstate the pneumonia--opacity link. We construct a position-balanced reference cohort and report FID between samples drawn under $\dox$ and this reference (Table~\ref{tab:fid}, averaged across the $\gamma$-sweep). \cauvade{} achieves $284.57$, an absolute improvement of $70.4$ over VAE ($354.97$) and $123.16$ over ANCM ($407.73$). Both baselines fail under hidden confounding, but in opposite ways: VAE inherits the position-induced bias directly from the training distribution, while ANCM commits to a single causal explanation that, with no auxiliary supervision available, lands further from the unconfounded reference than even the uncorrected VAE. \cauvade{} avoids both failure modes by producing a \emph{family} of corrected generators whose average remains faithful to the position-balanced reference.
\vspace{-0.5\baselineskip}
\section{Conclusion}\label{sec:conclusion}
\vspace{-0.5\baselineskip}
We studied deconfounded image generation under a hidden confounder, where the data does not point-identify the interventional distribution and a generator that returns a single $P_x(I)$ silently picks one element of a set the data cannot distinguish. The right object to model in this regime is the feasible region itself. We made this practical in two steps: showing that the class of augmented SCMs compatible with a given diagram is densely approximated in $W_1$ by canonical augmented SCMs with a discrete bounded-support confounder, and instantiating this class as a Gaussian-mixture VAE in which an entropy regularizer on the cluster posterior, swept over its weight $\gamma$, traces a family of generators that fit the training data to comparable likelihood while inducing distinct interventional distributions. Across Color-MNIST, CelebA, and MIMIC-CXR-JPG, the sweep recovers a curve through the feasible region that VAE and ANCM collapse to a single point inside, and yields a tighter average FID against an unconfounded reference. 


\section*{Acknowledgments}
This work was supported in part by the Build on Trainium Amazon Research Award. The authors gratefully acknowledge the provider for the computing infrastructure that enabled the experiments in this paper.

\bibliographystyle{abbrv}
\bibliography{add_ref}

@inproceedings{kingma2022vae,
  author    = {Kingma, Diederik P. and Welling, Max},
  title     = {Auto-Encoding Variational {B}ayes},
  booktitle = {2nd International Conference on Learning Representations (ICLR)},
  year      = {2014},
  url       = {https://arxiv.org/abs/1312.6114}
}

@article{goodfellow2014gan,
  author    = {Ian J. Goodfellow and Jean Pouget-Abadie and Mehdi Mirza and Bing Xu and David Warde-Farley and Sherjil Ozair and Aaron Courville and Yoshua Bengio},
  title     = {Generative Adversarial Networks},
  journal   = {arXiv preprint arXiv:1406.2661},
  year      = {2014},
  url       = {https://arxiv.org/abs/1406.2661}
}

@article{ho2020ddpm,
  author    = {Jonathan Ho and Ajay Jain and Pieter Abbeel},
  title     = {Denoising Diffusion Probabilistic Models},
  journal   = {arXiv preprint arXiv:2006.11239},
  year      = {2020},
  url       = {https://arxiv.org/abs/2006.11239}
}

@inproceedings{higgins2017betavae,
  author    = {Irina Higgins and Lo{\"i}c Matthey and Arka Pal and Christopher Burgess and Xavier Glorot and Matthew Botvinick and Shakir Mohamed and Alexander Lerchner},
  title     = {{$\beta$-VAE}: Learning Basic Visual Concepts with a Constrained Variational Framework},
  booktitle = {International Conference on Learning Representations (ICLR)},
  year      = {2017},
  url       = {https://openreview.net/forum?id=Sy2fzU9gl}
}

@inproceedings{jiang2017vade,
  author    = {Zhuxi Jiang and Yin Zheng and Huachun Tan and Bangsheng Tang and Hanning Zhou},
  title     = {Variational Deep Embedding: An Unsupervised and Generative Approach to Clustering},
  booktitle = {Proceedings of the 26th International Joint Conference on Artificial Intelligence (IJCAI)},
  pages     = {1965--1972},
  year      = {2017},
  doi       = {10.24963/ijcai.2017/273}
}

@article{hyvarinen1999nonlinearica,
  author  = {Hyv{\"a}rinen, Aapo and Pajunen, Petteri},
  title   = {Nonlinear Independent Component Analysis: Existence and Uniqueness Results},
  journal = {Neural Networks},
  volume  = {12},
  number  = {3},
  pages   = {429--439},
  year    = {1999},
  doi     = {10.1016/S0893-6080(98)00140-3}
}

@inproceedings{khemakhem2020vae,
  author    = {Khemakhem, Ilyes and Kingma, Diederik P. and Monti, Ricardo Pio and Hyv{\"a}rinen, Aapo},
  title     = {Variational Autoencoders and Nonlinear {ICA}: A Unifying Framework},
  booktitle = {Proceedings of the Twenty Third International Conference on Artificial Intelligence and Statistics (AISTATS)},
  series    = {Proceedings of Machine Learning Research},
  volume    = {108},
  pages     = {2207--2217},
  publisher = {PMLR},
  year      = {2020}
}

@inproceedings{locatello2019disentanglement,
  author    = {Locatello, Francesco and Bauer, Stefan and Lucic, Mario and R{\"a}tsch, Gunnar and Gelly, Sylvain and Sch{\"o}lkopf, Bernhard and Bachem, Olivier},
  title     = {Challenging Common Assumptions in the Unsupervised Learning of Disentangled Representations},
  booktitle = {Proceedings of the 36th International Conference on Machine Learning (ICML)},
  series    = {Proceedings of Machine Learning Research},
  volume    = {97},
  pages     = {4114--4124},
  publisher = {PMLR},
  year      = {2019}
}

@book{pearl:2k,
  author    = {Pearl, Judea},
  title     = {Causality: Models, Reasoning, and Inference},
  publisher = {Cambridge University Press},
  address   = {New York},
  year      = {2000},
  isbn      = {978-0-521-77362-1}
}

@incollection{bareinboim2020pearl,
  author    = {Bareinboim, Elias and Correa, Juan D. and Ibeling, Duligur and Icard, Thomas},
  title     = {On {P}earl's Hierarchy and the Foundations of Causal Inference},
  booktitle = {Probabilistic and Causal Inference: The Works of Judea Pearl},
  editor    = {Geffner, Hector and Dechter, Rina and Halpern, Joseph Y.},
  series    = {ACM Books},
  publisher = {Association for Computing Machinery},
  address   = {New York, NY, USA},
  pages     = {507--556},
  year      = {2022},
  doi       = {10.1145/3501714.3501743}
}

@article{balke:pea97,
  author    = {Balke, Alexander and Pearl, Judea},
  title     = {Bounds on Treatment Effects from Studies with Imperfect Compliance},
  journal   = {Journal of the American Statistical Association},
  volume    = {92},
  number    = {439},
  pages     = {1171--1176},
  year      = {1997},
  doi       = {10.1080/01621459.1997.10474074}
}

@article{manski1990nonparametric,
  title     = {Nonparametric Bounds on Treatment Effects},
  author    = {Manski, Charles F.},
  journal   = {The American Economic Review},
  volume    = {80},
  number    = {2},
  pages     = {319--323},
  year      = {1990},
  publisher = {JSTOR}
}

@article{manski1998monotone,
  author  = {Manski, Charles F.},
  title   = {Monotone Treatment Response},
  journal = {Econometrica},
  volume  = {65},
  number  = {6},
  pages   = {1311--1334},
  year    = {1997},
  doi     = {10.2307/2171738}
}

@article{frangakis:rub02,
  author  = {Frangakis, Constantine E. and Rubin, Donald B.},
  title   = {Principal Stratification in Causal Inference},
  journal = {Biometrics},
  volume  = {58},
  number  = {1},
  pages   = {21--29},
  year    = {2002},
  doi     = {10.1111/j.0006-341X.2002.00021.x}
}

@inproceedings{zhang2022partial,
  author    = {Zhang, Junzhe and Tian, Jin and Bareinboim, Elias},
  title     = {Partial Counterfactual Identification from Observational and Experimental Data},
  booktitle = {Proceedings of the 39th International Conference on Machine Learning (ICML)},
  series    = {Proceedings of Machine Learning Research},
  volume    = {162},
  pages     = {26548--26558},
  publisher = {PMLR},
  year      = {2022},
  url       = {https://proceedings.mlr.press/v162/zhang22ab.html}
}

@article{duarte2024automated,
  author    = {Duarte, Guilherme and Finkelstein, Noam and Knox, Dean and Mummolo, Jonathan and Shpitser, Ilya},
  title     = {An Automated Approach to Causal Inference in Discrete Settings},
  journal   = {Journal of the American Statistical Association},
  volume    = {119},
  number    = {547},
  pages     = {1778--1793},
  year      = {2024},
  publisher = {Taylor \& Francis},
  doi       = {10.1080/01621459.2023.2216909}
}

@article{scholkopf2021crl,
  author  = {Sch{\"o}lkopf, Bernhard and Locatello, Francesco and Bauer, Stefan and Ke, Nan Rosemary and Kalchbrenner, Nal and Goyal, Anirudh and Bengio, Yoshua},
  title   = {Toward Causal Representation Learning},
  journal = {Proceedings of the IEEE},
  volume  = {109},
  number  = {5},
  pages   = {612--634},
  year    = {2021},
  doi     = {10.1109/JPROC.2021.3058954}
}

@inproceedings{xia2021causal,
  author    = {Xia, Kevin and Lee, Kai-Zhan and Bengio, Yoshua and Bareinboim, Elias},
  title     = {The Causal-Neural Connection: Expressiveness, Learnability, and Inference},
  booktitle = {Advances in Neural Information Processing Systems 34 (NeurIPS 2021)},
  year      = {2021},
  url       = {https://proceedings.neurips.cc/paper/2021/hash/5989add1703e4b0480f75e2390739f34-Abstract.html}
}

@inproceedings{xia2022neural,
  author    = {Xia, Kevin and Pan, Yushu and Bareinboim, Elias},
  title     = {Neural Causal Models for Counterfactual Identification and Estimation},
  booktitle = {The Eleventh International Conference on Learning Representations (ICLR)},
  year      = {2023}
}

@inproceedings{nasr2023counterfactual,
  author    = {Nasr-Esfahany, Arash and Alizadeh, Mohammad and Shah, Devavrat},
  title     = {Counterfactual Identifiability of Bijective Causal Models},
  booktitle = {Proceedings of the 40th International Conference on Machine Learning (ICML)},
  series    = {Proceedings of Machine Learning Research},
  volume    = {202},
  pages     = {25733--25754},
  publisher = {PMLR},
  year      = {2023}
}

@inproceedings{padh2023stochastic,
  author    = {Padh, Kirtan and Zeitler, Jakob and Watson, David and Kusner, Matt and Silva, Ricardo and Kilbertus, Niki},
  title     = {Stochastic Causal Programming for Bounding Treatment Effects},
  booktitle = {Proceedings of the Second Conference on Causal Learning and Reasoning (CLeaR)},
  series    = {Proceedings of Machine Learning Research},
  volume    = {213},
  pages     = {142--176},
  publisher = {PMLR},
  year      = {2023},
  url       = {https://proceedings.mlr.press/v213/padh23a.html}
}

@inproceedings{pan2024counterfactualimageediting,
  author    = {Pan, Yushu and Bareinboim, Elias},
  title     = {Counterfactual Image Editing},
  booktitle = {Proceedings of the 41st International Conference on Machine Learning (ICML)},
  series    = {Proceedings of Machine Learning Research},
  volume    = {235},
  pages     = {39087--39101},
  publisher = {PMLR},
  year      = {2024},
  url       = {https://proceedings.mlr.press/v235/pan24a.html}
}

@inproceedings{kocaoglu2018causalgan,
  author    = {Kocaoglu, Murat and Snyder, Christopher and Dimakis, Alexandros G. and Vishwanath, Sriram},
  title     = {{CausalGAN}: Learning Causal Implicit Generative Models with Adversarial Training},
  booktitle = {International Conference on Learning Representations (ICLR)},
  year      = {2018},
  url       = {https://openreview.net/forum?id=BJE-4xW0W}
}

@inproceedings{yang2021causalvae,
  author    = {Yang, Mengyue and Liu, Furui and Chen, Zhitang and Shen, Xinwei and Hao, Jianye and Wang, Jun},
  title     = {{CausalVAE}: Disentangled Representation Learning via Neural Structural Causal Models},
  booktitle = {Proceedings of the IEEE/CVF Conference on Computer Vision and Pattern Recognition (CVPR)},
  pages     = {9593--9602},
  year      = {2021}
}

@inproceedings{sanchez2022diffscm,
  author    = {Sanchez, Pedro and Tsaftaris, Sotirios A.},
  title     = {Diffusion Causal Models for Counterfactual Estimation},
  booktitle = {Proceedings of the First Conference on Causal Learning and Reasoning (CLeaR)},
  series    = {Proceedings of Machine Learning Research},
  volume    = {177},
  pages     = {647--668},
  publisher = {PMLR},
  year      = {2022},
  url       = {https://proceedings.mlr.press/v177/sanchez22a.html}
}

@inproceedings{monteiro2023deepcounterfactuals,
  author    = {Monteiro, Miguel and De Sousa Ribeiro, Fabio and Pawlowski, Nick and Castro, Daniel C. and Glocker, Ben},
  title     = {Measuring Axiomatic Soundness of Counterfactual Image Models},
  booktitle = {The Eleventh International Conference on Learning Representations (ICLR)},
  year      = {2023},
  url       = {https://openreview.net/forum?id=lZOUQQvwI3q}
}

@inproceedings{javaloy2023normalizing,
  author    = {Javaloy, Adri{\'a}n and S{\'a}nchez-Mart{\'\i}n, Pablo and Valera, Isabel},
  title     = {Causal Normalizing Flows: From Theory to Practice},
  booktitle = {Advances in Neural Information Processing Systems 36 (NeurIPS 2023)},
  pages     = {58833--58864},
  year      = {2023}
}

@inproceedings{khemakhem2021causalautoregressive,
  author    = {Khemakhem, Ilyes and Monti, Ricardo and Leech, Robert and Hyv{\"a}rinen, Aapo},
  title     = {Causal Autoregressive Flows},
  booktitle = {Proceedings of the 24th International Conference on Artificial Intelligence and Statistics (AISTATS)},
  series    = {Proceedings of Machine Learning Research},
  volume    = {130},
  pages     = {3520--3528},
  publisher = {PMLR},
  year      = {2021}
}

@inproceedings{pawlowski2020deepscm,
  author    = {Pawlowski, Nick and Castro, Daniel C. and Glocker, Ben},
  title     = {Deep Structural Causal Models for Tractable Counterfactual Inference},
  booktitle = {Advances in Neural Information Processing Systems (NeurIPS)},
  volume    = {33},
  pages     = {857--869},
  year      = {2020},
  url       = {https://proceedings.neurips.cc/paper/2020/hash/0987b8b338d6c90bbedd8631bc499221-Abstract.html}
}

@book{rosenbaum2002sensitivity,
  author    = {Rosenbaum, Paul R.},
  title     = {Observational Studies},
  edition   = {2},
  publisher = {Springer},
  address   = {New York},
  year      = {2002}
}

@article{vanderweele2017evalue,
  author    = {VanderWeele, Tyler J. and Ding, Peng},
  title     = {Sensitivity Analysis in Observational Research: Introducing the {E}-value},
  journal   = {Annals of Internal Medicine},
  volume    = {167},
  number    = {4},
  pages     = {268--274},
  year      = {2017},
  doi       = {10.7326/M16-2607}
}

@inproceedings{hartford2017deepiv,
  author    = {Hartford, Jason and Lewis, Greg and Leyton-Brown, Kevin and Taddy, Matt},
  title     = {Deep {IV}: A Flexible Approach for Counterfactual Prediction},
  booktitle = {Proceedings of the 34th International Conference on Machine Learning (ICML)},
  series    = {Proceedings of Machine Learning Research},
  volume    = {70},
  pages     = {1414--1423},
  publisher = {PMLR},
  year      = {2017}
}

@inproceedings{xu2021dfiv,
  author    = {Xu, Liyuan and Chen, Yutian and Srinivasan, Siddarth and de Freitas, Nando and Doucet, Arnaud and Gretton, Arthur},
  title     = {Learning Deep Features in Instrumental Variable Regression},
  booktitle = {International Conference on Learning Representations (ICLR)},
  year      = {2021},
  url       = {https://openreview.net/forum?id=sy4Kg_ZQmS7}
}

@article{tchetgen2024proximal,
  author    = {Tchetgen Tchetgen, Eric J. and Ying, Andrew and Cui, Yifan and Shi, Xu and Miao, Wang},
  title     = {An Introduction to Proximal Causal Inference},
  journal   = {Statistical Science},
  volume    = {39},
  number    = {3},
  pages     = {375--390},
  year      = {2024},
  doi       = {10.1214/23-STS911}
}

@article{miao2018proxy,
  author    = {Miao, Wang and Geng, Zhi and Tchetgen Tchetgen, Eric J.},
  title     = {Identifying Causal Effects with Proxy Variables of an Unmeasured Confounder},
  journal   = {Biometrika},
  volume    = {105},
  number    = {4},
  pages     = {987--993},
  year      = {2018},
  publisher = {Oxford University Press}
}

@inproceedings{liu2015celeba,
  author    = {Liu, Ziwei and Luo, Ping and Wang, Xiaogang and Tang, Xiaoou},
  title     = {Deep Learning Face Attributes in the Wild},
  booktitle = {Proceedings of the IEEE International Conference on Computer Vision (ICCV)},
  month     = {December},
  year      = {2015}
}

@article{johnson2024mimiccxr,
  author  = {Johnson, Alistair E. W. and Pollard, Tom J. and Greenbaum, Nathaniel R. and Lungren, Matthew P. and Deng, Chih-ying and Peng, Yifan and Lu, Zhiyong and Mark, Roger G. and Berkowitz, Seth J. and Horng, Steven},
  title   = {{MIMIC-CXR-JPG} --- Chest Radiographs with Structured Labels},
  journal = {PhysioNet},
  year    = {2024},
  month   = {March},
  note    = {Version 2.1.0},
  doi     = {10.13026/jsn5-t979},
  url     = {https://doi.org/10.13026/jsn5-t979}
}

@misc{johnson2019mimic,
      title         = {{MIMIC-CXR-JPG}, a Large Publicly Available Database of Labeled Chest Radiographs}, 
      author        = {Alistair E. W. Johnson and Tom J. Pollard and Nathaniel R. Greenbaum and Matthew P. Lungren and Chih-ying Deng and Yifan Peng and Zhiyong Lu and Roger G. Mark and Seth J. Berkowitz and Steven Horng},
      year          = {2019},
      eprint        = {1901.07042},
      archivePrefix = {arXiv},
      primaryClass  = {cs.CV},
      url           = {https://arxiv.org/abs/1901.07042}
}

@article{PhysioNet,
  author  = {Goldberger, A. L. and Amaral, L. A. N. and Glass, L. and
             Hausdorff, J. M. and Ivanov, P. Ch. and Mark, R. G. and
             Mietus, J. E. and Moody, G. B. and Peng, C.-K. and Stanley, H. E.},
  title   = {{PhysioBank, PhysioToolkit, and PhysioNet}: Components of a New
             Research Resource for Complex Physiologic Signals},
  journal = {Circulation},
  year    = {2000},
  month   = {June},
  volume  = {101},
  number  = {23},
  pages   = {e215--e220},
  doi     = {10.1161/01.CIR.101.23.e215},
  note    = {PMID: 10851218}
}

@book{villani2009optimal,
  author    = {Villani, C{\'e}dric},
  title     = {Optimal Transport: Old and New},
  series    = {Grundlehren der mathematischen Wissenschaften},
  volume    = {338},
  publisher = {Springer},
  address   = {Berlin, Heidelberg},
  year      = {2009},
  doi       = {10.1007/978-3-540-71050-9},
  isbn      = {978-3-540-71049-3}
}

@misc{loshchilov2019decoupledweightdecayregularization,
      title         = {Decoupled Weight Decay Regularization}, 
      author        = {Ilya Loshchilov and Frank Hutter},
      year          = {2019},
      eprint        = {1711.05101},
      archivePrefix = {arXiv},
      primaryClass  = {cs.LG},
      url           = {https://arxiv.org/abs/1711.05101}
}


\appendix

\section{Related Work}\label{app:related}

\paragraph{Causal generative models.} A growing body of work injects causal structure into deep generative models. CausalGAN \citep{kocaoglu2018causalgan} and CausalVAE \citep{yang2021causalvae} learn generators whose latent variables follow a user-supplied DAG, enabling do-style interventions on disentangled factors. DiffSCM \citep{sanchez2022diffscm} and counterfactual diffusion variants \citep{monteiro2023deepcounterfactuals} extend this idea to denoising-diffusion backbones, performing counterfactual edits via abduction--action--prediction in latent diffusion space. Normalizing-flow SCMs \citep{javaloy2023normalizing,khemakhem2021causalautoregressive} parametrize the structural equations as invertible flows so that exogenous noise can be inferred and counterfactuals computed in closed form. All of these methods assume the SCM is identifiable from the data---typically requiring labeled intermediate factors, parametric noise restrictions, or a known causal ordering---and return a single $P_x(I)$ per query. \cauvade{} drops these identifiability assumptions and returns a family.

\paragraph{Causal representation learning.} A complementary line seeks latent variables aligned with causal generative factors~\citep{scholkopf2021crl,khemakhem2020vae,locatello2019disentanglement}, typically under structural assumptions---auxiliary supervision, specific noise models, or intervention data---sufficient for point identification of the latent factors. \cauvade{} differs in two ways: it targets the interventional pixel law $P_x(I)$ rather than the latent factors themselves, and it embraces non-identifiability rather than imposing assumptions to remove it.

\paragraph{Augmented Neural Causal Models.} Augmented SCMs and their neural realizations~\citep{pan2024counterfactualimageediting,xia2021causal,xia2022neural} extend the SCM formalism to images and conditional generation, providing the structural scaffold we build on in Sec.~\ref{sec:model}. These methods assume the full vector of semantic factors is labeled and that the resulting model is identifiable from the joint distribution; given those assumptions, they fit a single SCM and return a single $P_x(I)$ per query. Both assumptions fail in our setting, where only the treatment $X$ is observed and a hidden confounder leaves $P_x(I)$ partially identified. \cauvade{} drops both assumptions and, rather than committing to a single SCM consistent with the data, returns a family of CASCMs spanning the feasible region.

\paragraph{Counterfactual image editing.} Methods such as DeepSCM \citep{pawlowski2020deepscm} and counterfactual variants of diffusion and autoregressive flows \citep{sanchez2022diffscm,monteiro2023deepcounterfactuals,khemakhem2021causalautoregressive} produce image-level counterfactuals by editing along disentangled or label-aligned axes. Like ANCM, these approaches commit to a single counterfactual per query and rely on accurate attribute labels or a known causal ordering to anchor the causal axes. \cauvade{} produces an ensemble of counterfactual generators rather than a single edit, and recovers the relevant axes unsupervised through the cluster posterior.

\paragraph{Causal inference under hidden confounding.} A separate body of work targets the same underlying problem---hidden confounding in observational data---but at the level of treatment effects rather than image generation. Sensitivity analysis~\citep{rosenbaum2002sensitivity,vanderweele2017evalue}, instrumental-variable methods~\citep{hartford2017deepiv,xu2021dfiv}, and proxy-variable approaches~\citep{tchetgen2024proximal,miao2018proxy} either bound the unobserved bias or impose auxiliary variables that point-identify the effect. These methods return scalar effect estimates or interval bounds, not generative samples; \cauvade{} can be read as a generative analogue of sensitivity analysis, with $\gamma$ playing a role analogous to a sensitivity parameter and samples replacing scalar bounds.

\paragraph{Partial identification.} Partial identification has a long tradition in econometrics and statistics~\citep{balke:pea97,manski1998monotone,frangakis:rub02,zhang2022partial,duarte2024automated}, almost exclusively in tabular domains. The canonical-confounder construction we extend originates here: \cite{zhang2022partial} establish that for discrete observables, the unobserved confounder can be replaced by a categorical latent of bounded support without loss of generality, reducing partial identification to a polynomial program.

\paragraph{Neural partial identification.} A more recent thread uses generative models as function approximators for partially-identified causal effects~\citep{xia2021causal,xia2022neural,nasr2023counterfactual,padh2023stochastic}, typically through a min--max objective that searches the feasible region by alternately maximizing and minimizing the target functional subject to data-fit constraints. These methods operate over discrete tabular domains. \cauvade{} is, to our knowledge, the first to extend this program to images, replacing the explicit min--max with an entropy regularizer whose weight $\gamma$ traces the feasible region---a parametric procedure better matched to the smoothness of pixel-level distributions than the discrete saddle-point search used in tabular work.
\section{Derivation of the ELBO}\label{app:elbo}

This appendix derives \eqref{eq:elbo}. The notation matches Sec.~\ref{sec:cauvade}: the generative model is \eqref{eq:cauvade-gen}, the variational family factorizes as \(q_\phi(\mathbf{z},c\mid I)=q_\phi(\mathbf{z}\mid I)\,q_\phi(c\mid\mathbf{z})\) (Fig.~\ref{fig:cauvade_pgm}(b)), and the data is a confounded set \(\mathcal D=\{(x_i,I_i)\}_{i=1}^N\) in which only \(X\) is labeled. Recall that \(\mathbf{Z}=(Z,\mathcal E_Z,\mathcal E_X,\mathcal E_Y,\mathcal E_I)\) bundles the CASCM pre-treatment attribute with the four independent exogenous noises, and \(C\) is the discrete cluster identified with the canonical confounder \(U\).

The training target is the conditional log-evidence \(\log p_\theta(I\mid X)\), in which the unobserved post-treatment attribute \(Y\) is marginalized inside the image likelihood and \(X\) enters as a conditioning variable. We adopt the shorthand
\begin{equation}\label{eq:Y-marginal}
p_\theta(I\mid X,\mathbf{Z},C)\;\triangleq\;\int p_\theta(Y\mid X,\mathbf{Z},C)\,p_\theta(I\mid X,Y,\mathbf{Z},C)\,dY,
\end{equation}
which in practice is evaluated by Monte Carlo with a single sample \(y\sim p_\theta(Y\mid X,\mathbf{z},c)\) propagated through the image decoder.

We first derive the bound. Let \(r_\theta(\mathbf{z},c)\triangleq p_\theta(\mathbf{z}\mid c)\,P(c)\) denote the joint latent prior implied by \eqref{eq:cauvade-gen}. By the standard variational argument \citep{kingma2022vae}, for any encoder \(q_\phi(\mathbf{z},c\mid I)\) and any \(X\),
\begin{align*}
\log p_\theta(I\mid X)
&=\log\int\sum_{c=1}^{K}p_\theta(I\mid X,\mathbf{z},c)\,r_\theta(\mathbf{z},c)\,d\mathbf{z}\\
&=\log\mathbb E_{q_\phi(\mathbf{z},c\mid I)}\!\left[\frac{p_\theta(I\mid X,\mathbf{z},c)\,r_\theta(\mathbf{z},c)}{q_\phi(\mathbf{z},c\mid I)}\right]\\
&\ge\mathbb E_{q_\phi(\mathbf{z},c\mid I)}\!\left[\log p_\theta(I\mid X,\mathbf{z},c)+\log\frac{r_\theta(\mathbf{z},c)}{q_\phi(\mathbf{z},c\mid I)}\right]\\
&=\mathbb E_{q_\phi}\bigl[\log p_\theta(I\mid X,\mathbf{Z},C)\bigr]-D_{\mathrm{KL}}\bigl(q_\phi(\mathbf{Z},C\mid I)\,\|\,r_\theta(\mathbf{Z},C)\bigr).
\end{align*}
The inequality is Jensen's, applied to the concave \(\log\); equality holds when \(q_\phi(\mathbf{z},c\mid I)=p_\theta(\mathbf{z},c\mid I,X)\). Substituting \(r_\theta(\mathbf{z},c)=p_\theta(\mathbf{z}\mid c)\,P(c)\) yields \eqref{eq:elbo}.

The KL term decomposes in closed form. Applying the chain rule of KL divergence with the encoder factorization \(q_\phi(\mathbf{z},c\mid I)=q_\phi(\mathbf{z}\mid I)\,q_\phi(c\mid\mathbf{z})\) and the prior factorization \(r_\theta(\mathbf{z},c)=P(c)\,p_\theta(\mathbf{z}\mid c)\),
\begin{align}\label{eq:kl-decomposed}
\mathcal L_{\mathrm{KL}}
&=\mathbb E_{q_\phi(\mathbf{z}\mid I)}\!\left[\sum_{c=1}^{K}q_\phi(c\mid\mathbf{z})\bigl(\log q_\phi(\mathbf{z}\mid I)-\log p_\theta(\mathbf{z}\mid c)\bigr)\right]\\
&\quad+\mathbb E_{q_\phi(\mathbf{z}\mid I)}\!\bigl[D_{\mathrm{KL}}\bigl(q_\phi(C\mid\mathbf{z})\,\|\,P(C)\bigr)\bigr].\nonumber
\end{align}
Both encoder factors admit closed-form evaluation: \(q_\phi(\mathbf{z}\mid I)=\mathcal N(\mu_\phi(I),\sigma_\phi^2(I))\) is Gaussian by construction, the Gaussian log-density \(\log p_\theta(\mathbf{z}\mid c)=\log\mathcal N(\mathbf{z};\mu_c,\sigma_c^2)\) is available in closed form, and the second term is the KL between a categorical and the learned categorical prior \(P(C)\). As discussed in App.~\ref{app:algorithm}, this is what enables marginalizing \(C\) analytically during training without resorting to discrete sampling.

It remains to explain why the encoder uses only \(I\) and not \(X\). The optimal variational posterior for the bound above is \(p_\theta(\mathbf{Z},C\mid I,X)\), and dropping \(X\) from the encoder simply enlarges the family of distributions over which the bound is loose. The choice is deliberate: at sampling time the user supplies \(X\) via \(\dox\), and the latents \((\mathbf{Z},C)\) play the role of the unobserved exogenous structure of the CASCM (Def.~\ref{def:cascm}), which by construction is sampled from its joint prior under intervention rather than from any conditional. An \(X\)-dependent encoder would learn to mix in information from the observational conditional \(p(\mathbf{z},c\mid I,X)\), leaking the confounding pathway \(X\leftrightarrow(Y,Z)\) into the latents and biasing the interventional law \eqref{eq:Px_param} toward the observational conditional \(P^\theta(I\mid X{=}x)\). The factorization \(q_\phi(\mathbf{z},c\mid I)=q_\phi(\mathbf{z}\mid I)\,q_\phi(c\mid\mathbf{z})\) is the standard VaDE choice \citep{jiang2017vade} and reflects the generative ordering \(C\to\mathbf{Z}\to I\).

Two phenomena prevent \eqref{eq:elbo} from being a sufficient training objective on its own, and motivate the additional terms in the full \cauvade{} objective \eqref{eq:objective}. First, since \(X\) enters only through the decoder and is not predicted by any term, a decoder that ignores \(X\) entirely---reconstructing \(I\) from \((\mathbf{Z},C)\)---can attain the same ELBO value as a decoder that uses \(X\), provided \((\mathbf{Z},C)\) already encodes \(X\) implicitly through \(I\); the interventional distribution \eqref{eq:Px_param} of such a model is degenerate in \(x\). The treatment-consistency term \(\mathcal L_X\) in \eqref{eq:Lx} closes this loophole by requiring \((\mathbf{Z},C)\) to predict \(X\) through an auxiliary classifier head, forcing the latents to carry the treatment information that the structural equation for \(X\) in the CASCM encodes. Second, \eqref{eq:elbo} selects a single solution per local optimum and provides no mechanism for traversing the feasible region; the cluster-entropy regularizer \(\mathcal L_{\mathrm{ent}}\) in \eqref{eq:Lent}, weighted by \(\gamma\), supplies this mechanism. The full objective \eqref{eq:objective} combines \(\mathcal L_{\mathrm{ELBO}}=\mathcal L_{\mathrm{rec}}-\mathcal L_{\mathrm{KL}}\) (entering with sign-flipped weights \(\alpha,\beta>0\) in the minimization formulation) with \(\mathcal L_X\) and \(\mathcal L_{\mathrm{ent}}\).

When \(X\) and \(Y\) are absent and the dataset reduces to \(\{I_i\}_{i=1}^N\), \eqref{eq:elbo} collapses to the VaDE objective \citep{jiang2017vade}: the reconstruction term becomes \(\mathbb E_{q_\phi}[\log p_\theta(I\mid\mathbf{Z},C)]\) and the KL term retains the form \eqref{eq:kl-decomposed} with the same Gaussian-mixture prior \(r_\theta(\mathbf{z},c)=P(c)\,p_\theta(\mathbf{z}\mid c)\). \cauvade{} can therefore be read as VaDE augmented with (i)~the treatment variable \(X\) as an extra conditioning input to the decoder, (ii)~the post-treatment attribute \(Y\) marginalized inside the image likelihood as in \eqref{eq:Y-marginal}, and (iii)~the auxiliary terms \(\mathcal L_X\) and \(\mathcal L_{\mathrm{ent}}\) that enforce treatment-aware decoding and trace the feasible region.
\section{Training Algorithm}\label{app:algorithm}

This appendix gives pseudocode for training \cauvade{} at a single value of the entropy weight \(\gamma\) (Alg.~\ref{alg:cauvade-train}) and for the outer sweep that traces the feasible region (Alg.~\ref{alg:cauvade-sweep}). The notation matches Sec.~\ref{sec:cauvade}: \(\mathcal D=\{(x_i,I_i)\}_{i=1}^N\) is the confounded training set, \(K\) is the number of mixture components, and \((\theta,\phi,\psi)\) collect the decoder, encoder, and classifier-head parameters. As in Sec.~\ref{sec:cauvade}, \(\mathbf{Z}\) is the augmented continuous latent and \(C\) is the discrete cluster. The GMM parameters \(\{P(c),\mu_c,\sigma_c^2\}_{c=1}^K\) are part of \(\theta\) but are initialized separately by the VaDE-style pretraining phase \citep{jiang2017vade}.

Two design choices keep training tractable. First, gradients with respect to \(\phi\) flow through \(\mathbf{Z}\) via the standard Gaussian reparametrization \(\mathbf{z}=\mu_\phi(I)+\sigma_\phi(I)\odot\epsilon\) with \(\epsilon\sim\mathcal N(0,\mathbf I)\). Second, we marginalize the discrete cluster \(C\) analytically rather than sampling, following VaDE \citep{jiang2017vade}: the categorical posterior \(q_\phi(c\mid\mathbf{z})\) is bounded (\(K\) values) and computable in closed form, so every loss term that involves \(C\) is written as a finite sum
\[
\mathbb E_{q_\phi(C\mid\mathbf{z})}[g(C)]\;=\;\sum_{c=1}^{K}q_\phi(c\mid\mathbf{z})\,g(c),
\]
and gradients propagate through the probabilities \(q_\phi(c\mid\mathbf{z})\) directly. This applies to all four components \(\mathcal L_{\mathrm{rec}},\mathcal L_{\mathrm{KL}},\mathcal L_X,\mathcal L_{\mathrm{ent}}\): the reconstruction term becomes \(\sum_c q_\phi(c\mid\mathbf{z})\log p_\theta(I\mid X,\mathbf{z},c)\), and the KL and entropy terms decompose similarly into closed-form expressions involving \(q_\phi(c\mid\mathbf{z})\), \(P(c)\), and the cluster-conditional Gaussian parameters \((\mu_c,\sigma_c^2)\). Hard categorical samples of \(C\) are drawn only at inference time, when generating images under intervention.

Pretraining initializes \((\mu_c,\sigma_c^2)\) by fitting a Gaussian mixture model to the embeddings produced by a vanilla VAE pretrained for a few epochs on \(\mathcal D\) with \(\mathcal L_{\mathrm{rec}}+\mathcal L_{\mathrm{KL}}\) alone (i.e., \(\lambda=\gamma=0\), \(K=1\) in the pretraining stage). This avoids the well-known cluster-collapse issue of jointly training a Gaussian-mixture VAE from scratch and provides a common initialization across all values of \(\gamma\) in the sweep, so that differences in the learned cluster posterior reflect the regularizer rather than the initialization.

\begin{algorithm}[t]
\caption{\cauvade{} training at fixed $\gamma$}
\label{alg:cauvade-train}
\begin{algorithmic}[1]
\Require Dataset $\mathcal D=\{(x_i,I_i)\}_{i=1}^N$; cluster count $K$; weights $(\alpha,\beta,\lambda,\gamma)$; learning rate $\eta_{\mathrm{lr}}$; number of epochs $T$; batch size $B$; pretrained GMM parameters $\{\mu_c^{(0)},\sigma_c^{2,(0)},P^{(0)}(c)\}_{c=1}^K$.
\Ensure Trained parameters $(\theta,\phi,\psi)$.
\State Initialize encoder $\phi$, decoders $\theta\supseteq\{\mu_c,\sigma_c^2,P(c)\}_{c=1}^K$, classifier head $\psi$; load pretrained GMM parameters into $\theta$.
\For{epoch $=1,\dots,T$}
  \For{minibatch $\mathcal B\subset\mathcal D$ of size $B$}
    \State Encode $(\mu_\phi(I),\sigma_\phi(I))\gets\mathrm{Enc}_\phi(I)$ for each $I\in\mathcal B$.
    \State Sample $\mathbf{z}=\mu_\phi(I)+\sigma_\phi(I)\odot\epsilon$, $\epsilon\sim\mathcal N(0,\mathbf I)$. \Comment{Gaussian reparam.}
    \State Compute cluster posterior $q_\phi(c\mid\mathbf{z})\in\Delta^{K-1}$ for $c=1,\dots,K$. \Comment{Closed form.}
    \State Compute per-cluster decoder log-likelihoods $\ell_c\gets\log p_\theta(I\mid X,\mathbf{z},c)$ for $c=1,\dots,K$. \Comment{$K$ decoder passes.}
    \State Compute $\mathcal L_{\mathrm{rec}},\mathcal L_{\mathrm{KL}},\mathcal L_X,\mathcal L_{\mathrm{ent}}$ on $\mathcal B$ via \eqref{eq:elbo}, \eqref{eq:Lx}, \eqref{eq:Lent}, with $\mathbb E_{q_\phi(C\mid\mathbf{z})}[\,\cdot\,]=\sum_c q_\phi(c\mid\mathbf{z})\,[\,\cdot\,]$. \Comment{Analytic.}
    \State $\mathcal L\gets\alpha\,\mathcal L_{\mathrm{rec}}+\beta\,\mathcal L_{\mathrm{KL}}+\lambda\,\mathcal L_X+\gamma\,\mathcal L_{\mathrm{ent}}$. \Comment{Eq.~\eqref{eq:objective}.}
    \State $(\theta,\phi,\psi)\gets(\theta,\phi,\psi)-\eta_{\mathrm{lr}}\nabla_{(\theta,\phi,\psi)}\mathcal L$.
  \EndFor
\EndFor
\State \Return $(\theta,\phi,\psi)$.
\end{algorithmic}
\end{algorithm}

\begin{algorithm}[t]
\caption{\cauvade{} $\gamma$-sweep for the feasible region}
\label{alg:cauvade-sweep}
\begin{algorithmic}[1]
\Require Dataset $\mathcal D$; cluster count $K$; entropy-weight grid $\Gamma=\{\gamma_1,\dots,\gamma_M\}$; weights $(\alpha,\beta,\lambda)$; target intervention values $\{x^{(1)},\dots,x^{(L)}\}\subset\mathcal D_X$; sample size $S$.
\Ensure Family of interventional sample sets $\bigl\{\mathcal S_x^{(m)}\bigr\}_{m=1,\dots,M;\,x\in\{x^{(1)},\dots,x^{(L)}\}}$.
\State Pretrain GMM parameters $\{\mu_c^{(0)},\sigma_c^{2,(0)},P^{(0)}(c)\}_{c=1}^K$ on $\mathcal D$ as described above.
\For{$m=1,\dots,M$}
  \State $(\theta_m,\phi_m,\psi_m)\gets$ \Call{TrainCauVaDE}{$\mathcal D,K,\alpha,\beta,\lambda,\gamma_m,\{\mu_c^{(0)},\sigma_c^{2,(0)},P^{(0)}(c)\}$}. \Comment{Alg.~\ref{alg:cauvade-train}.}
  \For{$x\in\{x^{(1)},\dots,x^{(L)}\}$}
    \State $\mathcal S_x^{(m)}\gets\emptyset$.
    \For{$s=1,\dots,S$}
      \State Sample $c\sim P_{\theta_m}(C)$.
      \State Sample $\mathbf{z}\sim p_{\theta_m}(\mathbf{Z}\mid c)=\mathcal N(\mu_c,\sigma_c^2\mathbf I)$.
      \State Sample $y\sim p_{\theta_m}(Y\mid x,\mathbf{z},c)$.
      \State Sample $I\sim p_{\theta_m}(I\mid x,y,\mathbf{z},c)$.
      \State $\mathcal S_x^{(m)}\gets\mathcal S_x^{(m)}\cup\{I\}$.
    \EndFor
  \EndFor
\EndFor
\State \Return $\bigl\{\mathcal S_x^{(m)}\bigr\}$.
\end{algorithmic}
\end{algorithm}

The sample sets \(\mathcal S_x^{(m)}\) returned by Alg.~\ref{alg:cauvade-sweep} are empirical realizations of \(P_x^{\theta_m}(I)\) for each \(\gamma_m\). By Lem.~\ref{lem:sensitivity}, varying \(m\) traces a continuous curve through interventional distributions all consistent with the observational law \(P^\star(X,I)\) to within the decoder's approximation error; by Prop.~\ref{prop:coverage}, taking \(K\) large and \(\mathcal F_\theta\) expressive recovers the full feasible region in the \(W_1\) closure. The \(\gamma\)-sweep is therefore the practical mechanism through which the density and coverage results are exposed to the user.

\section{Proofs}\label{app:proofs}

This appendix collects the proofs of the three results stated in the main text. Theorem~\ref{thm:approx} shows that any ASCM compatible with Fig.~\ref{fig:ascm_obs} can be approximated, both observationally and interventionally, by a CASCM with a discrete confounder; this is the density result that licenses replacing the search over ASCMs by a search over the parametric \cauvade{} family. Lemma~\ref{lem:sensitivity} characterizes the entropy regularizer: as \(\gamma\) is swept, the average cluster-posterior entropy traces every value in \([0,\log K]\), and the resulting interventional law varies continuously and---when the feasible region is non-trivial---non-trivially. Proposition~\ref{prop:coverage} combines the two to conclude that the closure of the \cauvade{} family of interventional image distributions, taken over cluster counts and decoder configurations, coincides with the closure of the feasible region induced by all compatible ASCMs. The three proofs share a common Wasserstein-coupling toolkit, introduced once at the start of Sec.~\ref{app:proof} and reused thereafter.

\subsection{Proof of Theorem \ref{thm:approx}}\label{app:proof}

We use two standard facts repeatedly. The first is push-forward stability of the Wasserstein distance: if \(f:\mathcal A\to\mathcal B\) is uniformly continuous on bounded sets with modulus \(\omega\), then
\[
W_1(f_\#P,f_\#Q)\le \omega\bigl(W_1(P,Q)\bigr),
\]
with the linear bound \(L\cdot W_1(P,Q)\) when \(f\) is \(L\)-Lipschitz \citep[Theorem 5.10]{villani2009optimal}. The second is the conditional Knothe--Rosenblatt construction: any random variable \(W\) on \(\mathbb R^k\) with a continuous distribution can be written as \(g(\mathcal E)\) for some measurable \(g\) and \(\mathcal E\sim\mathrm{Unif}([0,1]^k)\); when \(W\mid V\) is continuous in \(W\) uniformly in \(V\), there exists \(g(V,\mathcal E)\) continuous in \(\mathcal E\) with \(g(v,\mathcal E)\stackrel{d}{=}W\mid V=v\) \citep[Chapter 1]{villani2009optimal}.

\begin{lemma}\label{lm:cascm_approx_semantic}
Let \(\widehat{\mathcal M}=\langle\{\*V,I\},\{\*U,\mathcal E_I\},\mathcal F,P(\*U),P(\mathcal E_I)\rangle\) be an ASCM compatible with Fig.~\ref{fig:ascm_obs}, with \(\*V=(X,Y,Z)\), continuous structural functions, and bounded image support \(\mathcal D_I\). For every \(\delta>0\) there exists a CASCM \(\widehat{\mathcal N}\), with discrete confounder \(U\) and shared image mechanism \(f_I^{\mathcal N}\equiv f_I\), such that
\[
W_1\bigl(P(\*V;\widehat{\mathcal M}),\,P(\*V;\widehat{\mathcal N})\bigr)\le \omega_V(\delta)+\delta D_V,
\]
and the same bound holds uniformly in \(x\) for \(P_x(\*V;\cdot)\), where \(\omega_V\) is a joint modulus of continuity for \((f_X,f_Y,f_Z)\) and \(D_V=\mathrm{diam}(\mathcal D_V)<\infty\).
\end{lemma}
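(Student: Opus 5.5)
The plan is to discretize the exogenous confounder $\*U$ of $\widehat{\mathcal M}$ by a finite partition, in the spirit of the canonical-confounder argument of \cite{zhang2022partial}. The residual within-cell variation is then pushed into independent noises, and the two resulting error sources are bounded separately: a continuity error, which produces the term $\omega_V(\delta)$, and a mass-leakage error, which produces $\delta D_V$.

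\textbf{Step 1 (compact truncation).} Since $P(\*U)$ is tight, choose a compact set $K_\delta$ with $P(\*U\notin K_\delta)\le\delta$. Partition $K_\delta$ into finitely many measurable cells $A_1,\dots,A_{d-1}$ of diameter at most $\delta$, and let $A_d$ be the complement of $K_\delta$. Set $U=j$ when $\*U\in A_j$, so that $P(U=j)=P(\*U\in A_j)$, and fix a representative point $u_j\in A_j$ for each cell.

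\textbf{Step 2 (canonical mechanisms).} On the cells with $j<d$, define
\[
f^{\mathcal N}_Z(j)=f_Z(u_j),\qquad f^{\mathcal N}_X(z,j)=f_X(z,u_j),\qquad f^{\mathcal N}_Y(x,z,j)=f_Y(x,z,u_j).
\]
These are continuous in the endogenous arguments and constant in the noises $\mathcal E_Z,\mathcal E_X,\mathcal E_Y$. On the leftover cell $A_d$, use the conditional Knothe--Rosenblatt construction quoted above to realize the law of $\*V$ given $\*U\in A_d$ through independent uniform noises; any continuous choice is acceptable here. Keep $f_I^{\mathcal N}\equiv f_I$ and $\mathcal E_I$ unchanged, so that (A1)--(A3) are inherited and $\widehat{\mathcal N}$ is a CASCM.

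\textbf{Step 3 (coupling).} Couple the two models by drawing a single $\*U$ and letting $U$ be its cell index. On $\{\*U\in A_j\}$ with $j<d$, propagate the structural equations in topological order $Z\to X\to Y$. Each variable differs from its counterpart by at most $\omega_V$ applied to $\mathrm{dist}(\*U,u_j)\le\delta$, once the error from the already-propagated parents is also accounted for. Folding this accumulated error into a single joint modulus $\omega_V$ for the composed map $(f_Z,f_X,f_Y)$ gives a per-sample error of $\omega_V(\delta)$. On $A_d$ the error is at most $D_V$, and this event has probability at most $\delta$. Taking expectations over the coupling bounds the transport cost by $\omega_V(\delta)+\delta D_V$, which bounds $W_1$.

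\textbf{Step 4 (interventional law).} Under $\dox$, the same coupling applies with $f_X$ replaced by the constant $x$ in both models. Since $x$ is a common input, the modulus bound holds with constants independent of $x$, which gives uniformity.

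\textbf{Main obstacle.} The main difficulty is Step 3. Uniform continuity of the $f$'s holds only on compact sets, so the joint modulus $\omega_V$ must be taken on $K_\delta\times\mathcal D_V$. Moreover, composing the moduli through the chain $Z\to X\to Y$ has to be absorbed into one function $\omega_V$; I would define $\omega_V$ as the modulus of the composed map $\*u\mapsto \*V(\*u)$ restricted to $K_\delta$. The boundedness of $\mathcal D_V$ is supplied by (A1) and (A3): $\mathcal D_V=h(\mathcal D_I)$ is the continuous image of a bounded set, so it is bounded provided $h$ is uniformly continuous, and this is the point I would check carefully.
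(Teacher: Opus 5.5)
Your argument is correct at the paper's level of rigor, and it follows the paper's skeleton. That skeleton is: truncation by tightness, $\delta$-cells plus a residual bin, a coupling through a common draw of $\mathbf U$, the split $\omega_V(\delta)+\delta D_V$, and the same coupling reused under $\mathrm{do}(X{=}x)$.

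Where you differ is the within-cell mechanism. The paper applies the conditional Knothe--Rosenblatt construction in every cell. It chooses $f^{\mathcal N}_Z(j,\mathcal E_Z)$, $f^{\mathcal N}_X(z,j,\mathcal E_X)$ and $f^{\mathcal N}_Y$ to reproduce the cell-conditional laws of $f_Z(\mathbf U)$, $f_X(z,\mathbf U)$ and $f_Y(x,z,\mathbf U)$, so that within-cell variation is carried by the independent noises, as the CASCM picture intends. You quantize instead. On cell $j$ you get $\mathbf V^{\mathcal N}=\mathbf V(u_j)$ exactly, so the per-sample error is literally the modulus of the composed map $u\mapsto\mathbf V(u)$ (resp.\ $(x,u)\mapsto\mathbf V_x(u)$).

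Your route gives an explicit coupling and mechanisms that are trivially continuous. You therefore never need the claim that Knothe--Rosenblatt maps can be taken continuous in the noise. That claim fails, for example, when a cell-conditional law has disconnected support, since a continuous map of a cube has connected image.

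Your route also avoids the paper's assertion that, given $U=j$, $\mathbf V^{\mathcal N}$ has the law of $\mathbf V^{\mathcal M}$ given $\mathbf U\in A_j$. That assertion does not hold in general: independent $\mathcal E_Z,\mathcal E_X,\mathcal E_Y$ cannot reproduce the dependence that the shared $\mathbf U$ induces inside a cell (take $f_Z(\mathbf U)=\mathbf U$, $f_X(z,\mathbf U)=\mathbf U$). The paper's bound survives only because every realization still lies in the image of the cell, which is exactly your estimate.

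The cost of your route is that your $\widehat{\mathcal N}$ ignores its noises and is a quantized model. That is harmless for $W_1$ but less faithful to the intended canonical form. By the same logic, your Knothe--Rosenblatt step on $A_d$ is unnecessary; a representative point works there too.

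Finally, the caveat you raise is shared with the paper. A modulus taken on $K_\delta$ depends on $\delta$. Either read $\omega_V$ as a global modulus, as the statement does, or choose the truncation mass and the mesh size independently so that the bound still vanishes.
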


\begin{proof}
Fix \(\delta>0\). By tightness, choose a compact \(K_{\*U}\subset\mathcal D_{\*U}\) with \(P(\*U\notin K_{\*U})<\delta\), and partition \(K_{\*U}\) into measurable cells \(A_1,\dots,A_d\) of diameter at most \(\delta\). Define the discrete confounder
\[
U'(\*U)=\sum_{j=1}^d j\cdot\mathbb I[\*U\in A_j]+(d{+}1)\cdot\mathbb I[\*U\notin K_{\*U}],
\]
so the residual bin \(j=d{+}1\) carries mass at most \(\delta\).

Let \(U\sim P(U')\) and let \(\mathcal E_Z,\mathcal E_X,\mathcal E_Y\) be independent uniform noises on cubes of appropriate dimension. Apply the conditional Knothe--Rosenblatt construction in turn to define
\[
f_Z^{\mathcal N}(j,\mathcal E_Z)\stackrel{d}{=}f_Z(\*U)\mid \*U\in A_j,
\quad
f_X^{\mathcal N}(z,j,\mathcal E_X)\stackrel{d}{=}f_X(z,\*U)\mid \*U\in A_j,
\]
and analogously for \(f_Y^{\mathcal N}\); the residual bin is handled by conditioning on \(\*U\notin K_{\*U}\). Set \(f_I^{\mathcal N}\equiv f_I\) and retain \(\mathcal E_I\). The resulting \(\widehat{\mathcal N}\) is a CASCM (Def.~\ref{def:cascm}): \(U\) is discrete, the noises are independent, and \(f_I^{\mathcal N}\) inherits invertibility from \(\widehat{\mathcal M}\).

We construct a coupling between \(\*V^{\mathcal M}\) and \(\*V^{\mathcal N}\). Sample \(\*U\sim P(\*U)\), set \(U=U'(\*U)\), and generate \(\*V^{\mathcal M}\) from the original mechanisms and \(\*V^{\mathcal N}\) from \(\mathcal F^{\mathcal N}\). By construction, conditional on \(U=j\), \(\*V^{\mathcal N}\) has the same law as \(\*V^{\mathcal M}\mid\*U\in A_j\), so they may be coupled within the cell by an optimal \(W_1\) transport. For \(j\le d\), uniform continuity of \((f_X,f_Y,f_Z)\) on \(A_j\) yields
\[
\mathbb E\bigl[\rho_V(\*V^{\mathcal M},\*V^{\mathcal N})\mid\*U\in A_j\bigr]\le \omega_V(\delta).
\]
On the residual event \(\{\*U\notin K_{\*U}\}\), bound \(\rho_V\le D_V\) trivially. Aggregating,
\[
W_1\bigl(P(\*V;\widehat{\mathcal M}),P(\*V;\widehat{\mathcal N})\bigr)\le \mathbb E[\rho_V(\*V^{\mathcal M},\*V^{\mathcal N})]\le \omega_V(\delta)+\delta D_V.
\]

Under the intervention \(X\gets x\), the mechanisms \(f_Y,f_Z\) and the noise distributions are unchanged. The same cell-wise coupling, applied with \(X\equiv x\) fixed, gives the identical bound for \(P_x(\*V;\cdot)\), uniformly in \(x\) since \(\omega_V\) and \(D_V\) do not depend on \(x\).
\end{proof}

\begin{proof}[Proof of Theorem \ref{thm:approx}]
Let \(\widehat{\mathcal N}\) be the CASCM produced by Lemma~\ref{lm:cascm_approx_semantic} for some \(\delta\) to be chosen. Since both models share the image mechanism \(f_I\) and the noise \(\mathcal E_I\), we couple \(\mathcal E_I\) to itself and \(\*V^{\mathcal M},\*V^{\mathcal N}\) as in the lemma, obtaining
\[
\mathbb E[\rho_I(I^{\mathcal M},I^{\mathcal N})]
=\mathbb E\bigl[\rho_I\bigl(f_I(\*V^{\mathcal M},\mathcal E_I),f_I(\*V^{\mathcal N},\mathcal E_I)\bigr)\bigr]
\le \omega_I\bigl(\omega_V(\delta)+\delta D_V\bigr)=:\eta(\delta),
\]
where the inequality uses uniform continuity of \(f_I\) on \(\mathcal D_V\times\mathrm{supp}(\mathcal E_I)\) and Jensen's inequality on the concave \(\omega_I\). Marginalizing onto \((I,X)\) yields
\[
W_1\bigl(P(I,X;\widehat{\mathcal M}),P(I,X;\widehat{\mathcal N})\bigr)\le \eta(\delta).
\]

The same argument applied under intervention gives, for every \(x\in\mathcal D_X\),
\[
W_1\bigl(P_x(I;\widehat{\mathcal M}),P_x(I;\widehat{\mathcal N})\bigr)\le \eta(\delta),
\]
and integrating over \(\mathcal D_X\) (which has finite measure by assumption),
\[
\int_{\mathcal D_X}W_1\bigl(P_x(I;\widehat{\mathcal M}),P_x(I;\widehat{\mathcal N})\bigr)\,dx
\le |\mathcal D_X|\cdot\eta(\delta).
\]
Since \(\eta(\delta)\to 0\) as \(\delta\to 0\), choosing \(\delta\) so that \(\max\{\eta(\delta),|\mathcal D_X|\eta(\delta)\}<\epsilon\) yields both \eqref{eq:approx-obs} and \eqref{eq:approx-int}.
\end{proof}

\subsection{Proof of Lemma \ref{lem:sensitivity}}\label{app:proof-lemma}

Write \(\mathcal L_0(\theta)=\alpha\mathcal L_{\mathrm{rec}}+\beta\mathcal L_{\mathrm{KL}}+\lambda\mathcal L_X\) for the unregularized objective and \(H(\phi)=\mathbb E_{P(I)}\mathcal H[q_\phi(c\mid I)]\) for the average cluster-posterior entropy, so the full objective is \(\mathcal L_0+\gamma H\). The argument below is for the population objective with universal \(\mathcal F_\theta\); finite-network behavior is empirical and discussed in Sec.~\ref{app:limitations}.

\begin{proof}
We first establish the bijection between \(\gamma\) and \(\eta\). The \(\gamma\)-regularized problem is the Lagrangian relaxation of
\begin{equation}\label{eq:constrained}
\max_\theta\mathcal L_0(\theta)\quad\text{subject to}\quad H(\phi)=\eta.
\end{equation}
Feasibility of \eqref{eq:constrained} on \([0,\log K]\) is straightforward: \(H\) is continuous in \(\phi\), equals \(0\) at any one-hot encoder and \(\log K\) at the uniform encoder, and the convex combination \(q^t=(1-t)q^{\mathrm{hot}}+tq^{\mathrm{unif}}\) attains every intermediate value. Such combinations are exact for mixture-of-encoder constructions and approximated arbitrarily well by sufficiently expressive amortization networks.

For the duality, fix \(\theta_{\setminus\phi}\) and consider the partial optimum
\[
\phi^\star(\gamma)=\arg\max_\phi\bigl[\mathcal L_0(\theta_{\setminus\phi},\phi)+\gamma H(\phi)\bigr].
\]
By the envelope theorem, \(\gamma\mapsto H(\phi^\star(\gamma))\) is non-decreasing and continuous wherever the optimum is unique. The boundary values are \(H(\phi^\star(0))=0\), since the unregularized optimum collapses clusters to maximize reconstruction, and \(\lim_{\gamma\to\infty}H(\phi^\star(\gamma))=\log K\). The intermediate value theorem then yields, for every \(\eta\in[0,\log K]\), some \(\gamma(\eta)\ge 0\) with \(H(\phi^\star(\gamma(\eta)))=\eta\). At plateaus of \(H\), any \(\gamma\) in the plateau interval realizes the same \(\eta\); the choice does not affect what follows.

We next bound the observational deterioration. Let \(\theta^\star\) denote the unregularized optimum and \(\theta(\eta)\) the constrained optimum at entropy \(\eta\). Set \(\delta_0=W_1(P^{\theta^\star}(X,I),P^\star(X,I))\), which vanishes as \(\mathcal F_\theta\) becomes universal by standard universality of GMM-VAEs under (A1). The Lagrangian inequality gives
\[
\mathcal L_0(\theta^\star)-\mathcal L_0(\theta(\eta))\le \gamma(\eta)\,\eta:
\]
constraining the entropy costs at most \(\gamma\eta\) in unregularized objective. Since \(\mathcal L_{\mathrm{rec}}\) controls \(W_1\) on the bounded support \(\mathcal D_I\) via standard ELBO/Pinsker bounds, there exists a constant \(C\) depending only on \(\mathcal D_I,\mathcal D_X\) such that
\[
W_1\bigl(P^{\theta(\eta)}(X,I),P^\star(X,I)\bigr)\le \delta_0+C\sqrt{\gamma(\eta)\,\eta}=:\delta(\eta).
\]
For \(\eta<\log K\), \(\gamma(\eta)\) is finite by the strict monotonicity established above, so \(\delta(\eta)\to 0\) as \(\mathcal F_\theta\) becomes universal; the endpoint \(\eta=\log K\) is reached as a limit.

It remains to establish continuity and non-degeneracy of \(\eta\mapsto P_x^{\theta(\eta)}(I)\). Continuity follows from a chain of continuous maps: \(\eta\mapsto\theta(\eta)\) by Berge's maximum theorem (where the constrained optimum is unique), \(\theta\mapsto(P_\theta(I\mid x,\mathbf z,c),P(c))\) trivially, and the latter to \(P_x^\theta(I)\) in \(W_1\) by (A1)--(A2).

Suppose, for contradiction, \(P_x^{\theta(\eta)}(I)\) were constant in \(\eta\). The cluster posterior \(q_{\phi(\eta)}\) varies non-trivially with \(\eta\), going from one-hot at \(\eta=0\) to uniform at \(\eta=\log K\), and Theorem~\ref{thm:approx} associates each posterior shape with a CASCM of distinct interventional law---unless \(\mathcal P^{\mathrm{ASCM}}_x(\delta)\) is a singleton, in which case \(P_x(I)\) is point-identified from \(P(X,I)\) and a constant trajectory is the correct answer. This contradicts the assumption that the feasible region contains more than one element.
\end{proof}

\subsection{Proof of Proposition \ref{prop:coverage}}\label{app:proof-coverage}

We restate the proposition. Fix the diagram in Fig.~\ref{fig:ascm_obs} and assume (A1)--(A3). For \(\delta\ge 0\), let
\begin{align*}
\mathcal P^{\mathrm{ASCM}}_x(\delta)
&=\Bigl\{P_x(I;\widehat{\mathcal M}):\widehat{\mathcal M}\text{ compatible with Fig.~\ref{fig:ascm_obs}},\;W_1\bigl(P(X,I;\widehat{\mathcal M}),P^\star(X,I)\bigr)\le\delta\Bigr\},\\
\mathcal P^{\,\cauvade}_x(\delta;K,\mathcal F_\theta)
&=\Bigl\{P_x^\theta(I):\theta\in\mathcal F_\theta,\,K\text{ clusters},\,W_1\bigl(P^\theta(X,I),P^\star(X,I)\bigr)\le\delta\Bigr\}.
\end{align*}
Assume \(\mathcal F_\theta\) is dense in continuous mechanisms: for any continuous \(g\) on a compact domain and any \(\eta>0\), there exists \(g_\theta\in\mathcal F_\theta\) with \(\sup\|g-g_\theta\|_\infty<\eta\). We show
\begin{equation}\label{eq:coverage}
\overline{\bigcup_{K\ge 1}\mathcal P^{\,\cauvade}_x(\delta;K,\mathcal F_\theta)}=\overline{\mathcal P^{\mathrm{ASCM}}_x(\delta)}
\end{equation}
in the \(W_1\) topology, for almost every \(x\in\mathcal D_X\) and every \(\delta\ge 0\).

\begin{lemma}\label{lm:perturbation}
Let \(T_1,T_2:\mathcal A\to\mathcal B\) be continuous with \(\sup\|T_1-T_2\|_\infty<\eta\), and let \(\mu\) be a probability measure on bounded \(\mathcal A\). Then
\[
W_1(T_{1\#}\mu,T_{2\#}\mu)\le \eta.
\]
\end{lemma}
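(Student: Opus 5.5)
The natural route is a coupling argument: push a single random variable through both maps and bound the transport cost by the pointwise discrepancy. By the Kantorovich (primal) formulation, $W_1(T_{1\#}\mu,T_{2\#}\mu)=\inf_\pi \mathbb E_\pi[\rho_{\mathcal B}(B_1,B_2)]$ over couplings $\pi$ of the two push-forwards. Any coupling gives an upper bound, so we choose the synchronous one: draw $A\sim\mu$ and set $(B_1,B_2)=(T_1(A),T_2(A))$. Its marginals are $T_{1\#}\mu$ and $T_{2\#}\mu$ by definition of push-forward, so it is admissible.

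Under this coupling,
\[
W_1(T_{1\#}\mu,T_{2\#}\mu)\le \mathbb E_{A\sim\mu}\bigl[\rho_{\mathcal B}\bigl(T_1(A),T_2(A)\bigr)\bigr]\le \sup_{a\in\mathcal A}\rho_{\mathcal B}\bigl(T_1(a),T_2(a)\bigr)=\sup\|T_1-T_2\|_\infty<\eta,
\]
where $\rho_{\mathcal B}$ is the norm-induced metric on $\mathcal B$ (the same metric in which $\|\cdot\|_\infty$ is measured). This gives the claimed bound, and in fact a strict one.

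The remaining work is measure-theoretic bookkeeping:
\begin{itemize}
\item The map $a\mapsto\rho_{\mathcal B}(T_1(a),T_2(a))$ is continuous, hence Borel, so the expectation is well defined.
\item Both push-forwards have finite first moment, so $W_1$ is finite. This holds because $\mathcal A$ is bounded and the $T_i$ are continuous; in our application $\mathcal B$ is the bounded image domain $\D_I$ under (A1), so the moments are automatically finite.
\end{itemize}

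The main subtlety is making sure the norm in $\sup\|T_1-T_2\|_\infty$ is the same one that defines $W_1$ on $\mathcal B$. If the two metrics differ, they are equivalent on $\mathcal B\subseteq\mathbb R^n$, and the bound holds up to the corresponding equivalence constant. I would fix the Euclidean metric throughout to avoid carrying that constant.
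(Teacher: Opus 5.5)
Your proposal is correct and matches the paper's proof: the paper's one-line argument uses the same synchronous coupling $(T_1(A),T_2(A))$ with $A\sim\mu$ and bounds the transport cost by the sup-norm gap. One small aside on your bookkeeping: boundedness of $\mathcal A$ plus continuity of the $T_i$ does not by itself guarantee finite first moments, since that needs compactness or uniform continuity. The point is harmless, though, because the coupling itself is a transport plan with cost below $\eta$, so $W_1$ is finite regardless.
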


\begin{proof}
Couple \(T_1(a)\) with \(T_2(a)\) for \(a\sim\mu\); the transport cost is at most \(\eta\).
\end{proof}

\begin{proof}[Proof of Proposition \ref{prop:coverage}]
We first prove the inclusion \(\mathcal P^{\,\cauvade}_x(\delta;K,\mathcal F_\theta)\subseteq\mathcal P^{\mathrm{ASCM}}_x(\delta)\). Every \cauvade{} configuration is itself an ASCM compatible with Fig.~\ref{fig:ascm_obs}: the cluster \(c\) plays the role of the unobserved confounder, the conditionals \(P_\theta(\mathbf z\mid c),P_\theta(X\mid \mathbf z,c),P_\theta(Y\mid X,\mathbf z,c),P_\theta(I\mid X,Y,\mathbf z,c)\) specify continuous structural mechanisms after reparameterization through standard noises, and the data-fit constraint is identical. The inclusion is therefore preserved under union over \(K\) and closure.

We next prove the reverse inclusion. Let \(\mu\in\mathcal P^{\mathrm{ASCM}}_x(\delta)\), witnessed by an ASCM \(\widehat{\mathcal M}\) with observational error at most \(\delta\) and \(P_x(I;\widehat{\mathcal M})=\mu\). We show that for every \(\eta>0\) there exist \(K=K(\eta)\) and \(\theta=\theta(\eta)\in\mathcal F_\theta\) such that
\begin{align*}
W_1\bigl(P^\theta(X,I),P^\star\bigr)&\le\delta+\eta,\\
W_1\bigl(P_x^\theta(I),\mu\bigr)&\le\eta\qquad\text{for almost every }x.
\end{align*}
Sending \(\eta\to 0\) and using outer regularity of the \(\delta\)-sublevel sets places \(\mu\) in the closure of \(\bigcup_K\mathcal P^{\,\cauvade}_x(\delta;K,\mathcal F_\theta)\).

By Theorem~\ref{thm:approx} at tolerance \(\eta/3\), there is a CASCM \(\widehat{\mathcal N}\) with discrete confounder \(U\in\{1,\dots,d(\eta)\}\) such that
\[
W_1\bigl(P(X,I;\widehat{\mathcal M}),P(X,I;\widehat{\mathcal N})\bigr)<\eta/3,
\quad
\int_{\mathcal D_X}W_1\bigl(P_x(I;\widehat{\mathcal M}),P_x(I;\widehat{\mathcal N})\bigr)\,dx<\eta/3.
\]
By Markov's inequality, the bad set \(\mathcal B_1=\{x:W_1(P_x(I;\widehat{\mathcal M}),P_x(I;\widehat{\mathcal N}))>\sqrt{\eta/3}\}\) has Lebesgue measure at most \(\sqrt{\eta/3}\,|\mathcal D_X|\), which vanishes as \(\eta\to 0\).

We now realize \(\widehat{\mathcal N}\) as a \cauvade{} configuration. Set \(K=d(\eta)\) and identify the categorical cluster \(c\) with \(U\), with \(P(c)=P(U)\). The cluster-conditional Gaussian \(P(\mathbf z\mid c)=\mathcal N(\mu_c,\sigma_c^2)\) reparameterizes the independent CASCM noises via the standard Gaussian-mixture latent representation; on the bounded support implied by (A1)--(A2), this is exact up to a transformation absorbed into the decoders. Applying the density assumption to each of \(f_Z,f_X,f_Y,f_I\), we obtain decoders \(f_\bullet^\theta\) with \(\sup\|f_\bullet-f_\bullet^\theta\|_\infty<\eta'\), where \(\eta'\) will be chosen below.

Lemma~\ref{lm:perturbation}, applied along the structural cascade, controls the propagation of these perturbations: a perturbation of size \(\eta'\) in \(f_Z\) becomes one of size \(L_{f_X}\eta'\) in \(f_X(f_Z(\cdot),\cdot)\), and so on, with all Lipschitz constants finite by (A1)--(A2). Therefore there exist constants \(C_{\mathrm{obs}},C_{\mathrm{int}}\) depending only on the moduli of continuity and support diameters such that
\[
W_1\bigl(P(X,I;\widehat{\mathcal N}),P^\theta(X,I)\bigr)\le C_{\mathrm{obs}}\eta',
\qquad
W_1\bigl(P_x(I;\widehat{\mathcal N}),P_x^\theta(I)\bigr)\le C_{\mathrm{int}}\eta'
\]
for every \(x\in\mathcal D_X\). Choose \(\eta'=\eta/[3\max(C_{\mathrm{obs}},C_{\mathrm{int}})]\), so both right-hand sides are at most \(\eta/3\).

Combining via the triangle inequality, for the observational law,
\[
W_1\bigl(P^\theta(X,I),P^\star\bigr)\le \tfrac{\eta}{3}+\tfrac{\eta}{3}+\delta\le \delta+\eta,
\]
and for the interventional law on \(x\notin\mathcal B_1\),
\[
W_1\bigl(P_x^\theta(I),\mu\bigr)\le \tfrac{\eta}{3}+\sqrt{\eta/3}\le \eta
\]
for sufficiently small \(\eta\). Since \(|\mathcal B_1|\to 0\) as \(\eta\to 0\), the interventional bound holds almost everywhere in the limit.

Taken together, these constructions show that for every \(\eta>0\),
\[
\mathcal P^{\mathrm{ASCM}}_x(\delta)\subseteq \bigcup_K\mathcal P^{\,\cauvade}_x(\delta+\eta;K,\mathcal F_\theta)+B_{W_1}(\eta)\quad\text{for a.e.\ }x,
\]
where \(B_{W_1}(\eta)\) denotes the \(W_1\)-ball of radius \(\eta\). Closing both sides and intersecting over \(\eta>0\) collapses the slack on both the data-fit tolerance and the \(W_1\)-neighborhood, yielding
\[
\overline{\mathcal P^{\mathrm{ASCM}}_x(\delta)}\subseteq \overline{\bigcup_K\mathcal P^{\,\cauvade}_x(\delta;K,\mathcal F_\theta)}.
\]
Combined with the first inclusion, this is \eqref{eq:coverage}.
\end{proof}
\section{Experimental Setup}\label{app:setup}

We evaluate \cauvade{} on three datasets that span a wide range of visual complexity: synthetic Color-MNIST at $28{\times}28$ resolution, CelebA face attributes at $64{\times}64$, and chest radiographs from MIMIC-CXR-JPG at $224{\times}224$. Each dataset is constructed in matched \emph{confounded} and \emph{unconfounded} variants that share the same marginals over $(X, Y)$ but differ in whether the unobserved factor $U$ is allowed to influence the treatment $X$. The unconfounded variant is held out at training time and used only as a reference distribution against which interventional samples are evaluated, allowing us to measure how closely each model's $P_x(I)$ approaches a ground-truth interventional distribution it never directly observes. Throughout, we follow the labeling assumption of Sec.~\ref{sec:model}: only the treatment $X$ is fully labeled, while the outcome $Y$, covariates $Z$, and confounder $U$ are not used as supervision signals, so no model in the comparison has access to $U$ at any stage. The remainder of this appendix details the dataset construction protocol (Appendices~\ref{app:cmnist}--\ref{app:mimic}), training and architectural hyperparameters (Appendix~\ref{app:hyperparameters}), and additional qualitative samples for each dataset.

\subsection{Dataset Construction Protocol}

For each dataset we target a specified joint distribution $P^*$ over the observed variables, with $U$ marginalized out, and obtain it either by filtering and resampling existing data or by synthetic generation. The two variants of each dataset differ only in the dependence structure of $X$ on $U$. The \textbf{confounded} variant generates $X$ conditional on $U$, opening a backdoor path $X \leftarrow U \rightarrow Y$ and producing the spurious observational correlation $P(Y \mid X) \neq P_x(Y)$ that we wish to expose. The \textbf{unconfounded} variant breaks this dependence by sampling $X$ independently of $U$, severing the backdoor path so that $P(Y \mid X) \approx P_x(Y)$. The next three subsections instantiate this protocol on Color-MNIST, CelebA, and MIMIC-CXR-JPG respectively.

\subsection{Confounded Color-MNIST}\label{app:cmnist}

Color-MNIST is built from three binary variables: treatment $X$, outcome $Y$, and unobserved confounder $U$, with $P(U{=}0)=0.6$ and $P(U{=}1)=0.4$. The outcome follows the XOR mechanism $Y = U \oplus X$, taking value $1$ whenever $U$ and $X$ disagree. A Conditional GAN, conditioned on the $(X, Y)$ stratum, then generates $28{\times}28$ coloured digit images. We use a deliberately small sample size of $N=1{,}000$: because the underlying causal structure is exact by construction, larger $N$ does not improve the approximation to $P^*$. The two variants differ only in the dependence of $X$ on $U$:
\begin{itemize}
    \item \textbf{Confounded}: $P(X{=}1 \mid U{=}0)=0.7$ and $P(X{=}1 \mid U{=}1)=0.3$, inducing the spurious correlations $P(Y{=}1 \mid X{=}0) = 0.609$ and $P(Y{=}1 \mid X{=}1) = 0.778$.
    \item \textbf{Unconfounded}: $X \perp\!\!\!\perp U$, yielding the ground-truth interventional probabilities $P_x(Y{=}1 \mid X{=}0)=0.60$ and $P_x(Y{=}1 \mid X{=}1)=0.40$.
\end{itemize}

\subsection{Confounded CelebA}\label{app:celeba}

CelebA \citep{liu2015celeba} is built by mapping five attributes to causal variables: $Z$ (\texttt{Male}), $U$ (\texttt{Attractive}, unobserved), $X$ (\texttt{Young}, treatment), $W$ (\texttt{Wearing\_Earrings}), and $Y$ (\texttt{Heavy\_Makeup}, outcome). The joint distribution is specified by $P(Z{=}1)=0.4$, $P(U{=}1 \mid Z{=}0)=0.60$, $P(U{=}1 \mid Z{=}1)=0.40$, and $P(W{=}1 \mid X, U)$ rising monotonically from $0.1$ at $(X,U)=(0,0)$ to $0.9$ at $(X,U)=(1,1)$. Images are then resampled from the original CelebA corpus to match the target joint distribution over $(Z, X, W)$ with $U$ marginalized out. The two variants differ only in the dependence of $X$ on $U$:
\begin{itemize}
    \item \textbf{Confounded} ($N=10{,}000$): $P(X{=}1 \mid U{=}0)=0.1$ and $P(X{=}1 \mid U{=}1)=0.9$.
    \item \textbf{Unconfounded} ($N=10{,}000$): $P(X{=}1)=0.5$ regardless of $U$.
\end{itemize}

\subsection{Confounded MIMIC-CXR-JPG}\label{app:mimic}

MIMIC-CXR-JPG \citep{johnson2024mimiccxr,johnson2019mimic,PhysioNet} is built from three causal variables: $X$ (\texttt{Pneumonia}, treatment), $Y$ (\texttt{Lung Opacity}, outcome), and $U$ (unobserved confounder), where $U=1$ if the patient is aged $\geq 60$ or the radiograph was acquired in the anteroposterior (AP) view. This composite confounder reflects the clinical reality that older or more severely ill patients are more likely both to receive AP acquisitions and to present with pneumonia and pulmonary opacities. The dataset is partitioned into four atomic pools by $(X, Y)$ stratum, from which samples are drawn to match the target distribution; $P(U{=}1)=0.7$ is held fixed across both variants. The two variants differ only in the dependence of $X$ on $U$:
\begin{itemize}
    \item \textbf{Confounded} ($N=20{,}000$): $P(X{=}1 \mid U{=}1)=0.9$ and $P(X{=}1 \mid U{=}0)=0.1$.
    \item \textbf{Unconfounded} ($N=20{,}000$): $P(X{=}1 \mid U)=0.5$ regardless of $U$.
\end{itemize}

\subsection{Training Details and Hyperparameters}\label{app:hyperparameters}

\textbf{Optimization.} All models are trained with the AdamW optimizer~\citep{loshchilov2019decoupledweightdecayregularization} and weight decay $1{\times}10^{-5}$. Color-MNIST and CelebA experiments run on a single NVIDIA RTX 3080 Ti (16GB VRAM); MIMIC-CXR-JPG experiments run on a single NVIDIA A100 (80GB VRAM) on an internal cluster.

\textbf{Architectural scaling.} To keep the architecture consistent across resolutions, we expose two scaling factors. The base channel capacity $h$ controls the width of the first convolutional layer; deeper stages compute their channel dimensions as multiples of $h$ (i.e.\ $h, 2h, 4h, \ldots$), so the model's representational capacity scales with image complexity. The latent dimension $d_z$ defines the size of the global latent $\mathbf{z}$ and is increased on higher-resolution datasets to capture finer visual structure.

\textbf{Dataset-specific configurations.} Table~\ref{tab:hyperparameters} reports the per-dataset configuration used in our experiments. The reconstruction, KL, and SCM loss weights ($\alpha$, $\beta$, $\lambda$) are calibrated against the input resolution to maintain stable convergence and avoid posterior collapse; the cluster count $K=8$ is held fixed across datasets, with extra clusters absorbed by the prior $P(C)$ rather than overfit. The entropy weight $\gamma$ is swept over a per-dataset grid, producing the family of generators reported in the main text.

\begin{table}[h]
    \centering
    \caption{Dataset-specific hyperparameters for \cauvade{}.}
    \label{tab:hyperparameters}
    \begin{tabular}{lccc}
        \toprule
        \textbf{Hyperparameter} & \textbf{Colored MNIST} & \textbf{CelebA} & \textbf{MIMIC-CXR} \\
        \midrule
        Input Resolution & $28 \times 28$ & $64 \times 64$ & $224 \times 224$ \\
        Total Epochs & 400 & 800 & 800 \\
        Batch Size & 128 & 64 & 128 \\
        Learning Rate & $1 \times 10^{-4}$ & $1 \times 10^{-4}$ & $1 \times 10^{-4}$ \\
        Base Channel Capacity ($h$) & 8 & 64 & 32 \\
        Latent Dimension ($d_z$) & 64 & 512 & 512 \\
        Number of Clusters ($K$) & 8 & 8 & 8 \\
        \midrule
        Recon. Weight ($\alpha$) & 10.0 & 15.0 & 15.0 \\
        KL Weight ($\beta$) & 0.1 & 0.05 & 0.05 \\
        SCM Weight ($\lambda$) & 5.0 & 3.0 & 3.0 \\
        Entropy Weight ($\gamma$) & $\{0, 1, 2, 10, 20, 50, 100\}$ & \multicolumn{2}{c}{$\{0, 1, 10, 100\}$} \\
        \bottomrule
    \end{tabular}
\end{table}

\subsection{Additional Confounded Color-MNIST Samples}

Figure~\ref{fig:_app_cmnist} shows representative \cauvade{} samples on Confounded Color-MNIST as $\gamma$ varies across the sweep, illustrating the visual fidelity of the model at $28{\times}28$ resolution and how the entropy regularizer reshapes the cluster posterior at each setting.

\begin{figure*}[t]
\hfill
    \begin{subfigure}{0.33\linewidth}\centering
        \setlength{\abovecaptionskip}{0pt}
        \includegraphics[width=\linewidth]{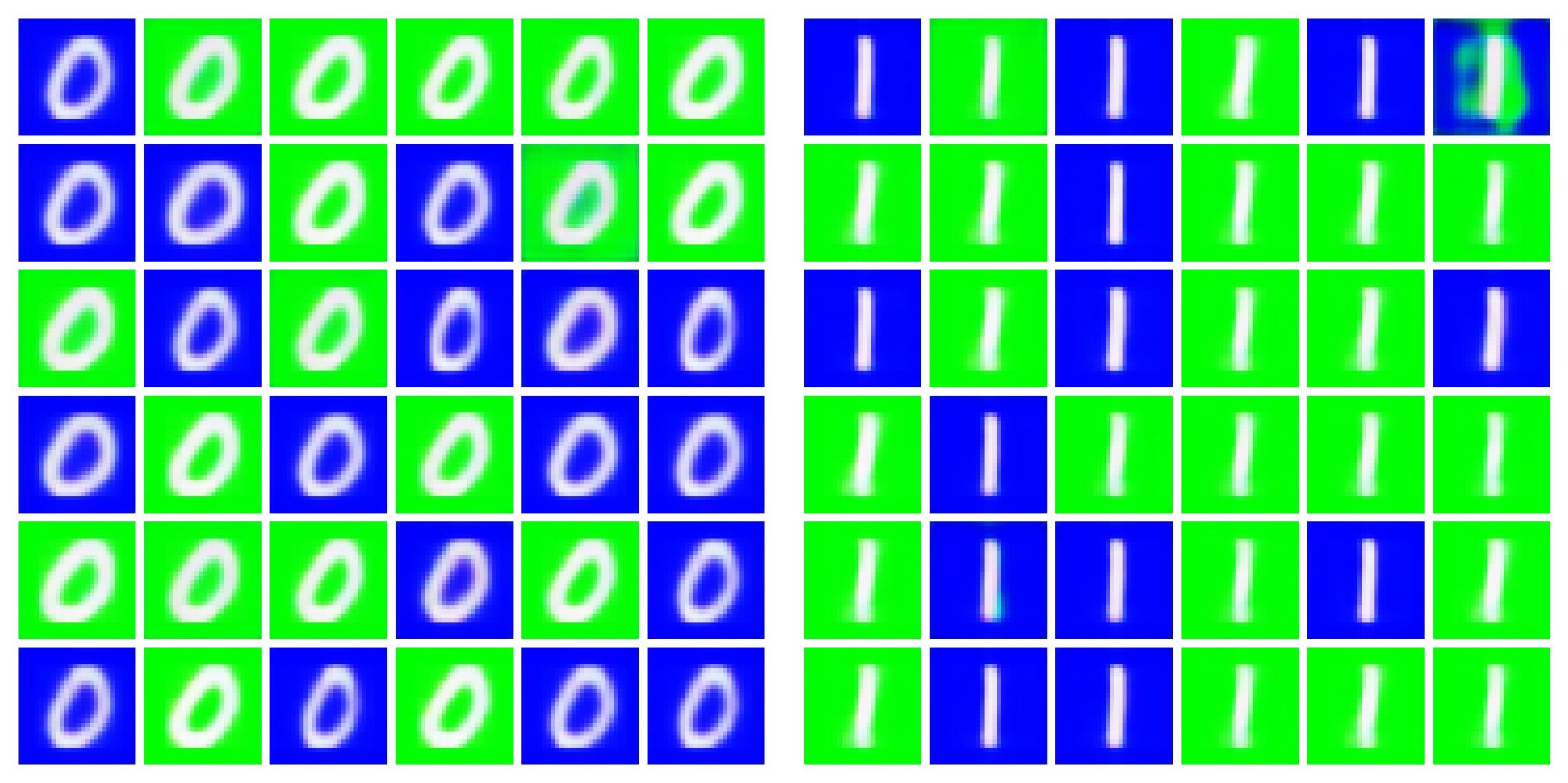}
        \caption{}
        \label{fig:_app_cmnist_a}
    \end{subfigure}\hfill
    \begin{subfigure}{0.33\linewidth}\centering
        \setlength{\abovecaptionskip}{0pt}
        \includegraphics[width=\linewidth]{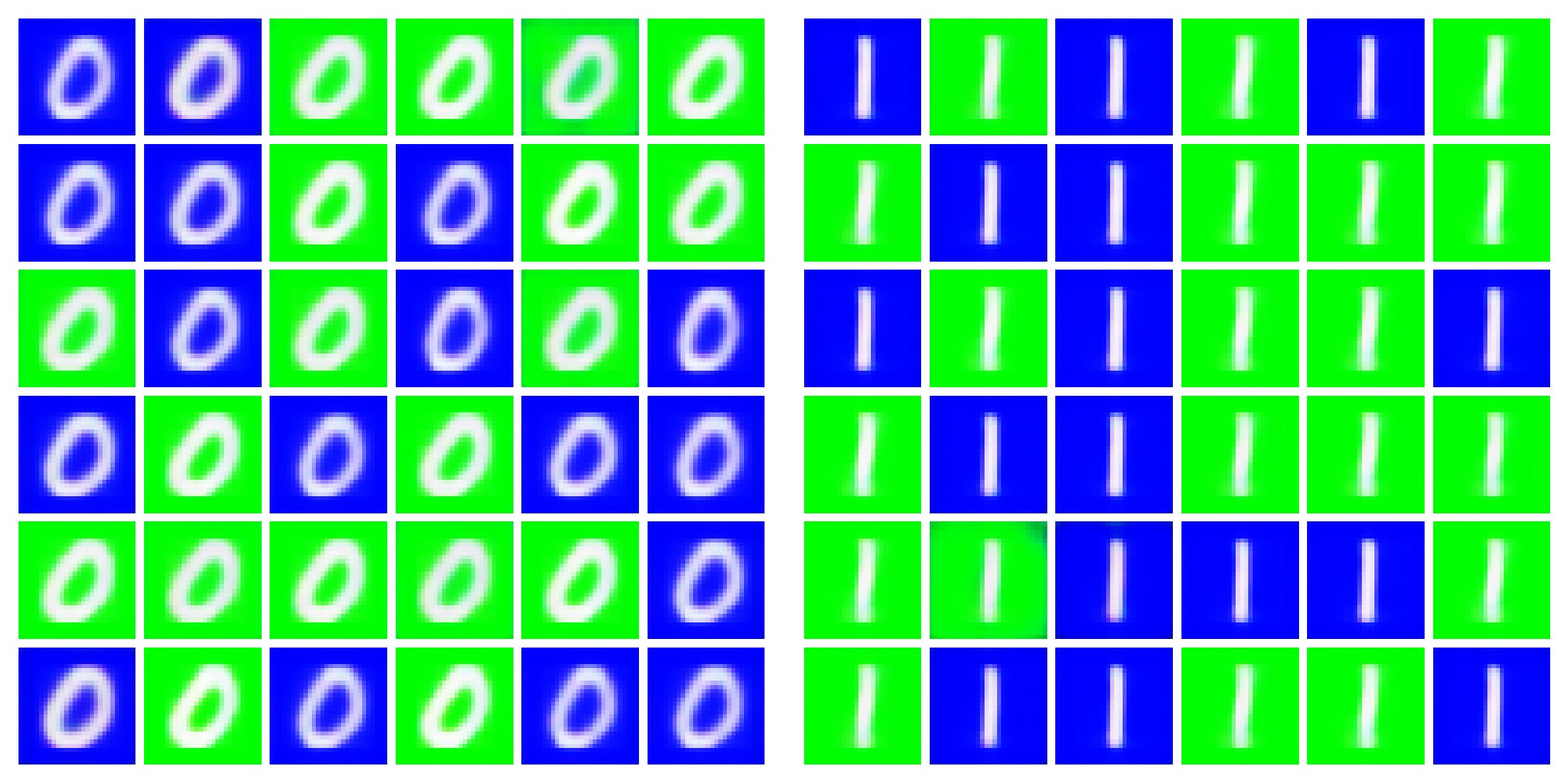}
        \caption{}
        \label{fig:_app_cmnist_b}
    \end{subfigure}\hfill
    \begin{subfigure}{0.33\linewidth}\centering
        \setlength{\abovecaptionskip}{0pt}
        \includegraphics[width=\linewidth]{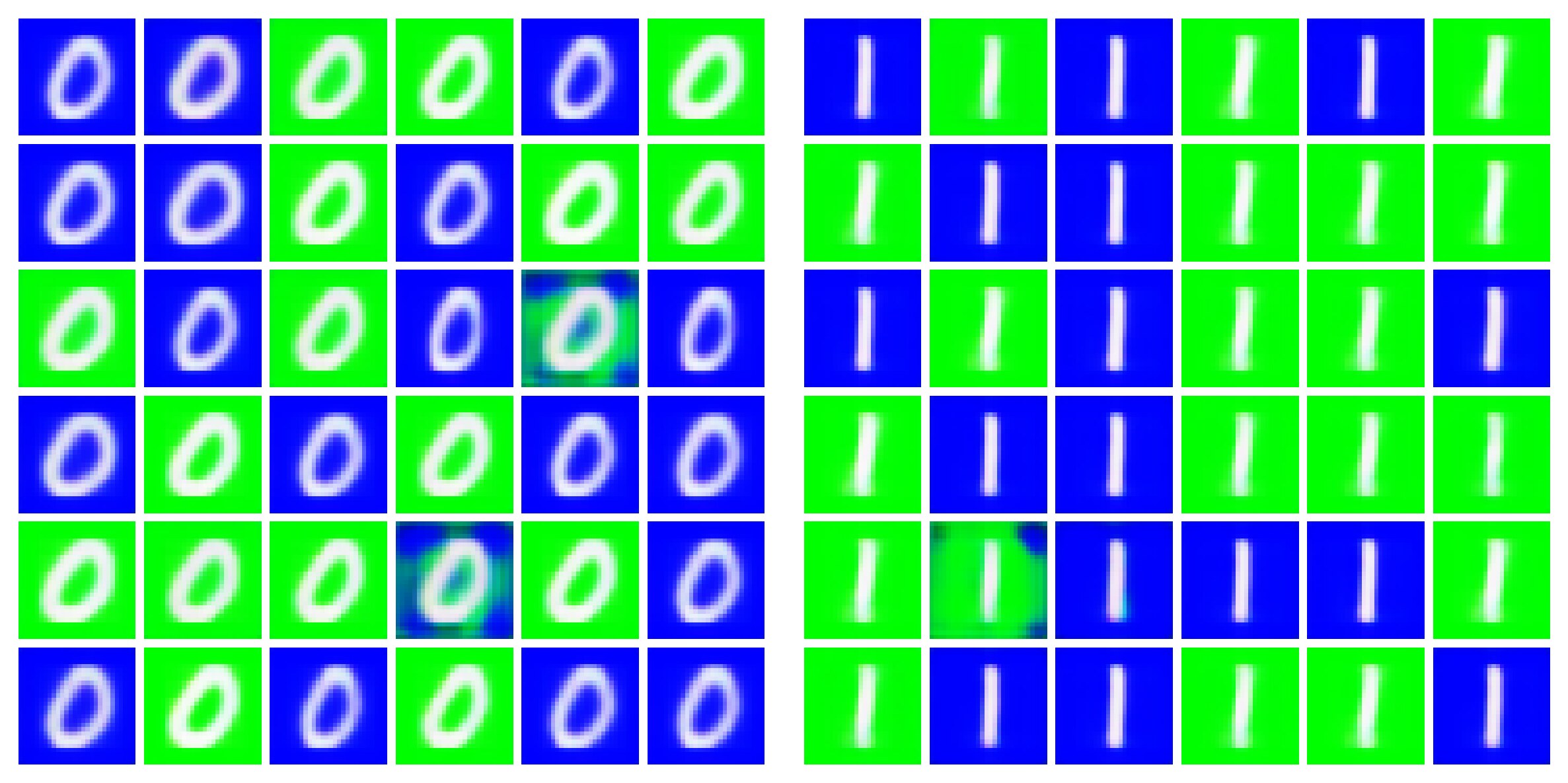}
        \caption{}
        \label{fig:_app_cmnist_c}
    \end{subfigure}\hfill

    \vspace{0.5em}

    \hfill
    \begin{subfigure}{0.33\linewidth}\centering
        \setlength{\abovecaptionskip}{0pt}
        \includegraphics[width=\linewidth]{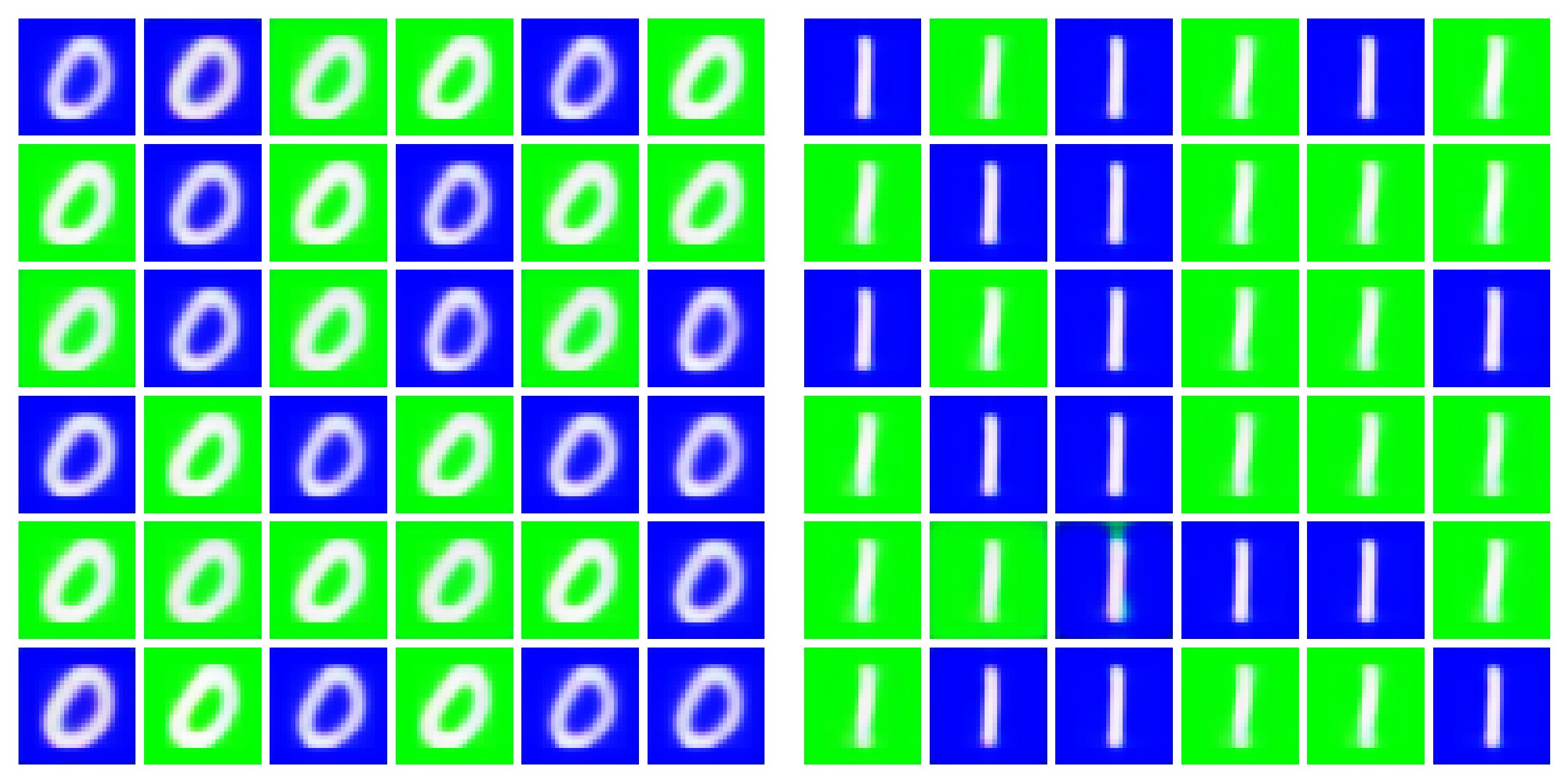}
        \caption{}
        \label{fig:_app_cmnist_d}
    \end{subfigure}\hfill
    \begin{subfigure}{0.33\linewidth}\centering
        \setlength{\abovecaptionskip}{0pt}
        \includegraphics[width=\linewidth]{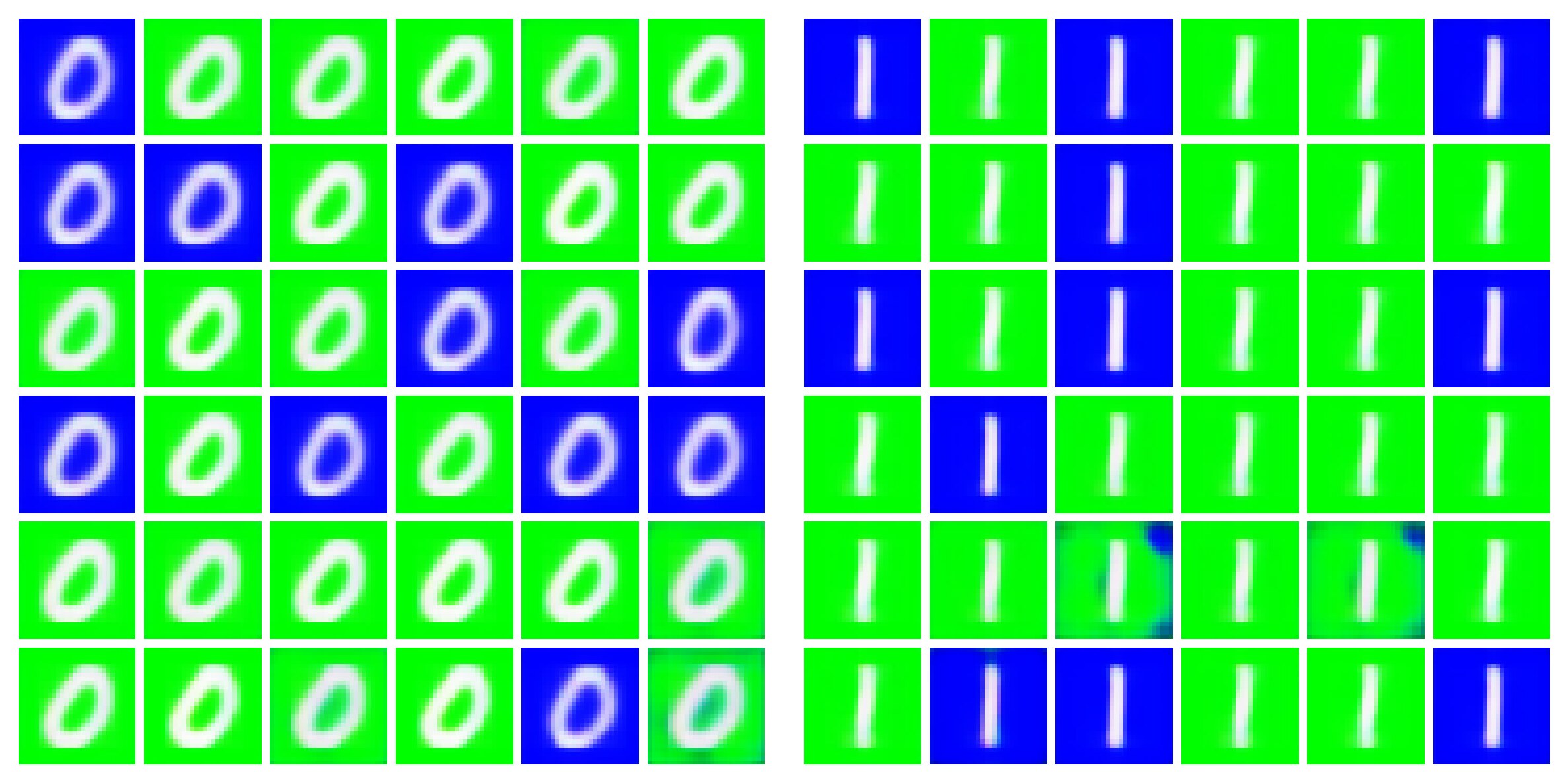}
        \caption{}
        \label{fig:_app_cmnist_e}
    \end{subfigure}\hfill
    \begin{subfigure}{0.33\linewidth}\centering
        \setlength{\abovecaptionskip}{0pt}
        \includegraphics[width=\linewidth]{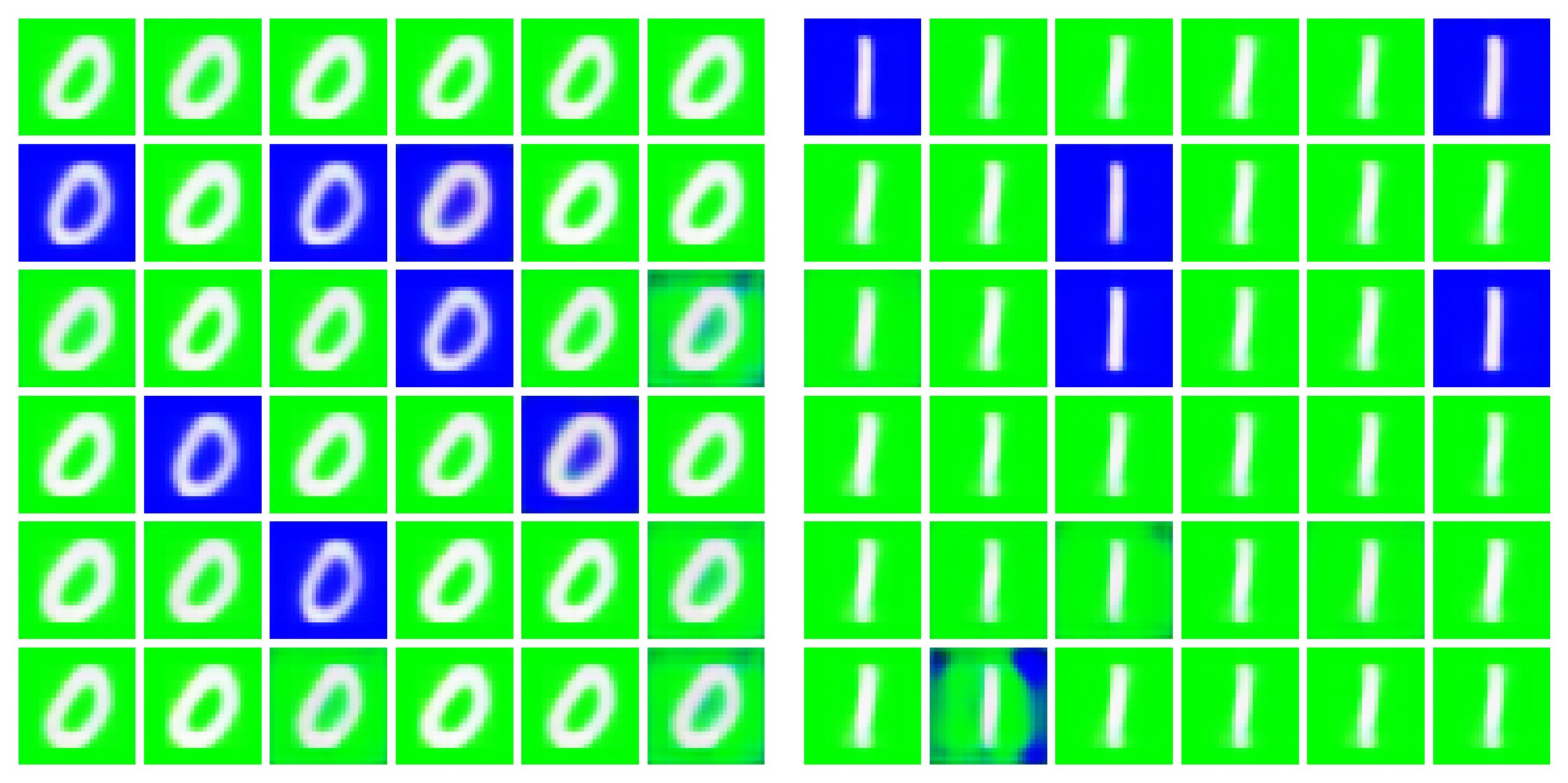}
        \caption{}
        \label{fig:_app_cmnist_f}
    \end{subfigure}\hfill

    \caption{Generated samples from \cauvade{} on Colored MNIST across varying constraint strengths:
    (\subref{fig:_app_cmnist_a}) $\gamma=1$;
    (\subref{fig:_app_cmnist_b}) $\gamma=2$;
    (\subref{fig:_app_cmnist_c}) $\gamma=10$;
    (\subref{fig:_app_cmnist_d}) $\gamma=20$;
    (\subref{fig:_app_cmnist_e}) $\gamma=50$;
    and (\subref{fig:_app_cmnist_f}) $\gamma=100$.}
    \label{fig:_app_cmnist}
\end{figure*}

\subsection{Additional Confounded CelebA Samples}

Figure~\ref{fig:_app_celeba} compares samples from the confounded training set, the unconfounded ground truth, two baselines (VAE and ANCM), and \cauvade{} at four values of $\gamma$, illustrating the visual fidelity of the model at $64{\times}64$ resolution and the diversity of generators \cauvade{} produces across the sweep.

\begin{figure*}[t]
\centering
\hfill
    \begin{subfigure}{0.48\linewidth}\centering
        \setlength{\abovecaptionskip}{0pt}
        \includegraphics[width=\linewidth]{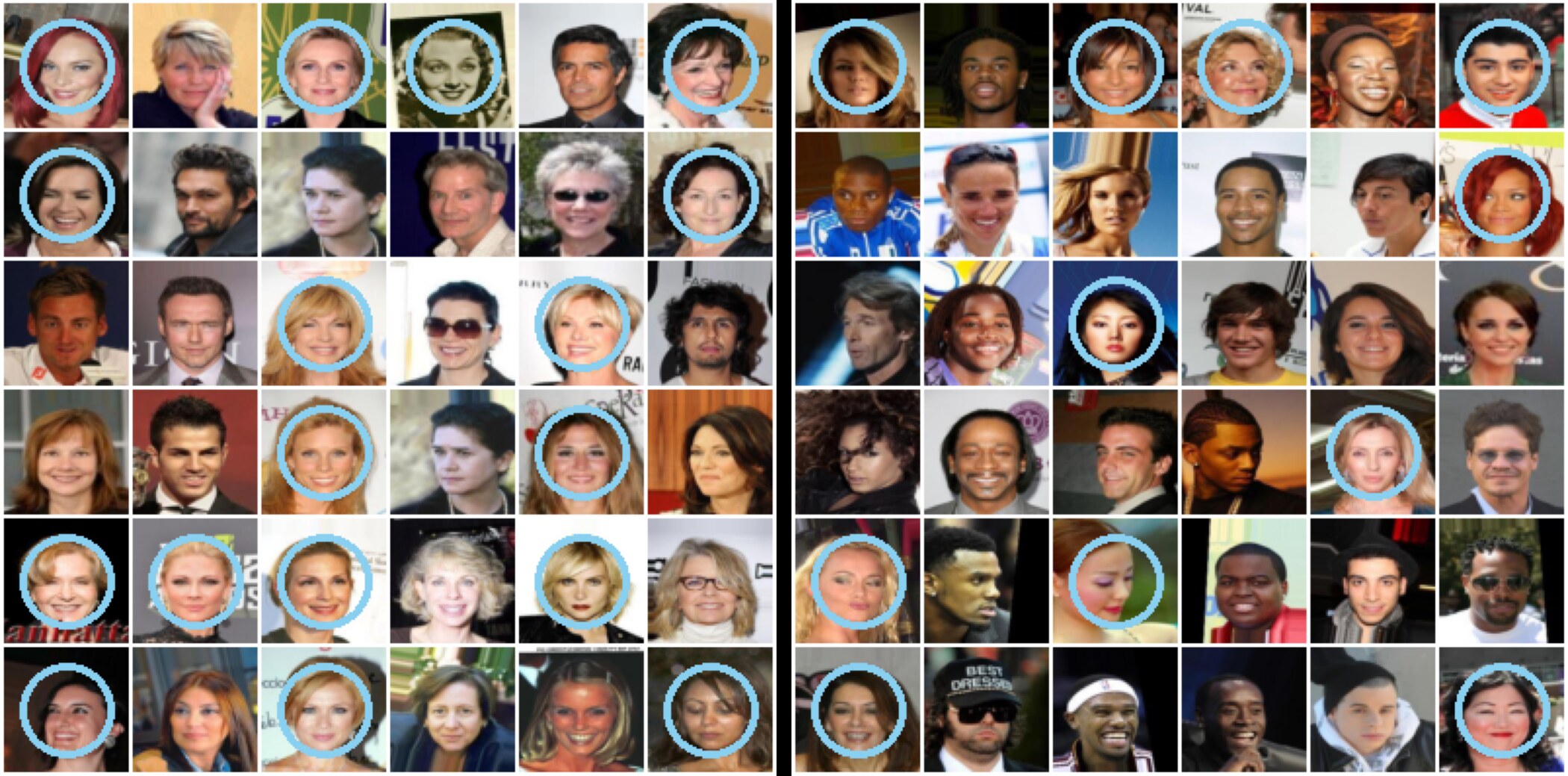}
        \caption{}
        \label{fig:_app_celeb_confound}
    \end{subfigure}\hfill
    \begin{subfigure}{0.48\linewidth}\centering
        \setlength{\abovecaptionskip}{0pt}
        \includegraphics[width=\linewidth]{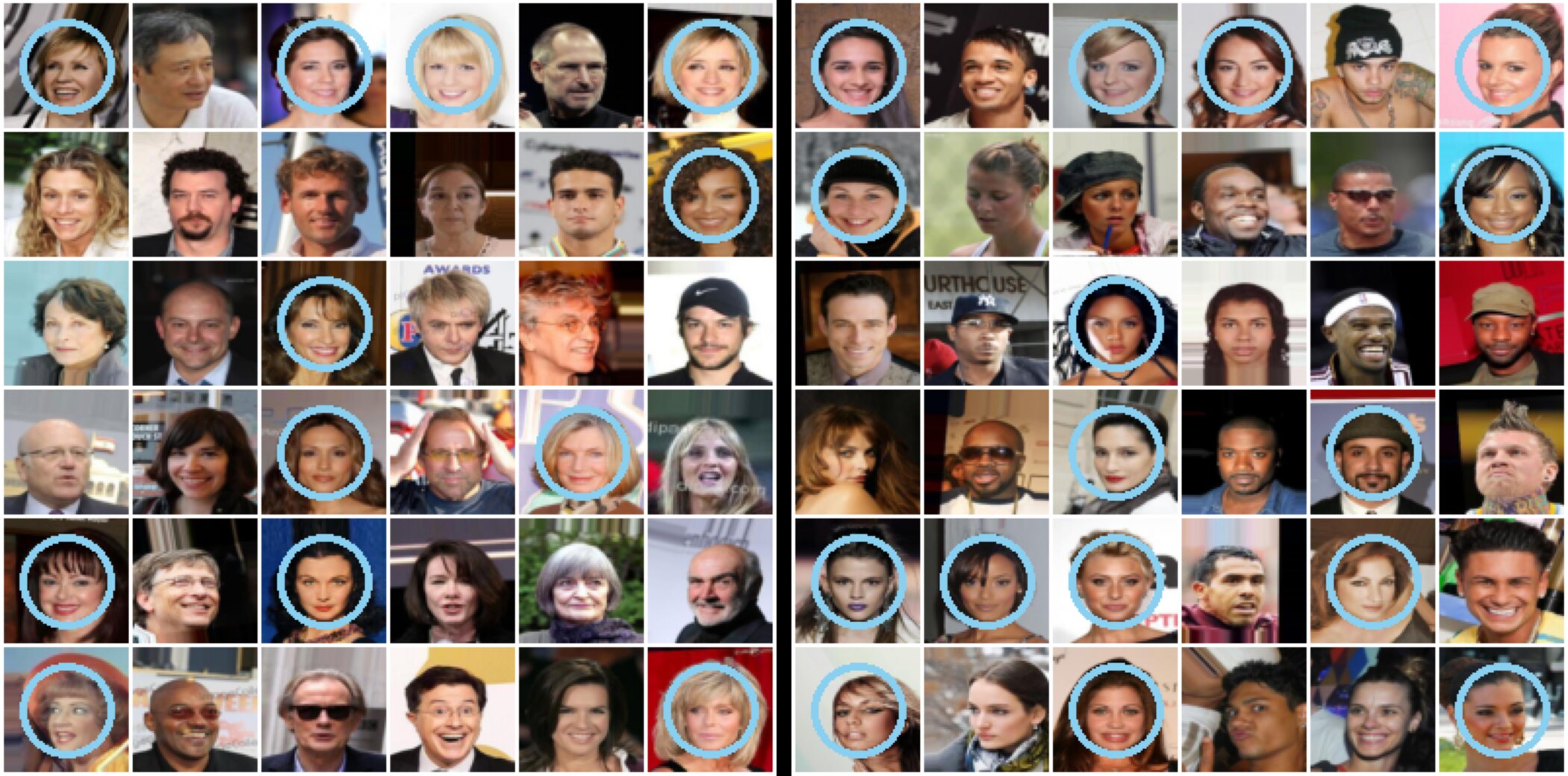}
        \caption{}
        \label{fig:_app_celeb_unconfound}
    \end{subfigure}\hfill\null

    \hfill
    \begin{subfigure}{0.48\linewidth}\centering
        \setlength{\abovecaptionskip}{0pt}
        \includegraphics[width=\linewidth]{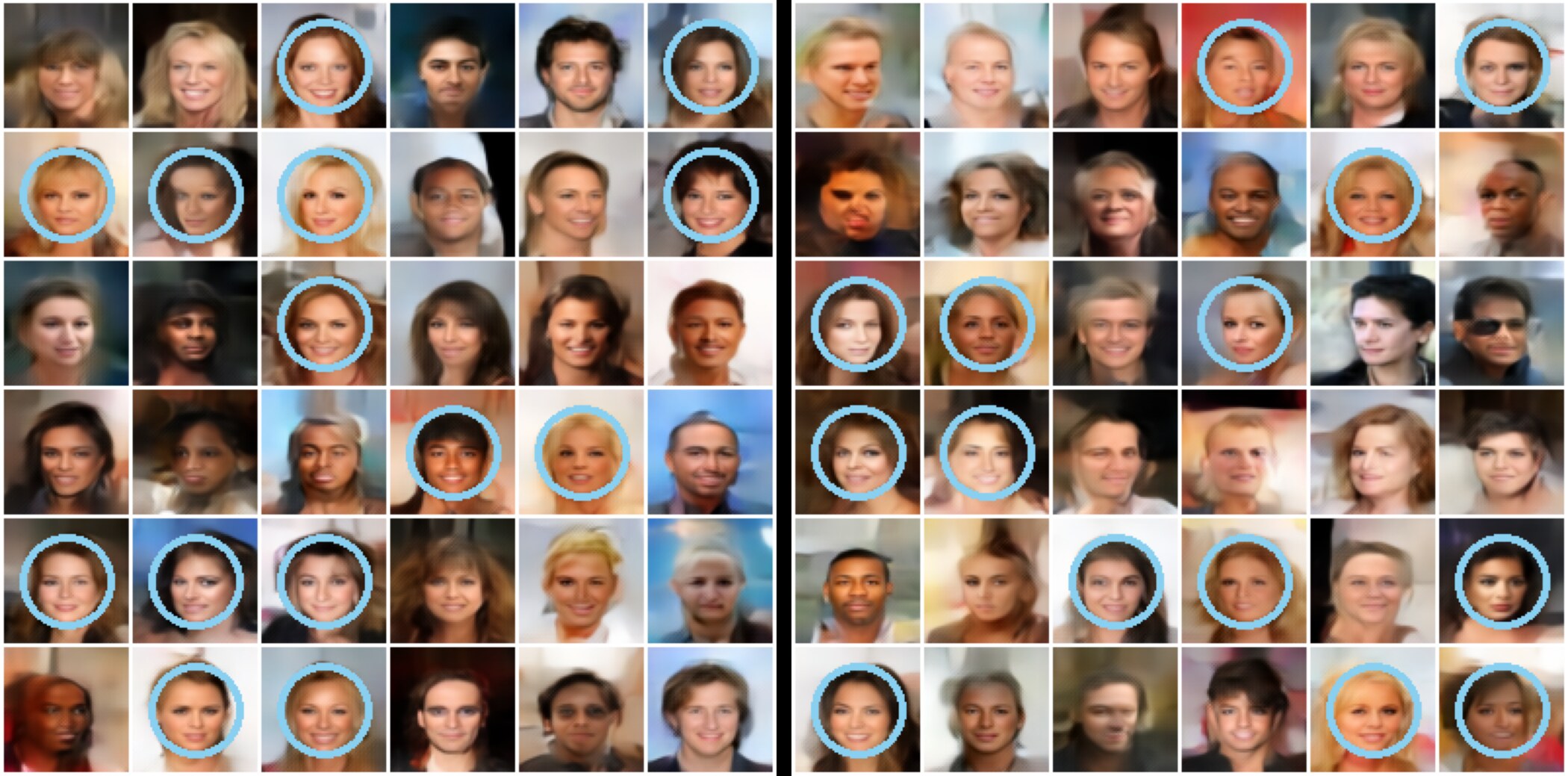}
        \caption{}
        \label{fig:_app_celeb_vae}
    \end{subfigure}\hfill
    \begin{subfigure}{0.48\linewidth}\centering
        \setlength{\abovecaptionskip}{0pt}
        \includegraphics[width=\linewidth]{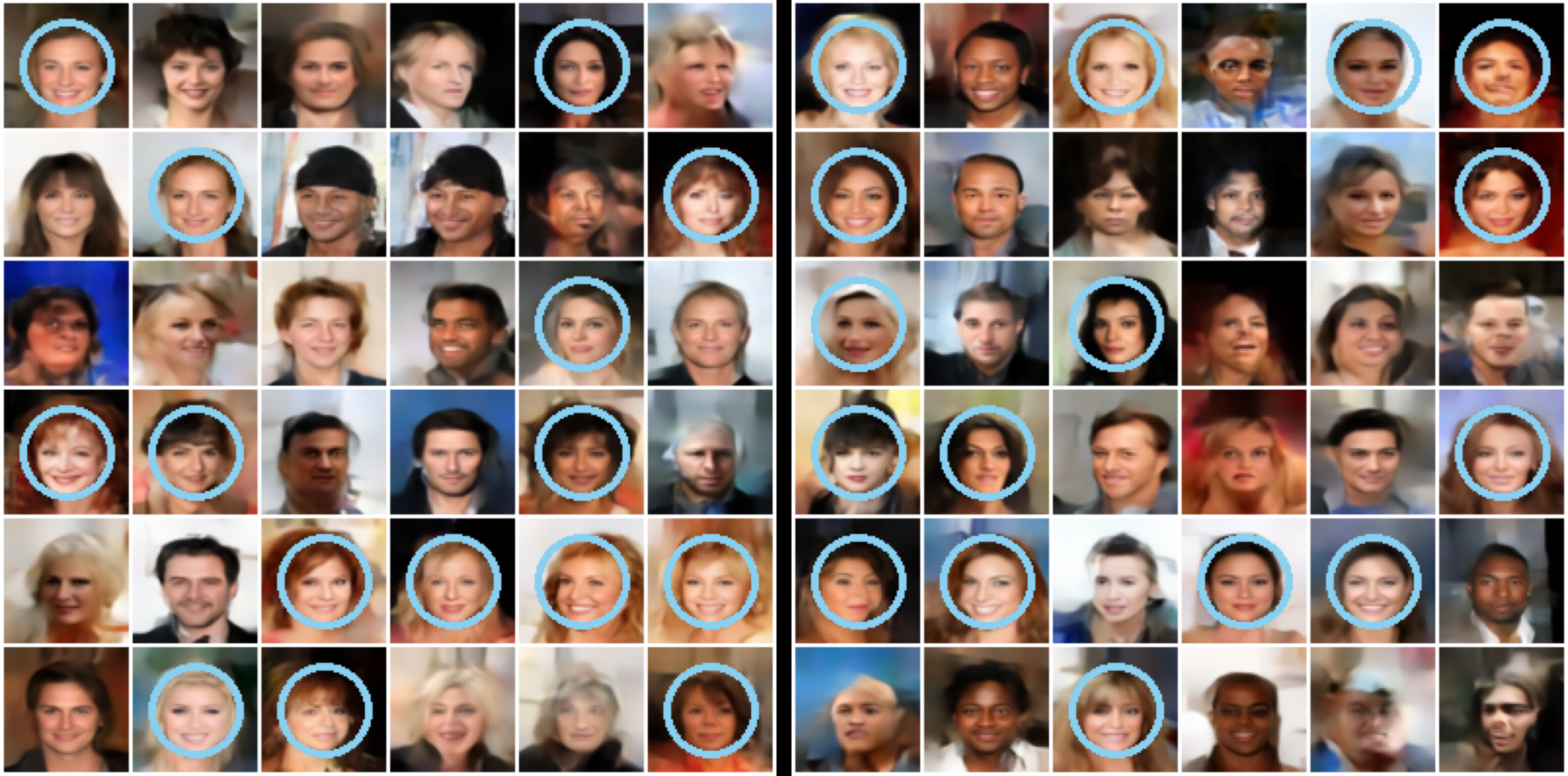}
        \caption{}
        \label{fig:_app_celeb_ancm}
    \end{subfigure}\hfill\null

    \hfill
    \begin{subfigure}{0.48\linewidth}\centering
        \setlength{\abovecaptionskip}{0pt}
        \includegraphics[width=\linewidth]{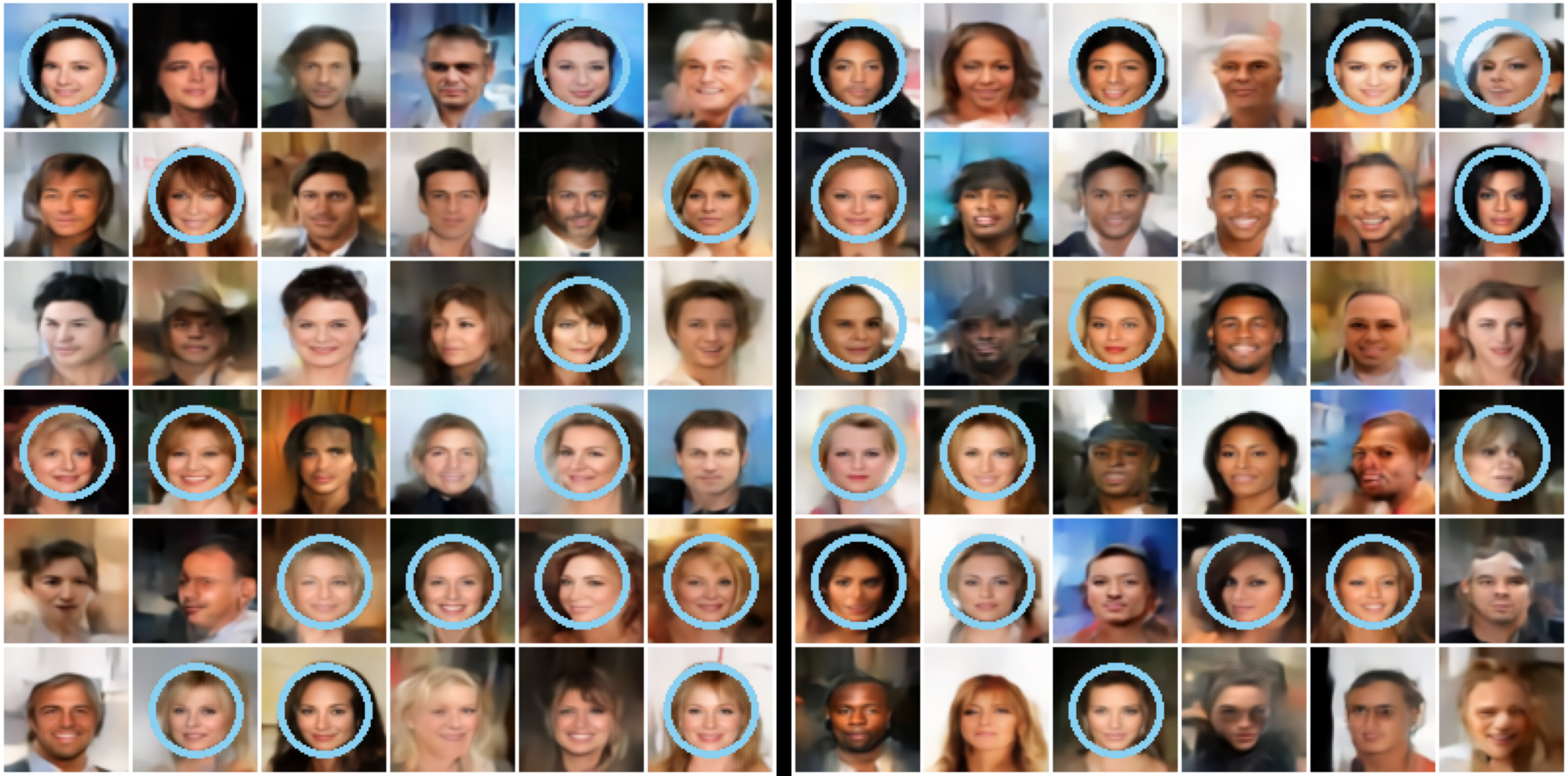}
        \caption{}
        \label{fig:_app_celeb_gamma0}
    \end{subfigure}\hfill
    \begin{subfigure}{0.48\linewidth}\centering
        \setlength{\abovecaptionskip}{0pt}
        \includegraphics[width=\linewidth]{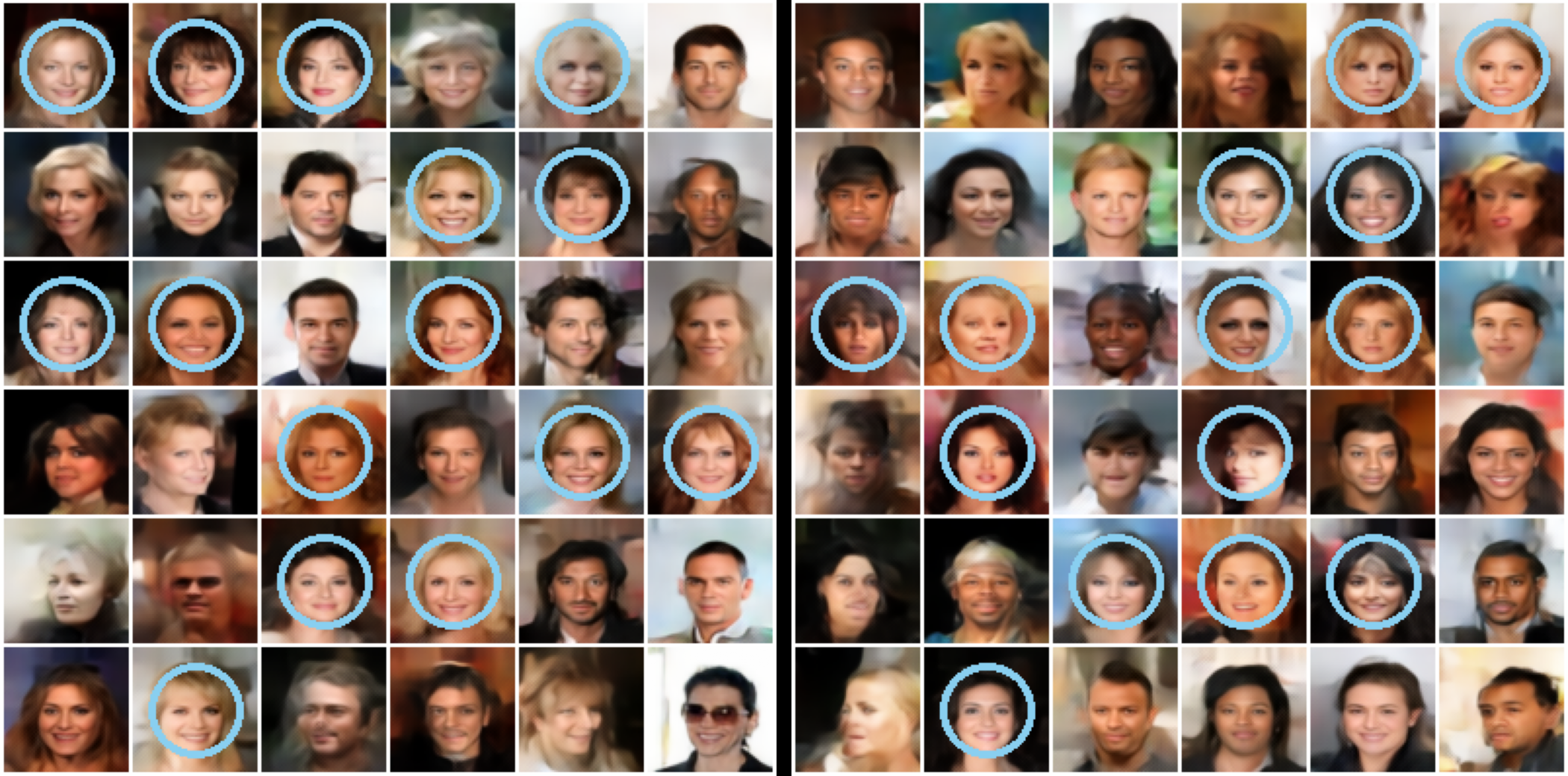}
        \caption{}
        \label{fig:_app_celeb_gamma1}
    \end{subfigure}\hfill\null

    \hfill
    \begin{subfigure}{0.48\linewidth}\centering
        \setlength{\abovecaptionskip}{0pt}
        \includegraphics[width=\linewidth]{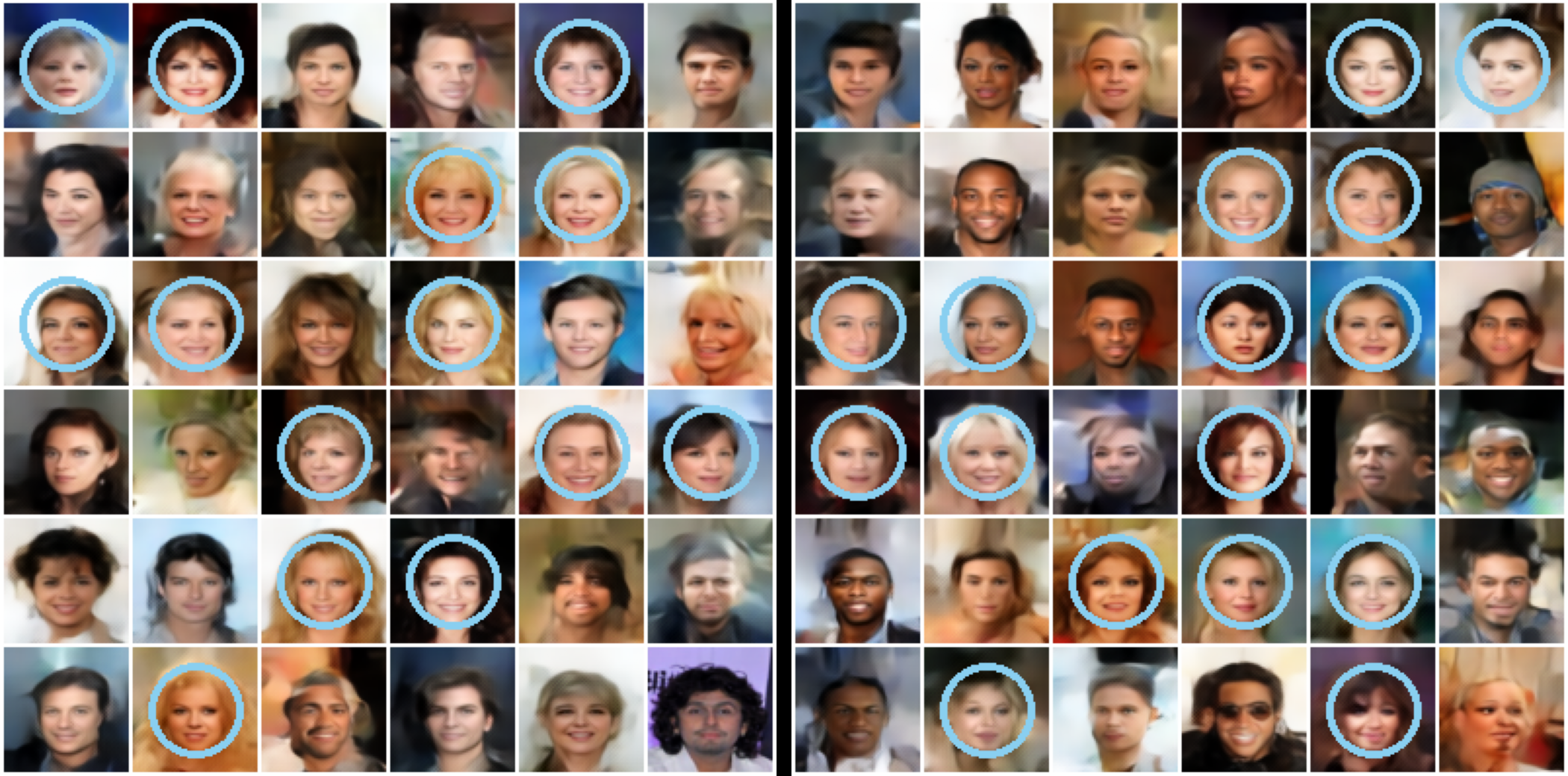}
        \caption{}
        \label{fig:_app_celeb_gamma10}
    \end{subfigure}\hfill
    \begin{subfigure}{0.48\linewidth}\centering
        \setlength{\abovecaptionskip}{0pt}
        \includegraphics[width=\linewidth]{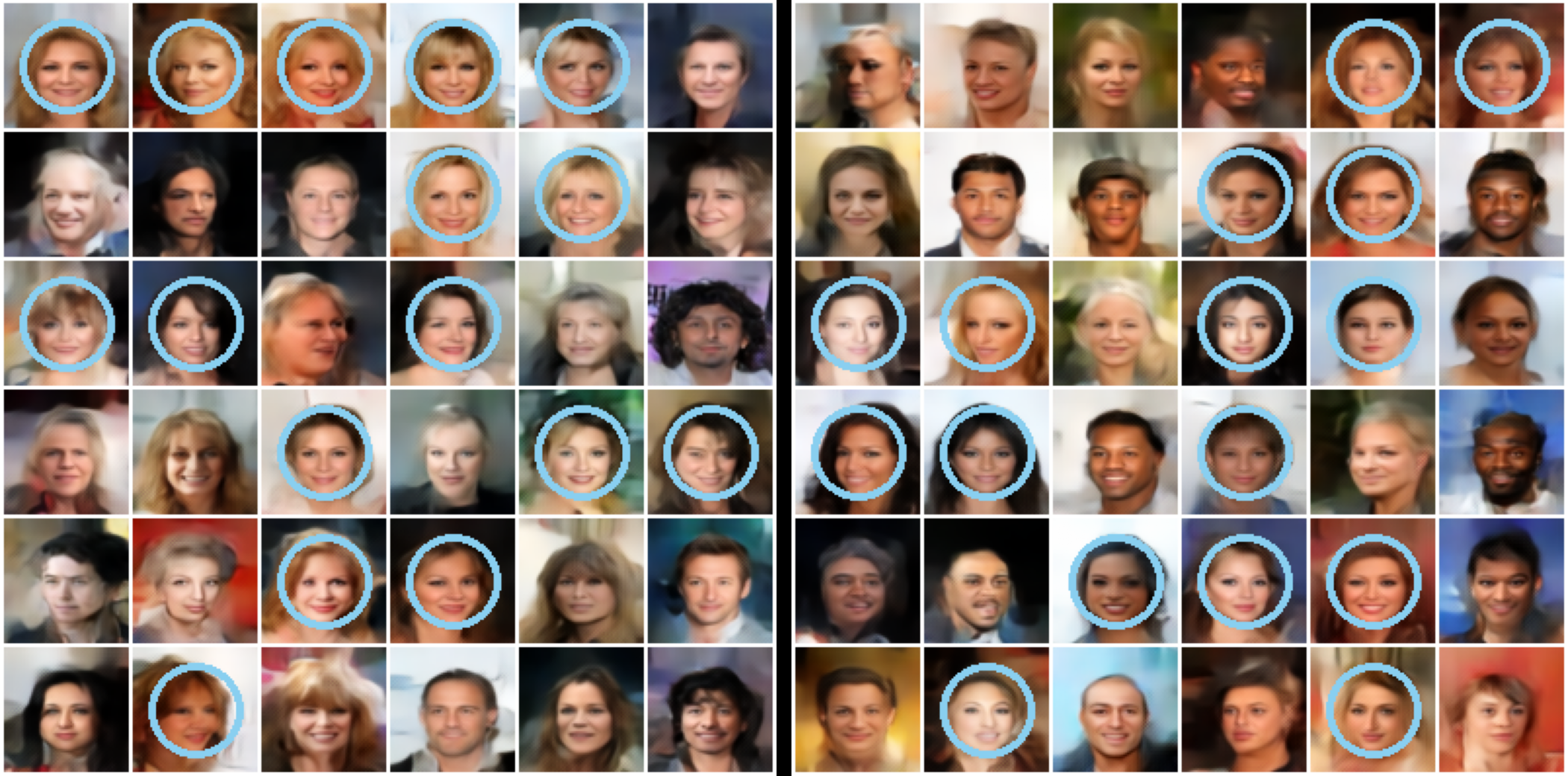}
        \caption{}
        \label{fig:_app_celeb_gamma100}
    \end{subfigure}\hfill\null

    \caption{Visualizing data distributions on CelebA. Samples are drawn from:
    (\subref{fig:_app_celeb_confound}) the confounded training set;
    (\subref{fig:_app_celeb_unconfound}) the unconfounded ground truth;
    (\subref{fig:_app_celeb_vae}) synthetic outputs from VAE;
    (\subref{fig:_app_celeb_ancm}) synthetic outputs from ANCM;
    (\subref{fig:_app_celeb_gamma0}) synthetic outputs from \cauvade{} with $\gamma{=}0$;
    (\subref{fig:_app_celeb_gamma1}) synthetic outputs from \cauvade{} with $\gamma{=}1$;
    (\subref{fig:_app_celeb_gamma10}) synthetic outputs from \cauvade{} with $\gamma{=}10$;
    and (\subref{fig:_app_celeb_gamma100}) synthetic outputs from \cauvade{} with $\gamma{=}100$.}
    \label{fig:_app_celeba}
\end{figure*}

\subsection{Additional Confounded MIMIC-CXR-JPG Samples}

Figure~\ref{fig:_app_mimic_samples} shows \cauvade{} samples under both interventions, $P_{X=0}(I)$ and $P_{X=1}(I)$, illustrating the visual fidelity of the model at $224{\times}224$ resolution.

\begin{figure}[t]
    \centering
    \includegraphics[width=\linewidth]{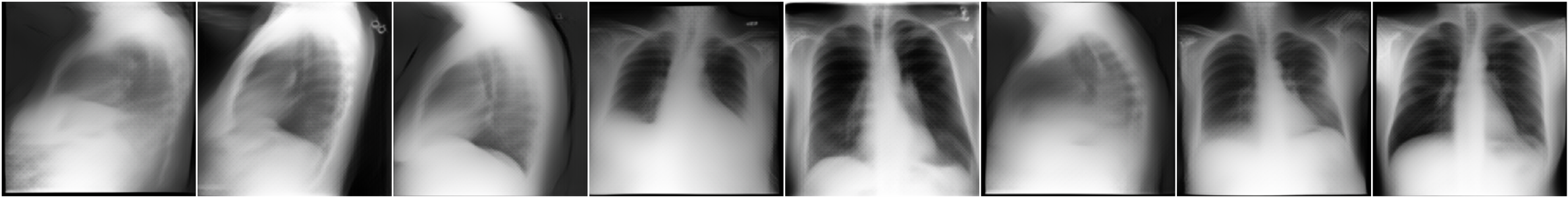}
    \includegraphics[width=\linewidth]{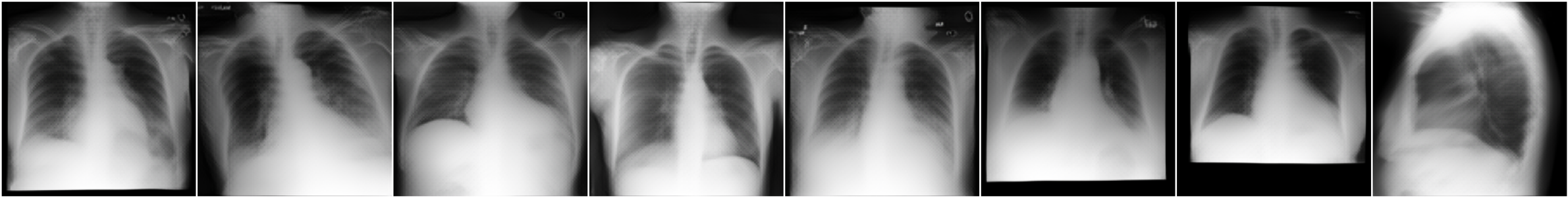}
    \caption{Generated samples from \cauvade{} under interventions $P_{X=0}(I)$ (top) and $P_{X=1}(I)$ (bottom).}
    \label{fig:_app_mimic_samples}
\end{figure}
\section{Broader Impact}\label{app:broader_impact}

\cauvade{} addresses a structural failure mode of generative modeling under hidden confounding: when the data does not point-identify the interventional distribution, standard models silently commit to a single causal explanation, often the one that reproduces the spurious correlation in the training distribution. The downstream consequences are most acute in safety-critical settings. In medical imaging, a generator that conflates the position-induced distortion of a chest radiograph with the disease itself would, under intervention, produce samples that overstate the disease--imaging association---potentially propagating diagnostic bias into any downstream model trained on its output, audit, or counterfactual explanation. \cauvade{} mitigates this by exposing the feasible region for partial identification rather than collapsing onto a single explanation, giving practitioners a tool to inspect the family of causal stories compatible with their data and to flag when that family is wide enough to warrant collecting additional structural information. We see this as the more honest default for generative modeling in domains where the unobserved confounders are unlikely to be either fully documented or eliminable.

The risks of \cauvade{} are largely those of generative modeling in general rather than ones introduced by the partial-identification framing. Like any generator trained on face images or chest radiographs, \cauvade{} could be misused to produce synthetic media---photorealistic faces under counterfactual demographic edits, or chest X-rays under counterfactual disease labels. The ability to sample under intervention does not, by itself, make this misuse easier or harder than for an unconfounded conditional generator; the substantive risk is in the underlying decoder family rather than the causal mechanism on top. We do not release pretrained models. A more specific concern is that the diversity of \cauvade{}'s output could be misread as a license to cherry-pick: a user with a preferred causal explanation could select the value of $\gamma$ whose interventional law most flatters their hypothesis. The interpretation we intend is the opposite---the family is the answer, and any single point inside it is overconfident relative to what the data supports. We discuss this calibration gap and other limitations in Appendix~\ref{app:limitations}.

\section{Limitations}\label{app:limitations}

We state plainly the aspects of \cauvade{} that the present work does not resolve.

\paragraph{Coverage is asymptotic.} Thm.~\ref{thm:approx} and Prop.~\ref{prop:coverage} guarantee approximation only in the limit of cluster count $K$ and decoder capacity, with no concrete prescription for choosing $K$ from data; we treat it as a hyperparameter. A finite-sample bound on the gap between the realizable family and the partial-identification region---as a function of $K$, decoder size, sample size, and the geometry of the diagram---would convert our coverage guarantee from a density theorem into an operational one, and is left to future work.

\paragraph{The $\gamma$-sweep is uncalibrated.} Sweeping $\gamma$ traces the partial-identification region in the closure, but no individual $\gamma$ corresponds to a calibrated point---a sharp upper or lower bound on $P_x(I)$, a Manski endpoint, or a quantile of the feasible set. The sweep is a generative analogue of sensitivity analysis, not a procedure for recovering bounds derivable from prior knowledge. Connecting specific values of $\gamma$ to calibrated bounds, when such bounds exist (e.g., for binary outcomes), would let practitioners read off interventional uncertainty intervals directly from the sweep.

\paragraph{Joint-optimization theory is missing.} Our theoretical results assume access to optima of the regularized objective. In practice, each $\gamma$ requires a separate training run, and we do not characterize the convergence behavior or basin geometry of the joint optimization over encoder, decoder, and classifier-head parameters. Practical robustness across random seeds is verified empirically (App.~\ref{app:setup}), but a guarantee that the trained model lies near the optimum its $\gamma$ is supposed to select is currently absent.

\paragraph{Labeling is conservative.} \cauvade{} uses only labels of the treatment $X$; the post-treatment attribute $Y$ and pre-treatment covariate $Z$ are recovered unsupervised. This is intentional---it matches realistic image-domain settings where exhaustive attribute annotation is impractical---but discards information that may be available in some applications. Extending the framework to incorporate partial $Y$ or $Z$ labels, weak attribute predictors trained on auxiliary data, or pretrained vision encoders would tighten the recovered family and is a natural next step.

\paragraph{Diagram is fixed.} We assume the causal diagram in Fig.~\ref{fig:ascm_obs} is given. Misspecification---e.g., a covariate that is post-treatment rather than pre-treatment, or a missing latent pathway---would invalidate the partial-identification guarantee, since the feasible region is defined relative to the assumed diagram. Combining \cauvade{} with structure-discovery methods, or extending the density theorem to a class of compatible diagrams, would relax this assumption.

\paragraph{Empirical scope is limited.} Our evaluation covers one synthetic dataset (Color-MNIST), one face dataset (CelebA), and one medical dataset (MIMIC-CXR-JPG), each with a single binary or categorical treatment. We do not study large-scale natural images, multiple simultaneous interventions, continuous treatments, or settings where the confounder is partially observed. Whether the structural-ensemble behavior reported here extrapolates to those regimes is an empirical question the present evaluation cannot settle.

\end{document}